\documentclass{article}
\pdfoutput=1  

\usepackage[preprint]{neurips_2026}

\usepackage[utf8]{inputenc}
\usepackage[T1]{fontenc}
\usepackage{hyperref}
\usepackage{url}
\usepackage{booktabs}
\usepackage{amsfonts}
\usepackage{amsmath}
\usepackage{amssymb}
\usepackage{amsthm}
\usepackage{mathtools}
\usepackage{bm}
\usepackage{nicefrac}
\usepackage{microtype}
\usepackage{xcolor}
\usepackage{graphicx}
\usepackage{overpic}
\usepackage{subcaption}
\usepackage{float}
\usepackage{multirow}
\usepackage{array}
\usepackage{wrapfig}
\usepackage{algorithm}
\usepackage{algpseudocode}
\usepackage{caption}       
\usepackage{enumitem}
\usepackage{xspace}
\usepackage{tikz}
\usetikzlibrary{shapes,arrows,arrows.meta,positioning,calc,fit,backgrounds,decorations.pathreplacing}
\usepackage{pgfplots}
\pgfplotsset{compat=1.18}
\usepgfplotslibrary{fillbetween}
\usepackage{tcolorbox}
\tcbuselibrary{theorems,skins,breakable}
\usepackage{cleveref}
\usepackage{listings}

\lstdefinelanguage{json}{
    basicstyle=\small\ttfamily,
    numbers=none,
    numberstyle=\tiny,
    stepnumber=1,
    numbersep=8pt,
    showstringspaces=false,
    breaklines=true,
    frame=single,
    backgroundcolor=\color{gray!5},
    rulecolor=\color{gray!30},
    string=[s]{"}{"},
    stringstyle=\color{blue!70!black},
    comment=[l]{//},
    commentstyle=\color{gray},
    morestring=[b]',
    literate=
        *{0}{{{\color{purple}0}}}{1}
        {1}{{{\color{purple}1}}}{1}
        {2}{{{\color{purple}2}}}{1}
        {3}{{{\color{purple}3}}}{1}
        {4}{{{\color{purple}4}}}{1}
        {5}{{{\color{purple}5}}}{1}
        {6}{{{\color{purple}6}}}{1}
        {7}{{{\color{purple}7}}}{1}
        {8}{{{\color{purple}8}}}{1}
        {9}{{{\color{purple}9}}}{1}
        {:}{{{\color{black}{:}}}}{1}
        {,}{{{\color{black}{,}}}}{1}
        {\{}{{{\color{black}{\{}}}}{1}
        {\}}{{{\color{black}{\}}}}}{1}
        {[}{{{\color{black}{[}}}}{1}
        {]}{{{\color{black}{]}}}}{1},
}

\pgfmathsetseed{20251222}

\DeclareMathOperator{\E}{\mathbb{E}}

\newcommand{\R}{\mathbb{R}}

\newcommand{\cD}{\mathcal{D}}
\newcommand{\cJ}{\mathcal{J}}

\newcommand{\cM}{\mathcal{M}}
\newcommand{\cI}{\mathcal{I}}
\newcommand{\cV}{\mathcal{V}}
\newcommand{\cS}{\mathcal{S}}
\newcommand{\cU}{\mathcal{U}}

\newcommand{\protoname}{PULSE\xspace}

\definecolor{lightblue}{RGB}{230,240,250}
\definecolor{lightgreen}{RGB}{230,250,230}
\definecolor{lightyellow}{RGB}{255,250,230}

\definecolor{covenantred}{HTML}{FF3A3A}
\definecolor{covenantgreen}{HTML}{2ECC71}
\definecolor{covenantblack1000}{HTML}{101010}
\definecolor{covenantblack900}{HTML}{171717}
\definecolor{covenantblack800}{HTML}{2F2F2F}
\definecolor{covenantblack500}{HTML}{828282}
\definecolor{covenantwhite0}{HTML}{F4F4F4}
\definecolor{covenantwhite100}{HTML}{EFEFEF}
\definecolor{covenantwhite100alt}{HTML}{DDDDDD}

\newtheorem{theorem}{Theorem}[section]

\newtheorem{proposition}[theorem]{Proposition}
\newtheorem{corollary}[theorem]{Corollary}
\theoremstyle{definition}
\newtheorem{definition}[theorem]{Definition}

\theoremstyle{remark}

\title{Understanding and Exploiting Weight Update Sparsity for Communication-Efficient Distributed RL}

\author{%
  Erfan Miahi\thanks{Correspondence to: \texttt{erfan@covenant.ai}} \\
  Covenant AI\\
  \And
  Eugene Belilovsky \\
  Mila, Concordia University \\
}

\begin{document}

\maketitle

\begin{abstract}
Bandwidth-constrained distributed reinforcement learning (RL) post-training of large language models is bottlenecked by two channels: weight synchronization from trainers to inference workers, and gradient or pseudo-gradient synchronization across trainers. We find that approximately $\mathbf{99\%}$ of per-step weight updates are invisible after the BF16 cast used by standard training and inference forward passes. We explain this sparsity by showing that, at typical RL post-training learning rates, Adam updates often fall below the local BF16 rounding threshold. We turn this observation into an algorithmic principle called \emph{compute-visible sparsification}: transmit only updates that would change the next forward pass. \protoname (Precision-gated Updates for Low-precision Sparse Exchange) turns this principle into two communication algorithms: \emph{PULSESync} sends lossless sparse BF16 weight patches from trainers to inference workers, and \emph{PULSELoCo} sparsifies DiLoCo-style FP32 pseudo-gradient synchronization with error feedback. Over bandwidth-constrained commodity networks, PULSESync cuts weight-synchronization communication by $> \mathbf{100\times}$ while reconstructing trainer weights bit-identically. PULSELoCo matches DiLoCo across four models while reducing trainer-to-trainer communication by $> \mathbf{17\times}$ versus DiLoCo and $> \mathbf{100\times}$ versus DDP in the largest evaluated setting.
\end{abstract}


\section{Introduction}
\label{sec:introduction}

Reinforcement learning (RL) is now a standard post-training stage for large language models~\citep{lee2024rlaif,shao2024deepseekmath,deepseekr1,srivastava2025rlsurvey,yang2025qwen3,olmo2025olmo3,google2025gemini}. Modern RL pipelines decouple the trainer from inference, because specialized rollout engines generate trajectories up to $12\times$ faster than general-purpose training frameworks~\citep{hu2024openrlhf,sheng2025hybridflow,shen2024nemoaligner}. Combined with data-parallel training, this decoupling creates two channels: trainer-to-inference \emph{weight synchronization}, which refreshes the rollout policy, and trainer-to-trainer \emph{gradient or pseudo-gradient synchronization}. Both channels are already costly in deployed systems: in a geo-distributed RL run, synchronizing a 32B BF16 policy ($62$\,GB) to inference workers averaged $14$ minutes per round at commodity bandwidths~\citep{primeintellectteam2025intellect2}. Trainer-to-trainer synchronization is also expensive: a 32B model carries $128$\,GB of FP32 gradients, so dense gradient synchronization across four trainers moves hundreds of GB per round. \Cref{fig:hero} quantifies the resulting compute-utilization loss across bandwidth regimes.

Prior work has observed that RL fine-tuning changes only $5$--$30\%$ of parameters under coarse checkpoint comparisons~\citep{mukherjee2024sparse}. We instead measure the bitwise change between consecutive optimizer steps: this is the per-step information needed to keep workers in sync. At this granularity, we show that approximately $\mathbf{99\%}$ of parameters remain unchanged after the BF16 cast in standard RL pipelines, consistently across model families and scales (\Cref{sec:sparsity_analysis}). The mechanism is the interaction between BF16 precision and RL learning rates. Gradients are nearly fully dense (${\sim}99\%$ non-zero), and Adam applies its FP32 update to the FP32 master weights. Compute, however, runs in BF16: each forward pass uses a BF16 cast of the master weights. A per-step update affects computation only if it changes the BF16 value used in the next forward pass. At typical RL learning rates (${\sim}3 \times 10^{-6}$), an Adam upper bound on the per-step update falls below this threshold at the majority of weights (\Cref{sec:sparsity_analysis}). We call this criterion \emph{compute visibility}. BF16 is our main evaluated setting, but the principle is dtype-general: lower-precision formats such as FP8 or MXFP4 have coarser rounding cells and should make even more updates compute-invisible. We use this principle to design \protoname (Precision-gated Updates for Low-precision Sparse Exchange): PULSESync, a lossless method for trainer-to-inference weight synchronization, and PULSELoCo, a method for trainer-to-trainer pseudo-gradient synchronization (\Cref{sec:method}).

Trainer-to-inference weight synchronization admits a particularly clean, lossless use of this rule. Inference workers operate on BF16 weights, so any parameter whose BF16 representation is unchanged after an optimizer step is invisible to the worker, transmitted or not. PULSESync transmits only the changed BF16 values at each synchronization step and leaves the trainer's FP32 master untouched. Reconstruction at the worker is bit-identical to full-checkpoint synchronization, so PULSESync is lossless by construction. This targets a synchronization problem largely absent from prior communication-efficient training work, which focuses on trainer-to-trainer synchronization in pre-training~\citep{alistarh2017qsgd,lin2018deep,vogels2019powersgd,peng2024demo} rather than keeping inference workers synchronized during RL post-training. It is a drop-in replacement for dense weight synchronization in any RL pipeline. In a live deployment over the public internet, PULSESync cuts the payload by over $100\times$ at no cost to training behavior (\Cref{app:grail}).

\begin{figure}[tb]
    \centering
    \includegraphics[width=\linewidth]{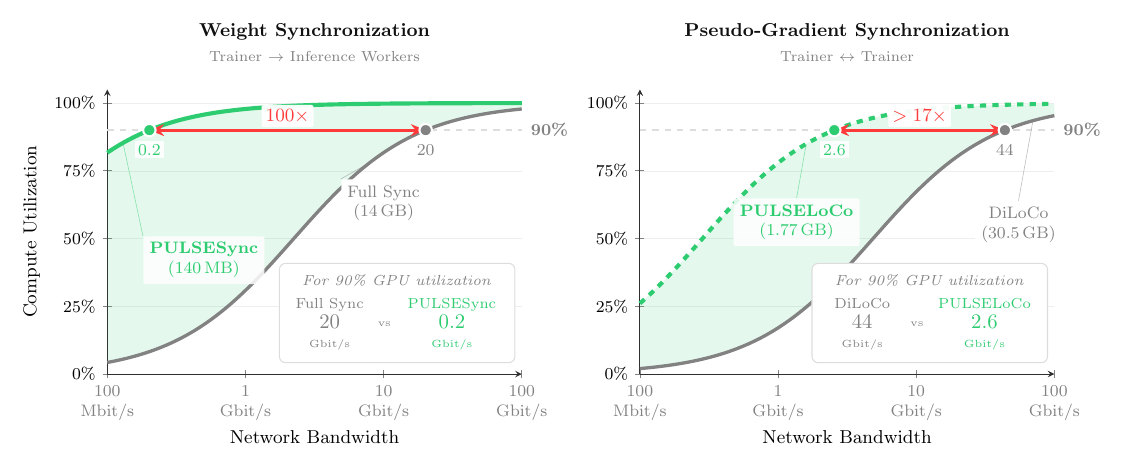}
    \caption{\textbf{Compute utilization vs.\ network bandwidth} on a 7B reference model using a $50$\,s compute interval between communications; bandwidth thresholds scale inversely with this interval. \emph{Left:} weight synchronization from trainer to inference workers. Full checkpoint sync transmits the BF16 weights ($14$\,GB per inference worker); PULSESync transmits encoded sparse BF16 patches ($140$\,MB), a $100\times$ lossless reduction. \emph{Right:} pseudo-gradient synchronization across trainers. DiLoCo sends a full FP32 pseudo-gradient ($30.5$\,GB per worker payload); PULSELoCo transmits an encoded sparse FP32 pseudo-gradient payload ($1.77$\,GB), a $>17\times$ lower payload than DiLoCo and ${\sim}138\times$ lower communication than DDP over the eight-step local-update window (see \Cref{app:bandwidth_accounting}). PULSESync and PULSELoCo reach $90\%$ GPU utilization at ${\sim}0.2$ and ${\sim}2.6$\,Gbit/s, while the corresponding full-payload transfers need ${\sim}20$ and ${\sim}44$\,Gbit/s. We validate weight synchronization in a live deployment over the public internet (\Cref{app:grail}) and pseudo-gradient synchronization against DDP and DiLoCo in \Cref{sec:experiments}.}
    \label{fig:hero}
    \vspace{-1.6em}
\end{figure}

Trainer-to-trainer synchronization needs a different design. Gradients are dense, so sparsifying them directly does not help. Compute visibility applies to updates, not raw gradients, so PULSELoCo synchronizes DiLoCo-style pseudo-gradients: each worker runs $H$ local steps, forms the update from its shared starting point, and synchronizes that update across trainers. PULSELoCo applies the same BF16 criterion as PULSESync to each worker's FP32 pseudo-gradient. Selected entries are synchronized as FP32 values; the rest remain in an FP32 error-feedback buffer and are reconsidered on the next round. This places PULSELoCo in the local-update family of communication-efficient training methods~\citep{stich2018localsgd,douillard2023diloco}. Unlike gradient compressors, which reduce per-round payloads for dense gradient synchronization~\citep{alistarh2017qsgd,lin2018deep,vogels2019powersgd,peng2024demo}, PULSELoCo combines less frequent synchronization with sparse pseudo-gradients selected by the forward-precision criterion. Compared with DDP, the $H$ local steps reduce synchronization frequency by another factor of $H$. We are not aware of prior compressors designed for multi-trainer RL post-training.

We make four contributions:
\begin{enumerate}[leftmargin=*, itemsep=2pt, topsep=4pt]
    \item We show that weight updates in typical RL post-training pipelines are naturally sparse and explain this with a mechanism based on RL learning rates, Adam update bounds, and low-precision forward-pass casting.
    \item We introduce \textbf{PULSESync}, a lossless weight-synchronization method with bit-identical reconstruction for inference workers in RL post-training. This yields over $100\times$ bandwidth reduction in a live deployment over the public internet when coupled with standard reinforcement learning pipelines (\Cref{app:grail}).
    \item We introduce \textbf{PULSELoCo}, which applies compute-visible sparsification to DiLoCo-style pseudo-gradient synchronization with FP32 error feedback. In MATH experiments, PULSELoCo matches DiLoCo; in the largest evaluated setting, it reduces trainer-to-trainer communication by $>17\times$ versus DiLoCo and $>100\times$ versus DDP (\Cref{sec:experiments,app:bandwidth_accounting}).
    \item To our knowledge, we provide the first empirical demonstration that DiLoCo-style RL post-training works for LLMs. Prior work has focused on pre-training settings.
\end{enumerate}\vspace{-10pt}

\section{Background and Problem Formulation} \vspace{-10pt}
\label{sec:background}


\label{subsec:rl_formulation}

We focus on reinforcement learning for reasoning tasks with \emph{verifiable rewards} (RLVR)~\citep{deepseekr1}, where rewards are computed by an automatic verifier (e.g., final-answer matching, unit tests) rather than learned human preferences.\footnote{While our experiments focus on GRPO for reasoning tasks, prior work observed similar sparsity patterns for PPO-based RLHF~\citep{mukherjee2024sparse}, suggesting this is a general phenomenon in RL post-training.}
We use Group Relative Policy Optimization (GRPO)~\citep{shao2024deepseekmath}, the dominant algorithm for training reasoning models~\citep{deepseekr1,yu2025dapo}. GRPO estimates advantages from group-relative rewards without requiring a learned value function: for each prompt, it samples a group of $G$ responses and computes advantages relative to the group mean and standard deviation. The policy is then updated using a clipped surrogate objective similar to PPO~\citep{schulman2017proximal}. Following recent work~\citep{yu2025dapo,liu2025drgrpo}, we omit the KL divergence penalty. We provide the full mathematical formulation in \Cref{app:grpo_details}.


In distributed RL training, different inference workers may operate with different versions of the model weights. We formalize this using \emph{off-policy delay}. Let $\theta_t$ denote the current model parameters at optimization step $t$. A rollout generated using parameters $\theta_{t-\tau}$ is said to have an \textbf{off-policy delay} of $\tau$ steps. In practice, this delay arises from asynchronous updates, communication latency, and batching. It affects both training dynamics and, as we will show, the sparsity structure of weight updates. \vspace{-10pt}

\section{Characterizing Weight Update Sparsity}\vspace{-10pt}
\label{sec:sparsity_analysis}
\label{sec:empirical}

For sparse weight updates to enable communication-efficient synchronization, three conditions must hold: sparsity must be \emph{consistently high} throughout training, \emph{mechanistically understood} so practitioners can preserve it, and \emph{stable} under the delayed synchronization patterns used in practical RL pipelines. This section establishes all three, directly informing the design of \protoname (\Cref{sec:method}). We use the following experimental setup:


\noindent\textbf{Setup.}
We measure sparsity under GRPO with DAPO-inspired hyperparameters~\citep{yu2025dapo} on MATH~\citep{hendrycks2021math} across Qwen2.5-Instruct (0.5B, 1.5B, 7B)~\citep{qwen2.5}, Llama-3.2-3B-Instruct~\citep{llama3}, and Gemma-3-4B-it~\citep{gemma3}. Runs use learning rate $3 \times 10^{-6}$, asymmetric clipping, 400 steps, 4 random seeds, and a composite reward based on correctness and formatting. Full experimental details are in \Cref{app:experimental_details}; training converges within this window for all models (\Cref{app:training_curves}).\\
\noindent\textbf{Sparsity metric.}
We measure \emph{weight update sparsity} after casting parameters to BF16, the dtype used by the next forward pass. Let $\bar{\theta}_t = \operatorname{cast}_{\mathrm{BF16}}(\theta_t)$ denote this BF16 view after optimization step $t$. Per-step sparsity is $|\{i : \bar{\theta}_{t+1}^{(i)} = \bar{\theta}_{t}^{(i)}\}| / d$, where equality is bitwise and $d$ is the total parameter count. Higher sparsity indicates fewer parameter changes that affect computation and greater potential for compression. We provide formal definitions, including the generalization to $k$-step sparsity (comparing $\bar{\theta}_t$ to $\bar{\theta}_{t+k}$), in \Cref{app:sparsity_definitions}. \vspace{-10pt}

\subsection{How Sparse Are Updates Throughout Training?}\vspace{-8pt}
\label{subsec:main_results}

\Cref{fig:sparsity_main} summarizes our findings. Mean per-step sparsity is approximately 99\% across all model scales and families, confirming and extending prior observations~\citep{mukherjee2024sparse} to the GRPO setting. This consistency across architectures (Qwen, Llama, Gemma) and scales (0.5B--7B) suggests that the phenomenon is tied to Adam optimization at RL learning rates rather than to a single model family. Sparsity is also stable throughout training: standard deviation across 400 steps is only 0.2--0.4\%, with even worst-case steps remaining above 98\%. For multi-step comparisons, sparsity remains above 98\% within the $k \leq 8$ range recommended for asynchronous RL~\citep{scalerl}, giving a stable target for communication-efficient method design (\Cref{sec:method}).

\begin{figure}[t]
    \centering
    \includegraphics[width=\linewidth]{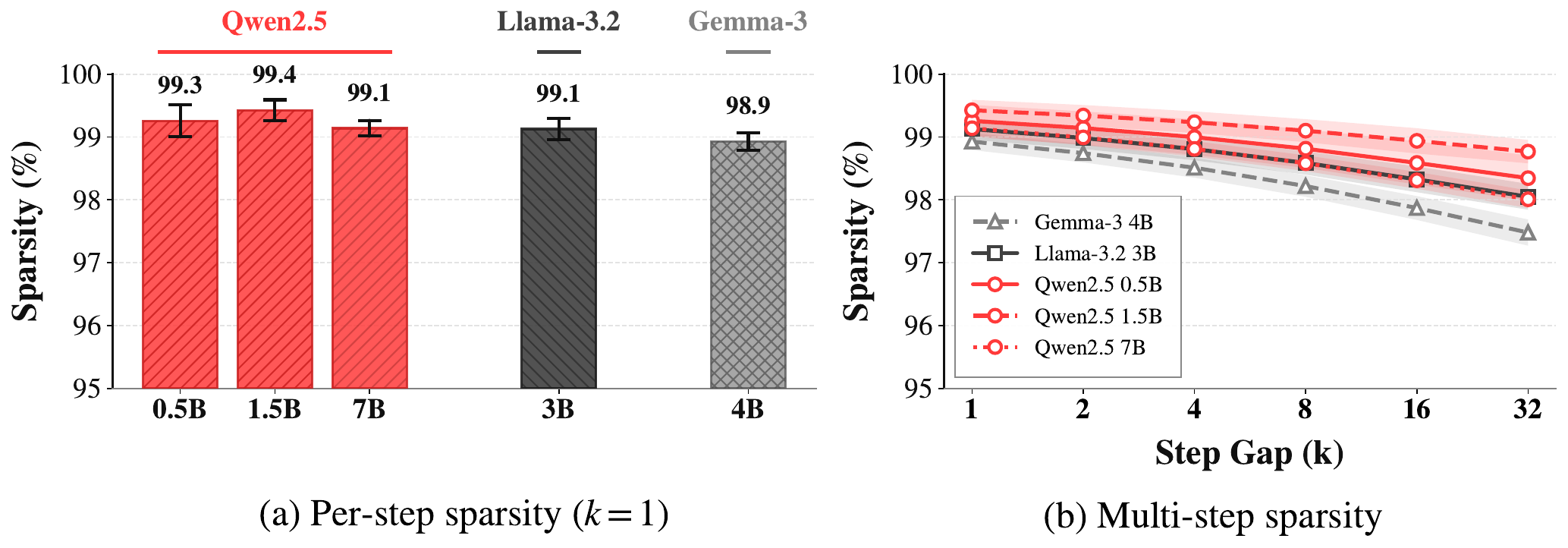}
    \caption{\textbf{Weight update sparsity across model scales and families.} Sparsity is measured after casting parameters to BF16. (a)~Mean per-step sparsity (\%) averaged over 400 training steps. Error bars indicate $\pm$1 standard deviation across steps. (b)~Sparsity when comparing parameter vectors from steps $t$ and $t+k$. Within the recommended $k \leq 8$ range for asynchronous RL~\citep{scalerl}, sparsity remains above 98\% for all models.}\vspace{-15pt}
    \label{fig:sparsity_main}
\end{figure}


With consistently high sparsity established, the remaining questions are why dense gradients produce sparse weight updates and whether the effect survives practical rollout delays. We first explain the BF16 absorption mechanism and its learning-rate dependence, then test robustness to policy staleness. \vspace{-20pt}

\subsection{Why Are Gradients Dense but Updates Sparse?}\vspace{-8pt}
\label{subsec:precision}

A natural hypothesis is that sparsity arises from sparse gradients. However, we find the opposite: \textbf{gradients are nearly fully dense}, with approximately 99\% of parameters receiving non-zero gradients at each step; see \Cref{app:gradient_sparsity}. Sparsity emerges downstream via \emph{update absorption}: BF16's limited resolution means updates smaller than roughly $|w|/256$ cannot be represented and are rounded away. Because learning rate directly scales update magnitude, it determines which weights can be modified. At typical RL post-training learning rates, e.g. $\eta \approx 3 \times 10^{-6}$, most updates fall below this threshold. We provide formal analysis in \Cref{app:bf16_details} that demonstrates this for Adam-style optimizers, though other optimizers may behave differently (\Cref{app:optimizer_dependence}).

\begin{figure}[t]
\centering
\captionsetup[subfigure]{skip=3pt}
\begin{subfigure}[t]{0.47\linewidth}
    \vspace{0pt}
    \centering
    \vspace{0.8em}
    \resizebox{\linewidth}{!}{%
    \begin{tikzpicture}[>=Stealth, font=\scriptsize, x=1cm, y=1.22cm]
        \draw[very thick, covenantblack1000] (0,0) -- (6,0);
        \foreach \x/\lab in {0/$b$,3/boundary,6/next BF16 value} {
            \draw[very thick, covenantblack1000] (\x,0.15) -- (\x,-0.15);
            \node[below=5pt, covenantblack1000] at (\x,-0.15) {\lab};
        }

        \fill[covenantred, opacity=0.13] (0,-0.8) rectangle (3,-1.35);
        \fill[covenantgreen, opacity=0.13] (3,-0.8) rectangle (6,-1.35);
        \draw[covenantred, dashed, thick] (3,-0.65) -- (3,-1.45);
        \node[covenantred, font=\scriptsize\bfseries] at (1.5,-1.08) {absorbed};
        \node[covenantgreen, font=\scriptsize\bfseries] at (4.5,-1.08) {visible};

        \fill[covenantblack1000] (0,0) circle (2.4pt);
        \fill[covenantred] (0.45,0) circle (2.4pt);
        \fill[covenantgreen] (3.25,0) circle (2.4pt);

        \node[covenantred, align=center, font=\scriptsize\bfseries] at (0.8,1.2) {one step\\$w+\Delta$};
        \draw[->, very thick, covenantred] (0.8,0.75) -- (0.45,0.12);
        \node[covenantgreen, align=center, font=\scriptsize\bfseries] at (4.6,1.2) {after later steps\\$w+\sum_{j=1}^{n}\Delta_j$};
        \draw[->, very thick, covenantgreen] (4.4,0.75) -- (3.25,0.12);

        \draw[decorate, decoration={brace, mirror, amplitude=4pt}, thick, covenantblack500] (0,-1.65) -- (3,-1.65);
        \node[covenantblack500, align=center] at (1.5,-2.18) {distance to boundary\\$\approx |w|/256$};
    \end{tikzpicture}}
    \vspace{0.1em}
    \caption{Local BF16 rounding interval.}
\end{subfigure}
\hspace{0.02\linewidth}
\begin{subfigure}[t]{0.49\linewidth}
    \vspace{0pt}
    \centering
    \includegraphics[width=\linewidth]{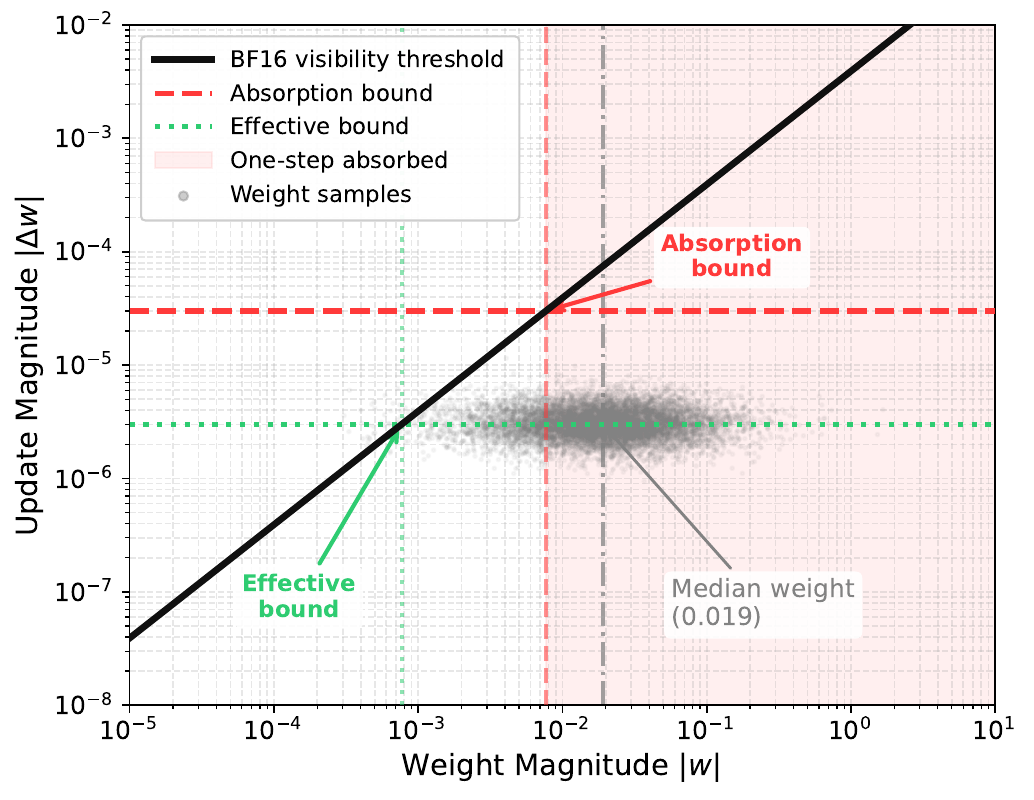}
    \caption{Global one-step threshold.}
\end{subfigure}
\caption{\textbf{BF16 absorption from one parameter to LLM-scale weights.} (a)~A one-step update can be absorbed by the next BF16 cast when it remains inside the current rounding cell. With FP32 master weights, later updates can accumulate and eventually cross the boundary. (b)~The black diagonal is the BF16 visibility threshold, approximately $|\Delta w| = |w|/256$; larger weights require larger updates. The horizontal lines show the effective bound ($\eta$) and absorption bound ($10\eta$) at learning rate $3 \times 10^{-6}$. Gray dots are representative LLM weights; most lie to the right of the absorption-bound crossing, where even the absorption bound lies below the BF16 visibility threshold.\vspace{-19pt}}
\label{fig:bf16_threshold}
\end{figure}

\Cref{fig:bf16_threshold} shows how the one-parameter rounding effect scales to full LLMs. Panel~(a) shows the local mechanism: a single small update can be invisible after the BF16 cast, while FP32 master weights continue to accumulate later updates. Panel~(b) applies the same criterion to real weight magnitudes. The diagonal is the BF16 visibility threshold: a one-step update must exceed roughly $|w|/256$ to change the BF16 value of a weight with magnitude $|w|$. The horizontal lines show two reference update sizes for Adam: the \emph{effective bound} near $\eta$, which matches the sign-like regime typical of stable Adam training~\citep{balles2018dissecting}, and the conservative \emph{absorption bound} at $10\eta$, the worst-case one-step bound for PyTorch-default Adam betas, $\beta_1=0.9$ and $\beta_2=0.999$. Most LLM weights, shown as gray dots, lie to the right of the absorption-bound crossing. For these weights, even the conservative bound is below the BF16 visibility threshold, so a one-step update is absorbed by the BF16 cast. This magnitude argument alone predicts 95--98\% one-step absorption; detailed statistics are in \Cref{app:weight_magnitude}. FP32 accumulation can eventually cross a BF16 boundary, but usually only over many steps, preserving high per-step sparsity (\Cref{app:mixed_precision}).

Learning rate is the primary factor controlling sparsity: raising $\eta$ shifts the update bounds upward, allowing more weights to change. However, $\eta$ is constrained by training stability rather than chosen for compression. RL post-training is sensitive to large policy updates and therefore typically uses smaller learning rates than supervised fine-tuning; in our GRPO sweeps, learning rates above ${\sim}5 \times 10^{-6}$ destabilize training. Thus the practical RL learning-rate range coincides with the range that produces high sparsity. The full learning-rate sweep is reported in \Cref{app:sparsity_factors}.

This analysis reconciles prior explanations of sparsity in RL fine-tuning~\citep{mukherjee2024sparse,shenfeld2025rlrazor,zhu2025pathnotaken}. BF16 precision supplies the rounding cells, while RL's stability-constrained learning rates keep most updates inside them; consistent with this view, \citet{shenfeld2025rlrazor} find that pure FP32 training eliminates sparsity. Standard mixed-precision training still preserves the effect because the next forward pass uses BF16 weights even when FP32 master weights store accumulated residuals (\Cref{app:mixed_precision}). We refer to this forward-pass criterion as \emph{compute visibility}: an update is visible if and only if it changes the value seen by the next forward pass. Because the rule is fixed by the forward precision, it introduces no top-$k$ threshold or compression hyperparameter and applies naturally to lower-precision formats. \vspace{-10pt}


\subsection{How Does Policy Staleness Affect Sparsity?}\vspace{-8pt}
\label{subsec:staleness}

\begin{wrapfigure}[15]{r}{0.47\textwidth}
\vspace{-2.6em}
    \centering
    \includegraphics[width=\linewidth]{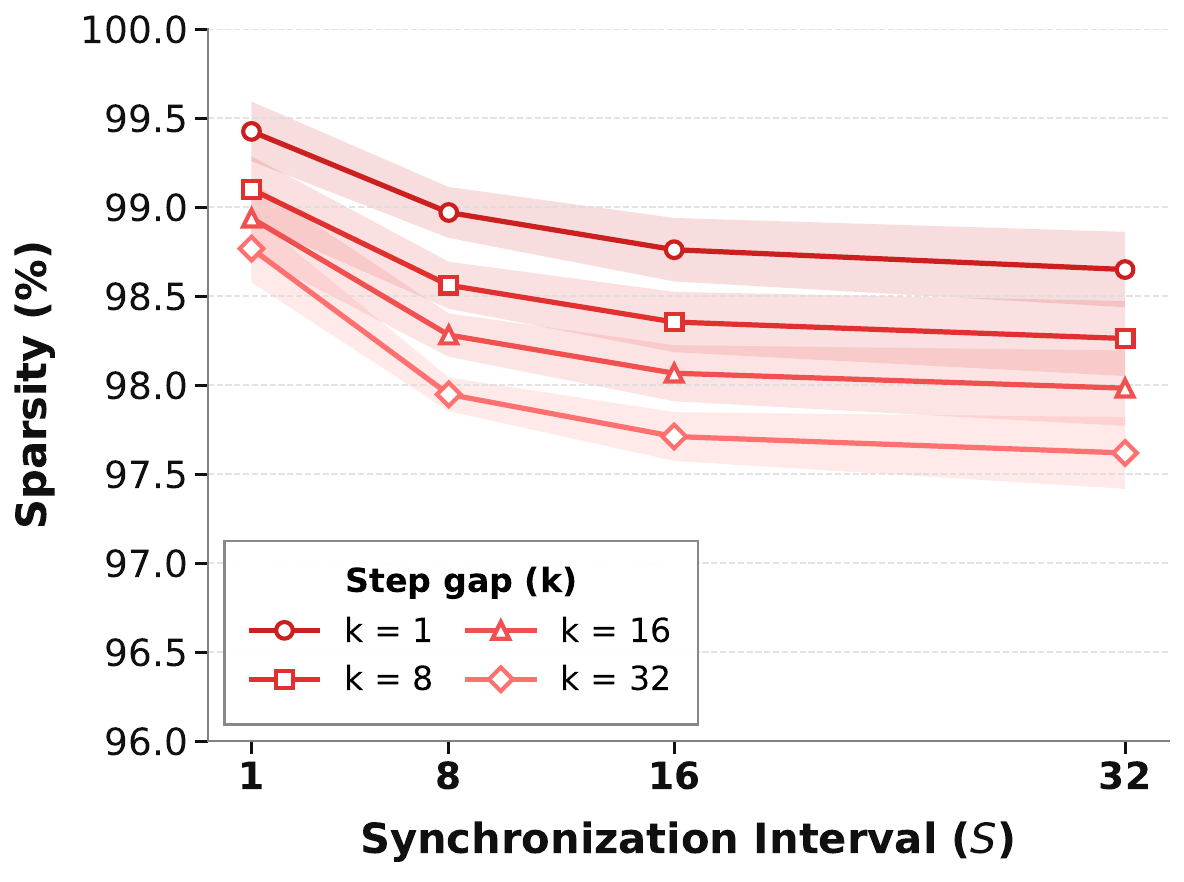}
    \caption{\textbf{Policy staleness effect.} Per-step sparsity remains above 98.5\% at $S=32$; all tested $k$ values remain above 97.5\%.}
    \label{fig:staleness_sparsity_main}
\end{wrapfigure}

Distributed RL pipelines often generate rollouts with policy weights that lag behind the learner. To isolate this effect, we vary the rollout synchronization interval $S$: rollouts are regenerated every $S$ optimizer steps, so $S=1$ is fully on-policy and larger $S$ induces off-policy delays $\tau \in \{0,\ldots,S-1\}$. In the figure, $k$ only specifies which two optimizer steps are compared for sparsity; rollout delay is controlled by $S$. \Cref{fig:staleness_sparsity_main} shows that staleness only modestly reduces sparsity: per-step sparsity remains above 98.5\% even at $S=32$, and all tested $k$ values remain above 97.5\%. Thus high compute-visible sparsity is not an artifact of fully fresh rollouts; it persists under delayed rollout regimes used in asynchronous RL. Together, the results in this section show that sparsity is high, explained by BF16 absorption at RL learning rates, and robust to rollout delays. \Cref{sec:method} turns this criterion into synchronization algorithms for distributed RL. \vspace{-5pt}


\section{The \protoname Methods}
\label{sec:method}

We present \protoname, two synchronization methods built around one rule: send only updates that would change the next forward pass. \emph{PULSESync} applies this rule to trainer-to-inference weight synchronization. \emph{PULSELoCo} applies it to DiLoCo-style pseudo-gradient synchronization~\citep{douillard2023diloco}, with error feedback for entries that are not sent in the current round. \Cref{subsec:cvs} defines the rule; \Cref{subsec:pulsesync} and \Cref{subsec:pulseloco} describe the two algorithms; \Cref{fig:sync_topology} summarizes the implementation topology.

\subsection{Compute-Visible Sparsification}
\label{subsec:cvs}

In typical RL training pipelines, FP32 master weights are cast to BF16 or lower precision for each forward pass, while optimizer state remains in FP32. If an FP32 update does not change the BF16 value of a parameter, then the next forward pass sees the same operand for that parameter. We call this criterion the \emph{compute-visibility gate}:
\begin{equation}
G_D(\theta, s) := \{\, i : \text{cast}_D(\theta_i) \neq \text{cast}_D(\theta_i - s_i) \,\},
\label{eq:gate}
\end{equation}
where $\theta$ is the parameter vector, $s$ is the proposed update, and $D$ is the compute dtype. We use $D = \text{BF16}$ throughout the main paper. PULSE sends only the updates that pass this gate. Updates that fail the gate are kept, not dropped. For trainer-to-inference weight synchronization, they remain in the trainer's FP32 master weights and are sent later if they change the BF16 view. For trainer-to-trainer pseudo-gradient synchronization, each worker keeps them in an FP32 error-feedback buffer for the next outer round. The next two subsections instantiate this rule for BF16 weight patches and FP32 pseudo-gradient synchronization.

\subsection{PULSESync: Lossless Weight Synchronization}
\label{subsec:pulsesync}

For trainer-to-inference synchronization, PULSESync compares consecutive BF16 checkpoints at the trainer and sends the changed values as a sparse patch. Encoding applies the gate with bitwise comparison and packages the selected indices and new values (\Cref{alg:sparse_patch}). At the inference worker, decoding overwrites those parameters. Because patches store values rather than arithmetic differences, reconstruction is bit-identical to the trainer's BF16 view, so PULSESync is lossless for the next forward pass.

\begin{figure}[!htbp]
\centering

\begin{minipage}[t]{0.41\textwidth}
\centering
\refstepcounter{algorithm}
\label{alg:sparse_patch}
\begin{tcolorbox}[
    width=\linewidth,
    colback=gray!4,
    colframe=gray!50,
    boxrule=0.4pt,
    arc=2pt,
    left=5pt,
    right=5pt,
    top=2pt,
    bottom=4pt,
    toptitle=2pt,
    bottomtitle=1pt,
    title={\small\textbf{Algorithm \thealgorithm:} Sparse Value Patching},
    coltitle=black,
    fonttitle=\small
]
\scriptsize
\begin{algorithmic}[1]
\Procedure{Encode}{$W_t, W_{t-1}$}
    \State $\cI \gets \{i : W_t^{(i)} \neq W_{t-1}^{(i)}\}$
    \State $\cV \gets W_t[\cI]$
    \State $(\cI, \cV) \gets \textsc{DeltaEncode}(\cI, \cV)$
    \State $\cI \gets \textsc{Downcast}(\cI)$
    \State $P \gets \textsc{Compress}(\cI, \cV)$
    \State \Return $P$
\EndProcedure
\Statex
\Procedure{Decode}{$W_{t-1}, P$}
    \State $(\cI, \cV) \gets \textsc{Decompress}(P)$
    \State $\cI \gets \textsc{Upcast}(\cI)$
    \State $(\cI, \cV) \gets \textsc{DeltaDecode}(\cI, \cV)$
    \State $W_t \gets W_{t-1}$; $W_t[\cI] \gets \cV$
    \State \Return $W_t$
\EndProcedure
\end{algorithmic}
\end{tcolorbox}
\end{minipage}
\hfill
\begin{minipage}[t]{0.55\textwidth}
\centering
\refstepcounter{algorithm}
\label{alg:pulseloco}
\begin{tcolorbox}[
    width=\linewidth,
    colback=gray!4,
    colframe=gray!50,
    boxrule=0.4pt,
    arc=2pt,
    left=5pt,
    right=5pt,
    top=2pt,
    bottom=4pt,
    toptitle=2pt,
    bottomtitle=1pt,
    title={\small\textbf{Algorithm \thealgorithm:} PULSELoCo Outer Loop},
    coltitle=black,
    fonttitle=\small
]
\scriptsize
\begin{algorithmic}[1]
\Require $\theta^{(0)}$; $e_r^{(0)} \gets 0$; $m^{(0)} \gets 0$; $T,H,\mu,\alpha$
\For{$t = 1$ to $T$}
    \For{$r = 1$ to $R$ \textbf{in parallel}}
        \State $w_r \gets \theta^{(t-1)}$
        \For{$h = 1$ to $H$}
            \State $\xi_{r,t,h} \sim \cD_r$
            \State $w_r \gets w_r - \textsc{AdamStep}(w_r; \xi_{r,t,h})$
        \EndFor
        \State $s_r^{(t)} \gets (\theta^{(t-1)} - w_r) + e_r^{(t-1)}$
        \State $\cI_r^{(t)} \gets G_\text{BF16}(\theta^{(t-1)}, s_r^{(t)})$
        \State $e_r^{(t)}[\cI_r^{(t)}] \gets 0$
        \State $e_r^{(t)}[\overline{\cI_r^{(t)}}] \gets s_r^{(t)}[\overline{\cI_r^{(t)}}]$
    \EndFor
    \State $(\cU^{(t)}, \bar{V}^{(t)}) \gets \textsc{SparseSync}_{r=1}^{R}(\cI_r^{(t)}, s_r^{(t)}[\cI_r^{(t)}])$
    \State $g^{(t)} \gets \mathbf{0}_d$; $g^{(t)}[\cU^{(t)}] \gets \bar{V}^{(t)}$
    \State $m^{(t)} \gets \mu \cdot m^{(t-1)} + g^{(t)}$
    \State $\theta^{(t)} \gets \theta^{(t-1)} - \alpha(\mu \cdot m^{(t)} + g^{(t)})$
\EndFor
\end{algorithmic}
\end{tcolorbox}
\end{minipage}

\end{figure}

\noindent\textbf{Encoding and decoding.} Given consecutive BF16 checkpoints $W_{t-1}$ and $W_t$, we identify differing positions in one pass. For each changed position, we store its index and new value, not an arithmetic difference. Storing values avoids floating-point drift from repeatedly adding deltas. Delta-encoding and downscaling the indices provide approximately $23\%$ additional compression before the general-purpose codec. Reconstruction reverses the pipeline: decompress, recover absolute indices, and overwrite $W_t[\cI] \gets \cV$. This is a direct memory copy with no floating-point arithmetic, so chained patches remain bit-identical. Implementation-level recovery paths, ready markers, and anchor handling are described in \Cref{app:distributed_sync}.

\noindent\textbf{Compression codec selection.} Sparse patches compose with a general-purpose codec (lz4 / zstd-1 / zstd-3); zstd-1 is our default at typical-cloud bandwidth, achieving approximately $79\times$ total reduction. Per-codec trade-offs and the bandwidth-regime selection table are in \Cref{app:compression}.

\subsection{PULSELoCo: Error-Feedback Pseudo-Gradient Synchronization}
\label{subsec:pulseloco}
\begin{wrapfigure}[22]{r}{0.47\textwidth}
\vspace{-0.4em}
\centering
\includegraphics[width=\linewidth]{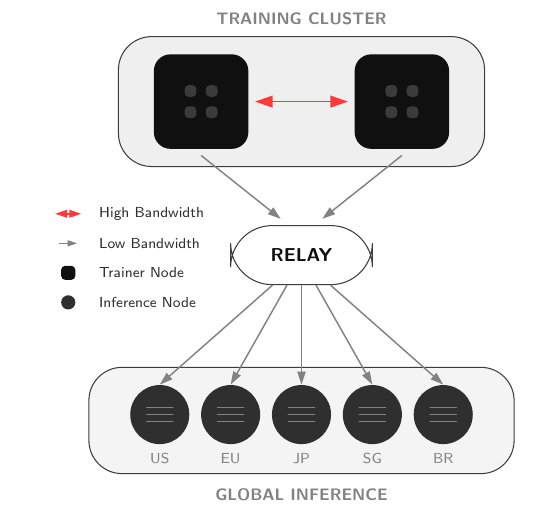}
\caption{\textbf{\protoname topology.} PULSESync sends BF16 patches through a relay to inference workers; PULSELoCo synchronizes FP32 pseudo-gradients across trainers.}
\label{fig:sync_topology}
\end{wrapfigure}

Gradients are dense, so applying the compute-visibility gate directly to raw gradients would not reduce the trainer-to-trainer payload. PULSELoCo instead follows DiLoCo~\citep{douillard2023diloco}: workers synchronize parameter-space updates after local optimization, not per-step gradients. The BF16 gate is therefore applied to DiLoCo-style pseudo-gradients, while the selected pseudo-gradient values are transmitted in FP32 because they are inputs to the outer optimizer.

In DiLoCo, each outer round starts from shared parameters $\theta^{(t-1)}$. Each worker $r \in \{1,\ldots,R\}$ copies these parameters, runs $H$ local Adam steps, and reaches local weights $w_r^{(t,H)}$. The worker then forms a pseudo-gradient, $\Delta_r^{(t)} := \theta^{(t-1)} - w_r^{(t,H)}$, which is the parameter-space update produced by local training. Standard DiLoCo synchronizes this full FP32 pseudo-gradient across workers and applies the aggregate with an outer Sutskever-form Nesterov optimizer using momentum $\mu = 0.9$ and step size $\alpha = 0.7$. Larger $H$ reduces how often workers communicate.

PULSELoCo keeps DiLoCo's local Adam steps and outer optimizer unchanged. The only change is the synchronization payload: worker $r$ adds its error-feedback buffer, forming $s_r^{(t)} := \Delta_r^{(t)} + e_r^{(t-1)}$, applies the compute-visibility gate $G_\text{BF16}(\theta^{(t-1)}, s_r^{(t)})$, and synchronizes only the selected pseudo-gradient entries. After synchronization, worker $r$ clears the entries that were sent and stores the entries that were not sent in $e_r^{(t)}$ for the next round. This buffer lets pseudo-gradient entries that are too small to change the BF16 value accumulate until they become visible, mirroring how small updates accumulate in FP32 master weights before changing the BF16 weights used in forward passes. PULSELoCo applies outer momentum only after synchronization, so the momentum state tracks the same global update as DiLoCo rather than each worker's local sparse payload. Thus PULSELoCo is an error-feedback sparsification method whose sparsity level is set by BF16 compute visibility. Algorithm~\ref{alg:pulseloco} shows the full outer loop; \textsc{SparseSync} returns the union support and averages selected FP32 values over all $R$ workers, treating missing entries as zeros.

\section{Distributed RL Synchronization Evaluation}
\label{sec:experiments}

We evaluate the two synchronization channels from \Cref{fig:hero}: PULSESync for trainer-to-inference weight broadcast and PULSELoCo for trainer-to-trainer pseudo-gradient synchronization. Because PULSESync is lossless in standard pipelines, \Cref{sec:empirical} already characterizes the sparsity available for that broadcast. We additionally demonstrate PULSESync in a live globally distributed training run, then compare PULSELoCo against DiLoCo.

\noindent\textbf{PULSESync in a real-world deployment.} PULSESync is deployed as the weight-synchronization layer in grail, a geo-distributed RL training framework running over the public internet. Training compute nodes use high-bandwidth links, while rollout nodes are globally distributed; a relay network~\citep{primeintellectteam2025intellect2} distributes sparse BF16 weight patches from trainers to inference workers, as shown in \Cref{fig:sync_topology}. This deployment uses PULSESync only, not PULSELoCo.

We run Qwen2.5-7B-Instruct on MATH and Qwen2.5-Coder-7B-Instruct on MBPP with 3 independent seeds per task; setup, rewards, and rollout integrity verification are in \Cref{app:grail}.

\Cref{fig:grail_training_curves} shows the main deployment result. Validation pass@1 improves steadily on both tasks, while upload sizes stay near $108$\,MB (SE: $1.1$\,MB). A full 7B BF16 checkpoint is $14$\,GB, so the measured mean corresponds to approximately $130\times$ reduction, and every run remains above $100\times$ reduction. All transfers pass checksum verification, confirming bit-identical reconstruction at inference workers.

\begin{figure}[ht]
\centering
\includegraphics[width=\linewidth]{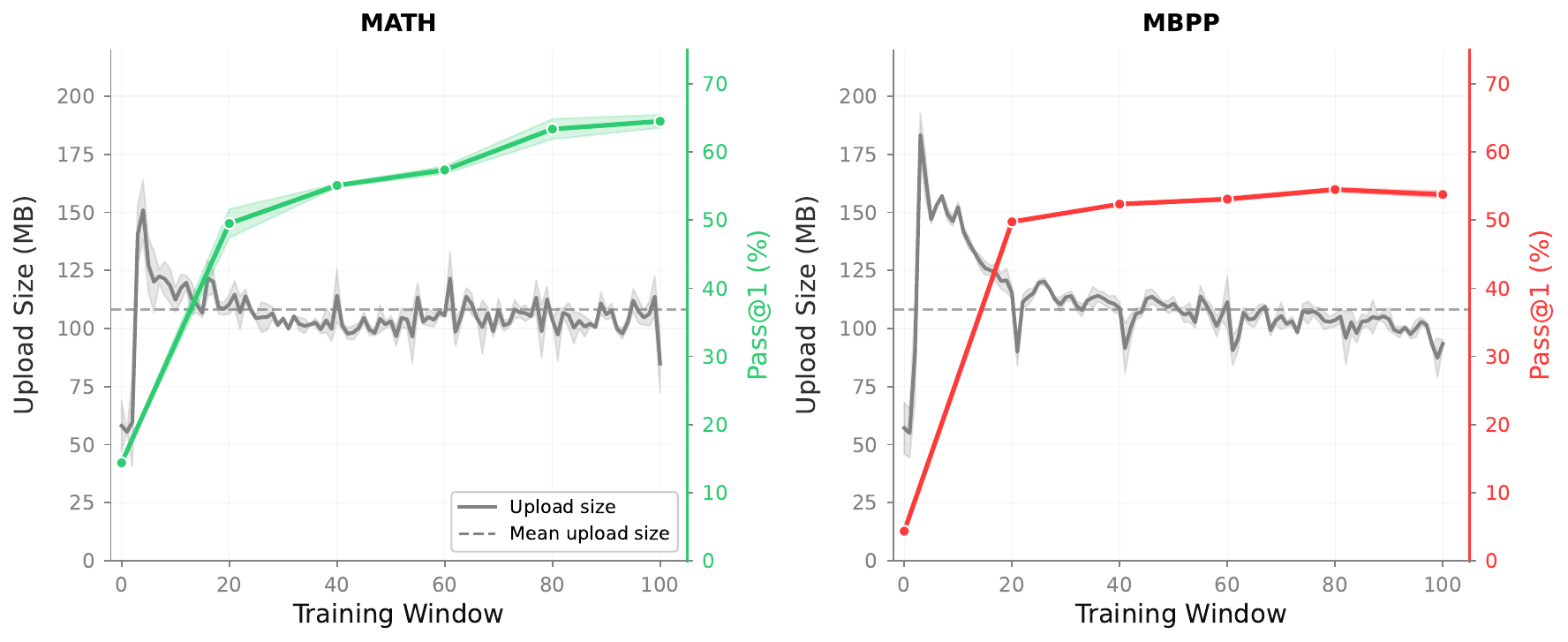}
\caption{\textbf{Training progress with PULSESync on grail.} Validation pass@1 (colored lines) improves steadily while upload sizes (gray lines) remain stable throughout training. The dashed line indicates the mean upload size of $108$\,MB, representing more than $100\times$ reduction compared to the $14$\,GB required for full checkpoint synchronization. Shaded regions indicate $\pm 1$ standard error across 3 independent runs.}
\label{fig:grail_training_curves}
\end{figure}

\noindent\textbf{PULSELoCo experimental setup.} For trainer-to-trainer synchronization, we run DDP, DiLoCo~\citep{douillard2023diloco}, and PULSELoCo in the same modified TRL GRPO loop, keeping batching, rewards, rollout generation, and evaluation fixed across methods. We evaluate Qwen2.5-1.5B/3B/7B-Instruct~\citep{qwen2.5} and Llama-3.2-3B-Instruct~\citep{llama3} on MATH~\citep{hendrycks2021math} with 3 seeds. All runs use $R=4$ workers. For both DiLoCo and PULSELoCo, rollout workers use shared global checkpoints and are refreshed only at outer-round boundaries. This makes very large $H$ less practical in RL than in pre-training DiLoCo settings~\citep{douillard2023diloco}: as $H$ grows, rollouts become increasingly off-policy relative to local trainer weights. We therefore use the largest stable windows we found, $H=8$ for the Qwen models and $H=4$ for Llama-3.2-3B-Instruct. Setup details, hyperparameters and bandwidth accounting are in \Cref{app:dataset_details,app:hyperparameters,app:bandwidth_accounting}.

\Cref{fig:learning_curves} shows the trainer-to-trainer results. DiLoCo remains close to DDP in most model settings, confirming that the local-update baseline is viable in RL post-training. PULSELoCo then recovers DiLoCo's learning behavior over the course of training. In the first checkpoints, PULSELoCo sometimes improves more slowly because entries that fail the BF16 gate have not yet accumulated in the error-feedback buffer. Once these residuals are carried into later rounds, the gap closes: by the final checkpoints, PULSELoCo is within seed variance of DiLoCo on all models. The $H$ sensitivity sweep in \Cref{app:pulseloco_h_sensitivity} shows that increasing $H$ modestly reduces sparsity but keeps PULSELoCo payloads far below dense synchronization, so the chosen $H$ values are driven mainly by RL staleness and stability rather than by the sparse payload itself.

\begin{figure}[H]
\centering
\includegraphics[width=\linewidth]{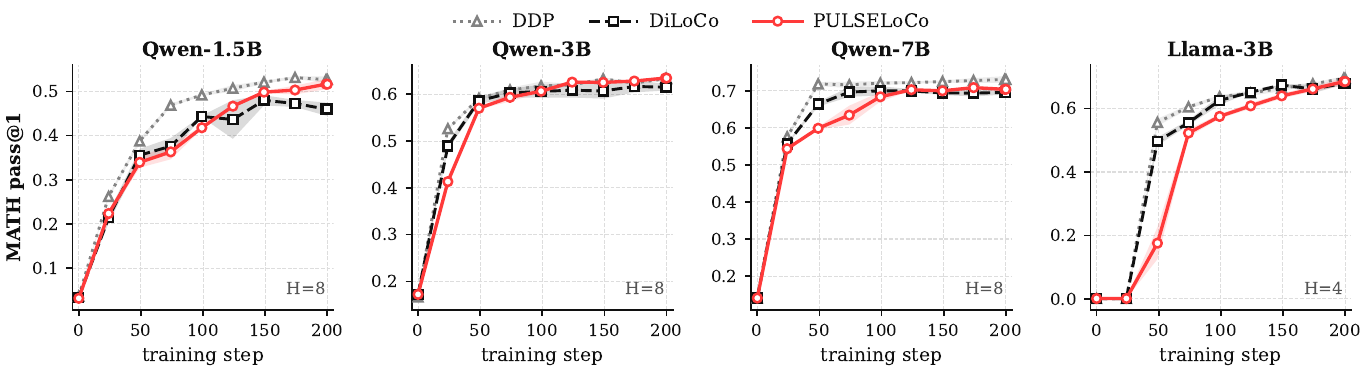}
\caption{\textbf{MATH validation pass@1 over training steps} for DDP, DiLoCo~\citep{douillard2023diloco}, and PULSELoCo at the chosen local-update windows ($H{=}8$ for the Qwen-2.5 family; $H{=}4$ for Llama-3.2) and $R = 4$. Shaded regions indicate $\pm 1$ standard error across $3$ seeds. PULSELoCo can lag early but catches up as error feedback accumulates, matching DiLoCo within seed variance by the end of training.}
\label{fig:learning_curves}
\end{figure}

\noindent\textbf{PULSELoCo sparse payloads.} PULSELoCo reduces trainer-to-trainer communication through both local updates and sparse pseudo-gradient exchange. Across the four model settings, each worker sends only $3.6$--$5.2\%$ of FP32 pseudo-gradient values per outer round ($94.8$--$96.4\%$ sparsity), a $19$--$28\times$ value reduction before index bytes. After accounting for delta-varint indices but no general-purpose codec, the conservative raw payload is still $12.8\times$ smaller than DiLoCo's full FP32 pseudo-gradient at the same outer-round cadence. On the 7B setting used in \Cref{fig:hero}, encoding the same sparse stream reduces the final-round payload from $2.39$\,GB to $1.77$\,GB, or $>17\times$ below DiLoCo's $30.5$\,GB payload. Relative to DDP over the same local-update window, the local-update structure adds another factor of $H$ by reducing synchronization frequency. Additional PULSELoCo sparse-payload measurements are in \Cref{app:pulseloco_sparsity}; encoded payload sizes and codec curves are in \Cref{app:bandwidth_accounting}. \Cref{app:pulseloco_sparsity} also reports the paired PULSESync checkpoint-patch sparsity measured from these runs.

\section{Conclusion}
\label{sec:conclusion}

We introduced \emph{compute-visible sparsification} for distributed RL post-training: communicate an update only when it can change the next forward pass. We show that gradients are dense, but approximately $99\%$ of per-step weight updates are invisible after the BF16 cast for the next forward pass, consistently across model families and scales (\Cref{sec:sparsity_analysis}).

\protoname uses this observation in two settings. PULSESync sends lossless sparse BF16 patches for trainer-to-inference weight synchronization, reducing bandwidth by over $100\times$ in a live decentralized RL deployment. PULSELoCo applies the same rule to DiLoCo-style trainer synchronization with error feedback. Across four MATH settings, PULSELoCo matches DiLoCo. In the 7B setting, it is $>17\times$ smaller than DiLoCo's full FP32 pseudo-gradient and $>100\times$ lower communication than DDP over the same local-update window. Together, these methods reduce the two bandwidth bottlenecks needed for geo-distributed RL post-training over commodity links: keeping rollout workers current and keeping trainers synchronized (\Cref{sec:experiments,app:bandwidth_accounting,app:grail}).

Several questions remain open. We evaluate PULSELoCo up to 7B parameters on MATH and use modest local-update windows because our experiments keep rollout workers on a shared global checkpoint. As $H$ grows, rollouts become increasingly stale relative to each trainer's local weights, which can destabilize training. Future work should test whether other weight-synchronization approaches can support larger $H$. It should also evaluate PULSELoCo at a larger scale, with lower-precision forward passes, and on multi-turn agentic tasks, while exploring extensions beyond DiLoCo-style synchronization.


\bibliography{references}

\begin{thebibliography}{35}
\providecommand{\natexlab}[1]{#1}
\providecommand{\url}[1]{\texttt{#1}}
\expandafter\ifx\csname urlstyle\endcsname\relax
  \providecommand{\doi}[1]{doi: #1}\else
  \providecommand{\doi}{doi: \begingroup \urlstyle{rm}\Url}\fi

\bibitem[Alistarh et~al.(2017)Alistarh, Grubic, Li, Tomioka, and
  Vojnovic]{alistarh2017qsgd}
Dan Alistarh, Demjan Grubic, Jerry Li, Ryota Tomioka, and Milan Vojnovic.
\newblock {QSGD}: Communication-efficient {SGD} via gradient quantization and
  encoding.
\newblock In \emph{Advances in Neural Information Processing Systems},
  volume~30, 2017.

\bibitem[Austin et~al.(2021)Austin, Odena, Nye, Bosma, Michalewski, Dohan,
  Jiang, Cai, Terry, Le, and Sutton]{austin2021program}
Jacob Austin, Augustus Odena, Maxwell Nye, Maarten Bosma, Henryk Michalewski,
  David Dohan, Ellen Jiang, Carrie Cai, Michael Terry, Quoc Le, and Charles
  Sutton.
\newblock Program synthesis with large language models.
\newblock \emph{arXiv preprint arXiv:2108.07732}, 2021.

\bibitem[Balles and Hennig(2018)]{balles2018dissecting}
Lukas Balles and Philipp Hennig.
\newblock Dissecting {Adam}: The sign, magnitude and variance of stochastic
  gradients.
\newblock In \emph{Proceedings of the 35th International Conference on Machine
  Learning}, volume~80 of \emph{Proceedings of Machine Learning Research},
  pages 404--413. PMLR, 2018.

\bibitem[{DeepSeek-AI}(2024)]{deepseekv3}
{DeepSeek-AI}.
\newblock {DeepSeek-V3} technical report.
\newblock \emph{arXiv preprint arXiv:2412.19437}, 2024.

\bibitem[{DeepSeek-AI} et~al.(2025){DeepSeek-AI}, Guo, Yang, Zhang, Song, Wang,
  et~al.]{deepseekr1}
{DeepSeek-AI}, Daya Guo, Dejian Yang, Haowei Zhang, Junxiao Song, Peiyi Wang,
  et~al.
\newblock {DeepSeek-R1}: Incentivizing reasoning capability in {LLMs} via
  reinforcement learning.
\newblock \emph{arXiv preprint arXiv:2501.12948}, 2025.

\bibitem[Douillard et~al.(2023)Douillard, Feng, Rusu, Chhaparia, Donchev,
  Kuncoro, Ranzato, Szlam, and Shen]{douillard2023diloco}
Arthur Douillard, Qixuan Feng, Andrei~A. Rusu, Rachita Chhaparia, Yani Donchev,
  Adhiguna Kuncoro, Marc'Aurelio Ranzato, Arthur Szlam, and Jiajun Shen.
\newblock {DiLoCo}: Distributed low-communication training of language models.
\newblock \emph{arXiv preprint arXiv:2311.08105}, 2023.

\bibitem[{Gemini Team, Google}(2025)]{google2025gemini}
{Gemini Team, Google}.
\newblock {Gemini} 2.5: Pushing the frontier with advanced reasoning,
  multimodality, long context, and next generation agentic capabilities.
\newblock \emph{arXiv preprint arXiv:2507.06261}, 2025.

\bibitem[{Gemma Team}(2025)]{gemma3}
{Gemma Team}.
\newblock Gemma 3 technical report.
\newblock \emph{arXiv preprint arXiv:2503.19786}, 2025.

\bibitem[Grattafiori et~al.(2024)]{llama3}
Aaron Grattafiori et~al.
\newblock The llama 3 herd of models.
\newblock \emph{arXiv preprint arXiv:2407.21783}, 2024.

\bibitem[Hendrycks et~al.(2021)Hendrycks, Burns, Kadavath, Arora, Basart, Tang,
  Song, and Steinhardt]{hendrycks2021math}
Dan Hendrycks, Collin Burns, Saurav Kadavath, Akul Arora, Steven Basart, Eric
  Tang, Dawn Song, and Jacob Steinhardt.
\newblock Measuring mathematical problem solving with the {MATH} dataset.
\newblock In \emph{Advances in Neural Information Processing Systems (Datasets
  and Benchmarks Track)}, 2021.

\bibitem[Hu et~al.(2024)Hu, Wu, Shen, Liu, Zhu, Wang, Jiang, Wang, Chen, Chen,
  Fang, Xianyu, Cao, Xu, and Liu]{hu2024openrlhf}
Jian Hu, Xibin Wu, Wei Shen, Jason~Klein Liu, Zilin Zhu, Weixun Wang, Songlin
  Jiang, Haoran Wang, Hao Chen, Bin Chen, Weikai Fang, Xianyu, Yu~Cao, Haotian
  Xu, and Yiming Liu.
\newblock {OpenRLHF}: An easy-to-use, scalable and high-performance {RLHF}
  framework.
\newblock \emph{arXiv preprint arXiv:2405.11143}, 2024.

\bibitem[Khatri et~al.(2025)Khatri, Madaan, Tiwari, Bansal, Duvvuri, Zaheer,
  Dhillon, Brandfonbrener, and Agarwal]{scalerl}
Devvrit Khatri, Lovish Madaan, Rishabh Tiwari, Rachit Bansal, Sai~Surya
  Duvvuri, Manzil Zaheer, Inderjit~S. Dhillon, David Brandfonbrener, and
  Rishabh Agarwal.
\newblock The art of scaling reinforcement learning compute for {LLMs}.
\newblock \emph{arXiv preprint arXiv:2510.13786}, 2025.

\bibitem[Lee et~al.(2024)Lee, Phatale, Mansoor, Mesnard, Ferret, Lu, Bishop,
  Hall, Carbune, Rastogi, and Prakash]{lee2024rlaif}
Harrison Lee, Samrat Phatale, Hassan Mansoor, Thomas Mesnard, Johan Ferret,
  Kellie Lu, Colton Bishop, Ethan Hall, Victor Carbune, Abhinav Rastogi, and
  Sushant Prakash.
\newblock {RLAIF} vs. {RLHF}: Scaling reinforcement learning from human
  feedback with {AI} feedback.
\newblock In \emph{Proceedings of the 41st International Conference on Machine
  Learning}, volume 235 of \emph{Proceedings of Machine Learning Research},
  pages 26874--26901. PMLR, 2024.

\bibitem[Lin et~al.(2018)Lin, Han, Mao, Wang, and Dally]{lin2018deep}
Yujun Lin, Song Han, Huizi Mao, Yu~Wang, and William~J Dally.
\newblock Deep gradient compression: Reducing the communication bandwidth for
  distributed training.
\newblock In \emph{International Conference on Learning Representations}, 2018.

\bibitem[Liu et~al.(2025)Liu, Chen, Li, Qi, Pang, Du, Lee, and
  Lin]{liu2025drgrpo}
Zichen Liu, Changyu Chen, Wenjun Li, Penghui Qi, Tianyu Pang, Chao Du, Wee~Sun
  Lee, and Min Lin.
\newblock Understanding {R1}-zero-like training: A critical perspective.
\newblock \emph{arXiv preprint arXiv:2503.20783}, 2025.

\bibitem[Micikevicius et~al.(2018)Micikevicius, Narang, Alben, Diamos, Elsen,
  Garcia, Ginsburg, Houston, Kuchaiev, Venkatesh, and
  Wu]{micikevicius2018mixed}
Paulius Micikevicius, Sharan Narang, Jonah Alben, Gregory~F. Diamos, Erich
  Elsen, David Garcia, Boris Ginsburg, Michael Houston, Oleksii Kuchaiev,
  Ganesh Venkatesh, and Hao Wu.
\newblock Mixed precision training.
\newblock In \emph{International Conference on Learning Representations}, 2018.

\bibitem[Mukherjee et~al.(2025)Mukherjee, Yuan, Hakkani-T{\"u}r, and
  Peng]{mukherjee2024sparse}
Sagnik Mukherjee, Lifan Yuan, Dilek Hakkani-T{\"u}r, and Hao Peng.
\newblock Reinforcement learning finetunes small subnetworks in large language
  models.
\newblock In \emph{Advances in Neural Information Processing Systems}, 2025.
\newblock NeurIPS 2025.

\bibitem[Peng et~al.(2024)Peng, Chen, Su, Quesnelle, Kingma, and
  Liu]{peng2024demo}
Bowen Peng, Lizhang Chen, Baiyu Su, Jeffrey Quesnelle, Diederik~P. Kingma, and
  Qiang Liu.
\newblock {DeMo}: Decoupled momentum optimization.
\newblock \emph{arXiv preprint arXiv:2411.19870}, 2024.
\newblock URL \url{https://arxiv.org/abs/2411.19870}.

\bibitem[{Prime Intellect Team} et~al.(2025){Prime Intellect Team}, Jaghouar,
  Mattern, Ong, Straube, Basra, Pazdera, Thaman, Di~Ferrante, Gabriel, Obeid,
  Erdem, Keiblinger, and Hagemann]{primeintellectteam2025intellect2}
{Prime Intellect Team}, Sami Jaghouar, Justus Mattern, Jack~Min Ong, Jannik
  Straube, Manveer Basra, Aaron Pazdera, Kushal Thaman, Matthew Di~Ferrante,
  Felix Gabriel, Fares Obeid, Kemal Erdem, Michael Keiblinger, and Johannes
  Hagemann.
\newblock {INTELLECT-2}: A reasoning model trained through globally
  decentralized reinforcement learning, 2025.

\bibitem[{Qwen Team}(2024)]{qwen2.5}
{Qwen Team}.
\newblock Qwen2.5 technical report.
\newblock \emph{arXiv preprint arXiv:2412.15115}, 2024.

\bibitem[Schulman et~al.(2017)Schulman, Wolski, Dhariwal, Radford, and
  Klimov]{schulman2017proximal}
John Schulman, Filip Wolski, Prafulla Dhariwal, Alec Radford, and Oleg Klimov.
\newblock Proximal policy optimization algorithms.
\newblock \emph{arXiv preprint arXiv:1707.06347}, 2017.

\bibitem[Shao et~al.(2024)Shao, Wang, Zhu, Xu, Song, Bi, Zhang, Zhang, Li, Wu,
  and Guo]{shao2024deepseekmath}
Zhihong Shao, Peiyi Wang, Qihao Zhu, Runxin Xu, Junxiao Song, Xiao Bi, Haowei
  Zhang, Mingchuan Zhang, Y.K. Li, Y.~Wu, and Daya Guo.
\newblock {DeepSeekMath}: Pushing the limits of mathematical reasoning in open
  language models.
\newblock \emph{arXiv preprint arXiv:2402.03300}, 2024.

\bibitem[Shen et~al.(2024)Shen, Wang, Delalleau, Zeng, Dong, Egert, Sun, Zhang,
  Jain, Taghibakhshi, Sanz~Ausin, Aithal, and Kuchaiev]{shen2024nemoaligner}
Gerald Shen, Zhilin Wang, Olivier Delalleau, Jiaqi Zeng, Yi~Dong, Daniel Egert,
  Shengyang Sun, Jimmy Zhang, Sahil Jain, Ali Taghibakhshi, Markel Sanz~Ausin,
  Ashwath Aithal, and Oleksii Kuchaiev.
\newblock {NeMo-Aligner}: Scalable toolkit for efficient model alignment.
\newblock In \emph{Conference on Language Modeling (COLM)}, 2024.

\bibitem[Shenfeld et~al.(2025)Shenfeld, Pari, and Agrawal]{shenfeld2025rlrazor}
Idan Shenfeld, Jyothish Pari, and Pulkit Agrawal.
\newblock {RL}'s razor: Why online reinforcement learning forgets less.
\newblock \emph{arXiv preprint arXiv:2509.04259}, 2025.

\bibitem[Sheng et~al.(2025)Sheng, Zhang, Ye, Wu, Zhang, Zhang, Peng, Lin, and
  Wu]{sheng2025hybridflow}
Guangming Sheng, Chi Zhang, Zilingfeng Ye, Xibin Wu, Wang Zhang, Ru~Zhang,
  Yanghua Peng, Haibin Lin, and Chuan Wu.
\newblock {HybridFlow}: A flexible and efficient {RLHF} framework.
\newblock In \emph{Twentieth European Conference on Computer Systems (EuroSys
  '25)}, 2025.
\newblock \doi{10.1145/3689031.3696075}.

\bibitem[Srivastava and Aggarwal(2025)]{srivastava2025rlsurvey}
Saksham~Sahai Srivastava and Vaneet Aggarwal.
\newblock A technical survey of reinforcement learning techniques for large
  language models.
\newblock \emph{arXiv preprint arXiv:2507.04136}, 2025.

\bibitem[Stich(2019)]{stich2018localsgd}
Sebastian~U. Stich.
\newblock Local {SGD} converges fast and communicates little.
\newblock In \emph{International Conference on Learning Representations}, 2019.
\newblock URL \url{https://openreview.net/forum?id=S1g2JnRcFX}.

\bibitem[{Team OLMo} et~al.(2025{\natexlab{a}}){Team OLMo}, Walsh, Soldaini,
  Groeneveld, Lo, Arora, Bhagia, Gu, Huang, Jordan, Lambert, Schwenk, Tafjord,
  et~al.]{olmo2025olmo2}
{Team OLMo}, Pete Walsh, Luca Soldaini, Dirk Groeneveld, Kyle Lo, Shane Arora,
  Akshita Bhagia, Yuling Gu, Shengyi Huang, Matt Jordan, Nathan Lambert, Dustin
  Schwenk, Oyvind Tafjord, et~al.
\newblock 2 {OLMo} 2 furious.
\newblock \emph{arXiv preprint arXiv:2501.00656}, 2025{\natexlab{a}}.

\bibitem[{Team OLMo} et~al.(2025{\natexlab{b}})]{olmo2025olmo3}
{Team OLMo} et~al.
\newblock Olmo 3.
\newblock \emph{arXiv preprint arXiv:2512.13961}, 2025{\natexlab{b}}.

\bibitem[Touvron et~al.(2023)Touvron, Martin, Stone, Albert, Almahairi, Babaei,
  Bashlykov, Batra, Bhargava, Bhosale, et~al.]{touvron2023llama}
Hugo Touvron, Louis Martin, Kevin Stone, Peter Albert, Amjad Almahairi, Yasmine
  Babaei, Nikolay Bashlykov, Soumya Batra, Prajjwal Bhargava, Shruti Bhosale,
  et~al.
\newblock {Llama 2}: Open foundation and fine-tuned chat models.
\newblock \emph{arXiv preprint arXiv:2307.09288}, 2023.

\bibitem[Vogels et~al.(2019)Vogels, Karimireddy, and Jaggi]{vogels2019powersgd}
Thijs Vogels, Sai~Praneeth Karimireddy, and Martin Jaggi.
\newblock {PowerSGD}: Practical low-rank gradient compression for distributed
  optimization.
\newblock In \emph{Advances in Neural Information Processing Systems},
  volume~32, 2019.

\bibitem[von Werra et~al.(2020)von Werra, Belkada, Tunstall, Beeching, Thrush,
  Lambert, Huang, Rasul, and Gallou{\'e}dec]{vonwerra2022trl}
Leandro von Werra, Younes Belkada, Lewis Tunstall, Edward Beeching, Tristan
  Thrush, Nathan Lambert, Shengyi Huang, Kashif Rasul, and Quentin
  Gallou{\'e}dec.
\newblock {TRL}: Transformers reinforcement learning.
\newblock \url{https://github.com/huggingface/trl}, 2020.

\bibitem[Yang et~al.(2025)Yang, Li, Yang, Zhang, Hui, Zheng,
  et~al.]{yang2025qwen3}
An~Yang, Anfeng Li, Baosong Yang, Beichen Zhang, Binyuan Hui, Bo~Zheng, et~al.
\newblock Qwen3 technical report.
\newblock \emph{arXiv preprint arXiv:2505.09388}, 2025.

\bibitem[Yu et~al.(2025)]{yu2025dapo}
Qiying Yu et~al.
\newblock {DAPO}: An open-source {LLM} reinforcement learning system at scale.
\newblock \emph{arXiv preprint arXiv:2503.14476}, 2025.

\bibitem[Zhu et~al.(2025)Zhu, Zhang, Huang, Su, Liu, Zhao, Fedorov, Pirsiavash,
  Sha, Lee, Pan, Wang, Tian, and Tai]{zhu2025pathnotaken}
Hanqing Zhu, Zhenyu Zhang, Hanxian Huang, DiJia Su, Zechun Liu, Jiawei Zhao,
  Igor Fedorov, Hamed Pirsiavash, Zhizhou Sha, Jinwon Lee, David~Z. Pan,
  Zhangyang Wang, Yuandong Tian, and Kai~Sheng Tai.
\newblock The path not taken: {RLVR} provably learns off the principals.
\newblock \emph{arXiv preprint arXiv:2511.08567}, 2025.

\end{thebibliography}
\bibliographystyle{plainnat}

\newpage
\appendix

\paragraph{Appendix organization.}
The appendix is organized as follows. \Cref{app:sparsity_foundations} gives the mathematical and empirical details behind BF16 update absorption. \Cref{app:pulseloco_sparsity,app:compression} extend the main analysis to PULSELoCo sparse payloads, paired checkpoint patches, and codec selection. \Cref{app:grail} reports the PULSESync-only grail deployment. \Cref{app:experimental_details,app:extended_results} give experimental setup, the rationale for local-update windows, and additional results. \Cref{app:method_details,app:comparison,app:distributed_sync} collect protocol and implementation details.

\section{Sparsity Foundations}
\label{app:sparsity_foundations}
\label{app:theory}

\subsection{Formal Sparsity Definitions}
\label{app:sparsity_definitions}

We define the sparsity metrics used throughout the paper. These metrics quantify how many parameters remain unchanged in the compute view used by the next forward pass.

\begin{definition}[Compute-view Weight Update]
\label{def:weight_update}
Given model parameters $\theta_t \in \R^d$ at optimization step $t$ and compute dtype $D$, define $\bar{\theta}^{D}_t := \operatorname{cast}_{D}(\theta_t)$. The \textbf{$k$-step compute-view weight update} is:
\begin{equation}
    \Delta^D_{t,k} = \bar{\theta}^{D}_{t+k} - \bar{\theta}^{D}_t
\end{equation}
For consecutive steps ($k=1$), we simplify the notation to $\Delta^D_t = \bar{\theta}^{D}_{t+1} - \bar{\theta}^{D}_t$. The main paper uses $D=\mathrm{BF16}$.
\end{definition}

\begin{definition}[Update Sparsity]
\label{def:sparsity}
The \textbf{sparsity} of a $k$-step compute-view weight update is the fraction of parameters that remain bitwise identical between step $t$ and $t+k$ after casting to $D$:
\begin{equation}
    S^D_k(t) = \frac{1}{d} \sum_{i=1}^{d} \mathbb{1}[\bar{\theta}_{t+k}^{D,(i)} = \bar{\theta}_{t}^{D,(i)}]
\end{equation}
where $\mathbb{1}[\cdot]$ is the indicator function and equality is evaluated bitwise in dtype $D$. Higher values indicate fewer compute-visible parameter changes, enabling greater compression.
\end{definition}

The key insight exploited by \protoname is that $S^{\mathrm{BF16}}_1(t)$ is approximately 99\% in RL fine-tuning (\Cref{sec:sparsity_analysis}), enabling dramatic communication reduction.

\subsection{BF16 Precision and Update Absorption}
\label{app:bf16_details}

BF16 (bfloat16) uses 1 sign bit, 8 exponent bits, and 7 mantissa bits. The 7-bit mantissa provides $2^7 = 128$ distinct values between consecutive powers of two, making the smallest representable relative change approximately $\epsilon_{\text{bf16}} = 2^{-7} \approx 0.0078$. 

Critically, this representable gap \emph{scales with weight magnitude}. For example, between 1.0 and 2.0, the gap is $2^{-7} \approx 0.0078$, but between 8.0 and 16.0, the gap is $2^{-4} = 0.0625$ (8$\times$ larger).

\begin{definition}[Update Absorption]
\label{def:absorption}
An optimizer update $\Delta w$ to parameter $w$ is \emph{absorbed} if the BF16 representation remains unchanged: $\texttt{bf16}(w + \Delta w) = \texttt{bf16}(w)$. Equivalently, $w$ and $w+\Delta w$ remain in the same BF16 rounding cell. For a normalized BF16 value with $2^e \le |w| < 2^{e+1}$, the BF16 spacing is $2^{e-7}$, so half a unit in the last place (ULP) is:
\begin{equation}
    2^{e-8}
    \label{eq:absorption_threshold}
\end{equation}
Equivalently, this characteristic relative cell radius satisfies $2^{-9} < 2^{e-8}/|w| \le 2^{-8}$. For BF16-stored weights, an update smaller than this radius cannot leave the cell when starting from the cell center. For FP32 master weights, the exact threshold is the distance from $w$ to the nearest BF16 rounding boundary; this distance can be smaller if the FP32 accumulator is already near a boundary. Thus absorption and survival occur at a relative scale on the order of $2^{-8}$, while the exact criterion is always the bitwise cast comparison.
\end{definition}

The main-text \Cref{fig:bf16_threshold} illustrates this mechanism at two scales. Panel~(a) shows a local BF16 rounding interval: a one-step update can be absorbed by the next BF16 cast, while later FP32 updates can accumulate in the master weight until the cast changes. Panel~(b) shows the same boundary across weight magnitudes. The key insight is that absorption depends on the \emph{ratio} $|\Delta w|/|w|$, not $|\Delta w|$ alone. Small weights can change under small updates; large weights require proportionally larger updates.

\paragraph{Critical weight magnitude.}
Combining the characteristic BF16 cell scale with Adam's update bounds (\Cref{app:bounds}) yields a critical weight scale below which one-step updates are more likely to be visible. \Cref{cor:weight_threshold} formalizes this relationship; the key result is that for typical training regimes, weights with $|w| \gg 256\eta$ have per-step Adam updates that are small compared with a BF16 rounding cell, except when the FP32 master is already close to a cell boundary. At learning rate $\eta = 3 \times 10^{-6}$, this scale is $|w|_{\text{crit}} \approx 7.7 \times 10^{-4}$. Since typical LLM weights have magnitudes in $[0.01, 1.0]$ (\Cref{app:weight_magnitude}), the vast majority exceed this scale.

\paragraph{Empirical validation with mixed-precision training.}
\label{app:mixed_precision}
The standard configuration for LLM post-training uses mixed-precision training~\citep{micikevicius2018mixed}: the optimizer maintains FP32 master weights for numerical stability, while forward and backward passes execute in BF16. This differs from pure FP32 training, where computation and storage both use FP32 and sparsity is eliminated entirely~\citep{shenfeld2025rlrazor}.

In mixed-precision training, although the optimizer updates FP32 master weights (where small updates \emph{do} accumulate), inference still requires BF16 weights since all forward computation happens in BF16. Therefore, the relevant question for PULSESync is: \emph{how sparse are the weight updates when viewed in BF16?} We measure this by casting the FP32 master weights to BF16 after each optimization step and comparing consecutive BF16 snapshots. This reflects the actual weights that inference nodes receive and use.

\Cref{fig:mixed_precision_sparsity} validates that sparsity remains high under this standard setup. We train Qwen2.5-1.5B-Instruct with GRPO using FP32 master weights and BF16 computation; the resulting BF16-cast weight updates exhibit sparsity consistently above 99.4\%, comparable to the pure BF16 results in \Cref{sec:sparsity_analysis}.

This high sparsity persists because per-step updates are so small that even when accumulated in FP32, crossing a BF16 rounding cell typically requires many steps. At learning rate $\eta = 3 \times 10^{-6}$, a typical update magnitude is ${\sim}\eta$, while the characteristic BF16 cell radius for a weight with $|w| = 0.01$ is $|w|/256 \approx 4 \times 10^{-5}$. Since this scale is roughly $13\times$ larger than a single update, approximately 13 steps of consistent updates would be needed to cross a cell from its center. Consequently, at any given step, only a small fraction of weights have accumulated enough change to affect their BF16 representation. Unlike pure BF16 training where absorbed updates are permanently lost, mixed-precision updates do eventually manifest, but spread across many steps, preserving high per-step sparsity. Practitioners using standard mixed-precision pipelines benefit from this sparsity without modification.

\begin{figure}[t]
    \centering
    \includegraphics[width=0.85\linewidth]{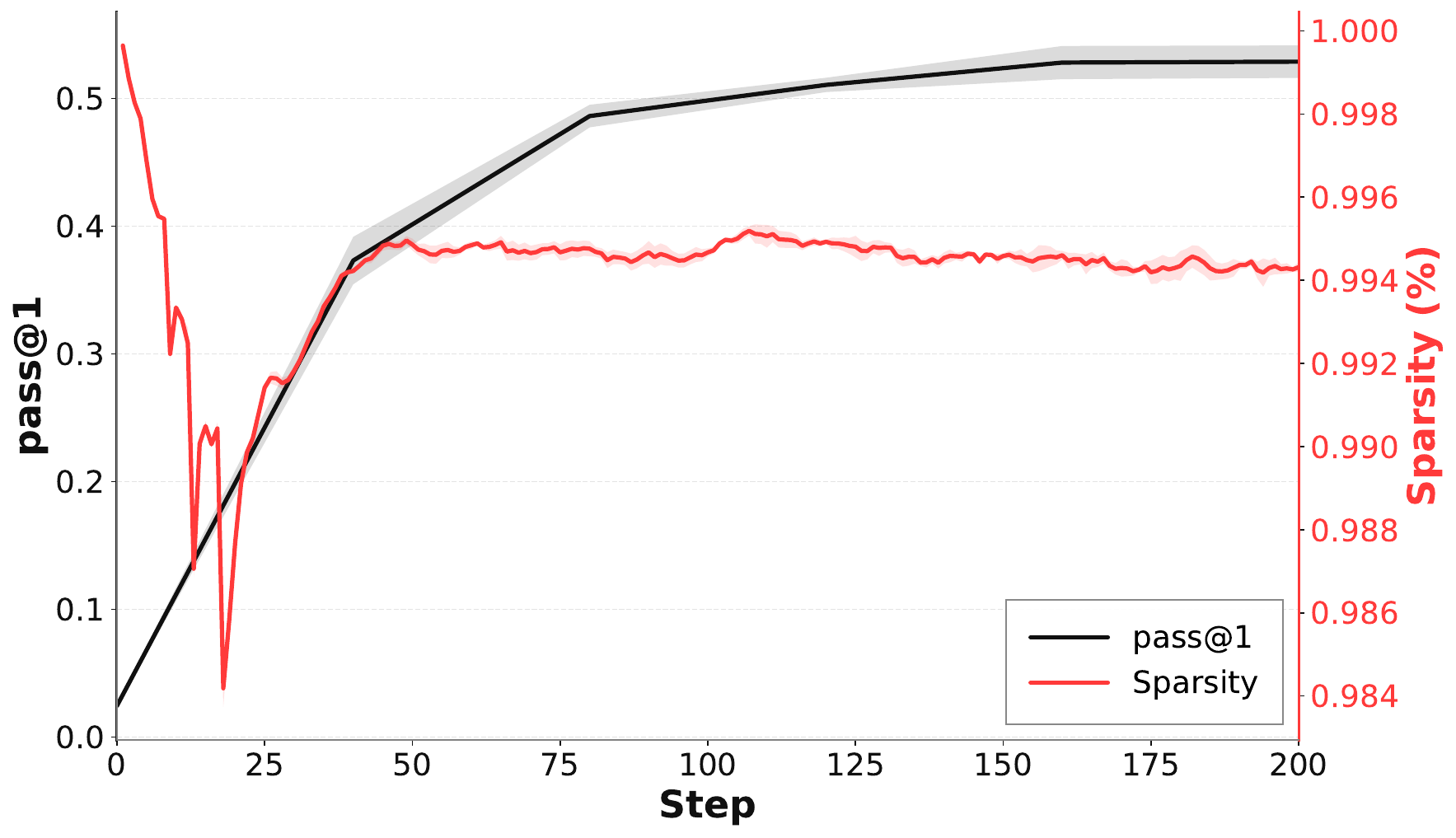}
    \caption{\textbf{Sparsity with mixed-precision training (FP32 master weights, BF16 computation).} Training Qwen2.5-1.5B-Instruct with GRPO on MATH tasks. Validation pass@1 improves steadily while weight update sparsity (measured by casting FP32 master weights to BF16 and comparing consecutive steps) remains consistently above 99.4\%. Shaded regions indicate $\pm$1 standard deviation across 4 seeds.}
    \label{fig:mixed_precision_sparsity}
\end{figure}

\subsection{Adam Update Bounds}
\label{app:bounds}

We derive an upper bound on the per-step update magnitude in Adam. This bound, combined with the BF16 rounding-cell scale, explains why most one-step updates remain compute-invisible after the BF16 cast.

\begin{theorem}[Adam Update Upper Bound]
\label{thm:adam_bound}
For one scalar parameter updated by Adam with hyperparameters $0 < \beta_1 < \beta_2 < 1$, learning rate $\eta$, non-negative numerical constant $\epsilon$, and no decoupled weight decay term, the update magnitude at step $t$ satisfies:
\begin{equation}
    |\Delta w_t| \leq \eta \sqrt{\frac{1-\beta_1}{1-\beta_2} \cdot \frac{1-\beta_2^t}{1-\beta_1^t}}
    \label{eq:adam_bound}
\end{equation}
As $t \to \infty$, this simplifies to:
\begin{equation}
    |\Delta w_t| \leq \eta \sqrt{\frac{1-\beta_1}{1-\beta_2}}
    \label{eq:adam_bound_asymptotic}
\end{equation}
\end{theorem}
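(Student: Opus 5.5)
We write the Adam recursion explicitly and bound the bias-corrected ratio by Cauchy--Schwarz. With $m_0=v_0=0$,
\[
m_t=(1-\beta_1)\sum_{s=1}^t \beta_1^{t-s} g_s,\qquad v_t=(1-\beta_2)\sum_{s=1}^t \beta_2^{t-s} g_s^2,
\]
and the update is
\[
\Delta w_t=-\eta\,\frac{\hat m_t}{\sqrt{\hat v_t}+\epsilon},\qquad \hat m_t=\frac{m_t}{1-\beta_1^t},\quad \hat v_t=\frac{v_t}{1-\beta_2^t}.
\]
Since $\epsilon\ge 0$, we have $|\Delta w_t|\le \eta|\hat m_t|/\sqrt{\hat v_t}$ whenever $v_t>0$. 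If $v_t=0$, then every $g_s=0$, so $m_t=0$ and $\Delta w_t=0$; this case is trivial, under the usual convention that $0/(0+\epsilon)=0$.

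The key step is to bound $|m_t|$ by $\sqrt{v_t}$. Split $\beta_1^{t-s}=\beta_2^{(t-s)/2}\cdot(\beta_1/\sqrt{\beta_2})^{t-s}$ and apply Cauchy--Schwarz:
\[
|m_t|\le (1-\beta_1)\Big(\sum_{s}\beta_2^{t-s}g_s^2\Big)^{1/2}\Big(\sum_{k=0}^{t-1}(\beta_1^2/\beta_2)^k\Big)^{1/2}.
\]
The first factor equals $\sqrt{v_t/(1-\beta_2)}$. This gives
\[
\frac{|\hat m_t|}{\sqrt{\hat v_t}}\le \frac{1-\beta_1}{1-\beta_1^t}\sqrt{\frac{1-\beta_2^t}{1-\beta_2}}\,\sqrt{S_t},\qquad S_t=\sum_{k=0}^{t-1}(\beta_1^2/\beta_2)^k.
\]

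The main obstacle is that the stated bound is
\[
\sqrt{\frac{(1-\beta_1)(1-\beta_2^t)}{(1-\beta_2)(1-\beta_1^t)}},
\]
so it remains to show
\[
(1-\beta_1)\,S_t\le 1-\beta_1^t,\qquad\text{i.e.}\qquad S_t\le \sum_{k=0}^{t-1}\beta_1^k.
\]
The crude bound $S_t\le 1/(1-\beta_1^2/\beta_2)$ is not sharp enough. The inequality holds termwise exactly when $\beta_1^2/\beta_2\le\beta_1$, i.e.\ $\beta_1\le\beta_2$, which is the hypothesis $\beta_1<\beta_2$. Indeed the default values $0.9<0.999$ satisfy it. So the argument is a termwise comparison of geometric series. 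Alternatively, one can weight Cauchy--Schwarz by $\beta_1^{t-s}$ directly: write $\beta_1^{t-s}|g_s|=\beta_1^{(t-s)/2}\cdot\beta_1^{(t-s)/2}|g_s|$, bound
\[
\Big(\sum_s\beta_1^{t-s}|g_s|\Big)^2\le \Big(\sum_s\beta_1^{t-s}\Big)\Big(\sum_s\beta_1^{t-s}g_s^2\Big),
\]
and then use $\beta_1^{t-s}\le\beta_2^{t-s}$. This reaches the same bound more transparently, and I would present this version.

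Finally, the asymptotic statement follows by letting $t\to\infty$. Since $\beta_1^t,\beta_2^t\to0$, the factor $(1-\beta_2^t)/(1-\beta_1^t)\to1$, which gives $\eta\sqrt{(1-\beta_1)/(1-\beta_2)}$. We also remark that for the PyTorch defaults this is $\eta\sqrt{100}=10\eta$, matching the absorption bound used in \Cref{fig:bf16_threshold}.
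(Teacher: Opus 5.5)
Your proof is correct, and the version you say you would present is the paper's argument with the two steps in the opposite order. The paper writes $\hat m_t=\sum_i p_i g_i$ and $\hat v_t=\sum_i q_i g_i^2$ with normalized weights $p_i\propto\beta_1^{t-i}$, $q_i\propto\beta_2^{t-i}$. It first uses the uniform ratio bound $q_i\ge c_t p_i$, which is your comparison $\beta_1^{t-s}\le\beta_2^{t-s}$, and then applies Jensen under the weights $p_i$, which is your weighted Cauchy--Schwarz step.

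Your first route is a genuine small bonus. Its intermediate quantity is exactly the sharp supremum $\bigl(\sum_i p_i^2/q_i\bigr)^{1/2}$, which the paper records only in a later remark. It therefore gives the tight constant (about $7.27$ rather than $10$ for $(0.9,0.999)$) before you relax it with the termwise geometric comparison.
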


\begin{proof}
The Adam update is $\Delta w_t = \eta \cdot \rho_t$ where $\rho_t = \hat{m}_t / (\sqrt{\hat{v}_t} + \epsilon)$. Our goal is to bound $|\rho_t|$, which requires showing that $\hat{v}_t$ cannot be too small relative to $\hat{m}_t^2$.

\textbf{Step 1: Express moments as weighted averages.} The bias-corrected moments can be written as weighted sums over the gradient history:
\begin{equation}
    \hat{m}_t = \sum_{i=1}^{t} p_i g_i, \qquad \hat{v}_t = \sum_{i=1}^{t} q_i g_i^2
\end{equation}
where the weights $p_i$ and $q_i$ are non-negative and sum to 1:
\begin{equation}
    p_i = \frac{(1-\beta_1)\beta_1^{t-i}}{1-\beta_1^t}, \qquad q_i = \frac{(1-\beta_2)\beta_2^{t-i}}{1-\beta_2^t}
\end{equation}
These weights arise from expanding the EMA recursion $m_t = \beta_1 m_{t-1} + (1-\beta_1)g_t$ and applying bias correction.

\textbf{Step 2: Compare the weight distributions.} Since $\beta_2 > \beta_1$, the $q_i$ weights decay more slowly than the $p_i$ weights, placing relatively more mass on older gradients. Crucially, the ratio $q_i/p_i = \frac{(1-\beta_2)}{(1-\beta_1)} \cdot \left(\frac{\beta_2}{\beta_1}\right)^{t-i} \cdot \frac{1-\beta_1^t}{1-\beta_2^t}$ is minimized at $i=t$ (the most recent gradient), giving a uniform lower bound:
\begin{equation}
    \frac{q_i}{p_i} \geq c_t \quad \text{for all } i, \qquad \text{where } c_t = \frac{1-\beta_2}{1-\beta_1} \cdot \frac{1-\beta_1^t}{1-\beta_2^t}
\end{equation}

\textbf{Step 3: Lower bound $\hat{v}_t$.} Using the weight ratio bound, we can relate $\hat{v}_t$ to $\hat{m}_t$:
\begin{align}
    \hat{v}_t = \sum_{i=1}^{t} q_i g_i^2
    &\geq c_t \sum_{i=1}^{t} p_i g_i^2 && \text{(since } q_i \geq c_t \cdot p_i \text{)} \\
    &\geq c_t \left(\sum_{i=1}^{t} p_i g_i\right)^2 && \text{(Jensen's inequality: } \mathbb{E}[X^2] \geq \mathbb{E}[X]^2 \text{)} \\
    &= c_t \cdot \hat{m}_t^2
\end{align}

\textbf{Step 4: Conclude.} Taking square roots and rearranging:
\begin{equation}
    \frac{|\hat{m}_t|}{\sqrt{\hat{v}_t}} \leq \frac{1}{\sqrt{c_t}} = \sqrt{\frac{1-\beta_1}{1-\beta_2} \cdot \frac{1-\beta_2^t}{1-\beta_1^t}}
\end{equation}
Since $\sqrt{\hat{v}_t} + \epsilon \geq \sqrt{\hat{v}_t}$, we have $|\rho_t| \leq 1/\sqrt{c_t}$, and thus $|\Delta w_t| = \eta|\rho_t| \leq \eta/\sqrt{c_t}$.
\end{proof}

\paragraph{Validity of the condition $\beta_2 > \beta_1$.}
The theorem requires $\beta_2 > \beta_1$, which holds across standard AdamW configurations used for LLM training. The choice of $\beta_2 = 0.95$ (rather than the default 0.999) has become prevalent in modern LLM training and post-training pipelines. This yields a tighter bound of $\sqrt{(1-0.9)/(1-0.95)} = \sqrt{2} \approx 1.41$ compared to $\sqrt{100} = 10$ for PyTorch defaults. If decoupled AdamW weight decay with coefficient $\lambda$ is enabled, the per-parameter update receives an additional term of magnitude $\eta\lambda |w_t|$; our sparsity experiments set weight decay to zero (\Cref{app:hyperparameters}).

\paragraph{Implications for standard hyperparameters.}
For the PyTorch default parameters $(\beta_1, \beta_2) = (0.9, 0.999)$:
\begin{equation}
    |\Delta w_t| \leq \eta \sqrt{\frac{0.1}{0.001}} = 10\eta
\end{equation}
With learning rate $\eta = 3 \times 10^{-6}$, this gives $|\Delta w_t| \leq 3 \times 10^{-5}$.

\Cref{tab:adamw_configs} summarizes the hyperparameters used by major LLM training pipelines. Modern LLM training often uses $\beta_2 = 0.95$, which yields a tighter bound of $\sqrt{2}\eta \approx 1.41\eta$ compared to $10\eta$ for the PyTorch default ($\beta_2 = 0.999$).

\paragraph{Why the sparsity analysis uses $\beta_2=0.999$.}
The controlled sparsity characterization in \Cref{sec:sparsity_analysis} uses the PyTorch-default Adam setting $(\beta_1,\beta_2)=(0.9,0.999)$, while the grail deployment study and PULSELoCo experiments use the post-training setting $\beta_2=0.95$. This split is intentional. The larger $\beta_2=0.999$ gives the looser worst-case bound $|\Delta w_t| \le 10\eta$, whereas $\beta_2=0.95$ gives $|\Delta w_t| \le \sqrt{2}\eta$. Thus, observing ${\sim}99\%$ BF16-visible sparsity under $\beta_2=0.999$ is a conservative stress test of the absorption mechanism relative to the $\beta_2=0.95$ regime used in the deployment and PULSELoCo evaluations. In typical, non-adversarial gradient histories, the ratio $|\hat{m}_t|/\sqrt{\hat{v}_t}$ remains close to 1, so the effective critical scale is governed primarily by $\eta$ and the BF16 cell size rather than by this worst-case $\beta_2$ bound.

\begin{table}[t]
\centering
\caption{Adam hyperparameters used by major LLM training pipelines. All configurations satisfy $\beta_2 > \beta_1$.}
\label{tab:adamw_configs}
\vspace{0.5em}
\small
\begin{tabular}{@{}lcccc@{}}
\toprule
\textbf{Model / Framework} & $\boldsymbol{\beta_1}$ & $\boldsymbol{\beta_2}$ & \textbf{Asymptotic Bound} & \textbf{Reference} \\
\midrule
PyTorch default & 0.9 & 0.999 & $10\eta$ & -- \\
LLaMA 2/3 & 0.9 & 0.95 & $\sqrt{2}\eta \approx 1.41\eta$ & \citet{touvron2023llama,llama3} \\
DeepSeek-V3/R1 & 0.9 & 0.95 & $\sqrt{2}\eta \approx 1.41\eta$ & \citet{deepseekv3} \\
Qwen 2.5 & 0.9 & 0.95 & $\sqrt{2}\eta \approx 1.41\eta$ & \citet{qwen2.5} \\
OLMo 2 & 0.9 & 0.95 & $\sqrt{2}\eta \approx 1.41\eta$ & \citet{olmo2025olmo2} \\
\textbf{This work} (controlled sparsity analysis) & 0.9 & 0.999 & $10\eta$ & \Cref{app:hyperparameters} \\
\textbf{This work} (grail / PULSELoCo) & 0.9 & 0.95 & $\sqrt{2}\eta \approx 1.41\eta$ & \Cref{app:hyperparameters} \\
\bottomrule
\end{tabular}
\end{table}

\begin{corollary}[Weight Magnitude Scale for BF16]
\label{cor:weight_threshold}
For a weight $w$ to receive a non-absorbed update in BF16 arithmetic, the update must cross the nearest BF16 rounding boundary. A characteristic scale for this boundary distance is half a BF16 ULP, giving $|\Delta w| / |w| \approx 2^{-8}$ within a factor of two (cf.\ \Cref{def:absorption}). Combined with \Cref{thm:adam_bound}, this gives the characteristic weight scale:
\begin{equation}
    |w| < 256 \cdot |\Delta w|_{\max} = 256 \eta \sqrt{\frac{1-\beta_1}{1-\beta_2}}
\end{equation}
For PyTorch defaults $(\beta_1, \beta_2) = (0.9, 0.999)$, this simplifies to $|w| < 2560\eta$. For modern LLM configurations with $\beta_2 = 0.95$, this becomes $|w| < 362\eta$. These are scales rather than hard deterministic thresholds for FP32 master weights, because residual accumulation can place a parameter close to a BF16 rounding boundary. Our controlled sparsity analysis (\Cref{sec:sparsity_analysis}) uses PyTorch defaults, while the grail deployment study (\Cref{app:grail}) and PULSELoCo experiments (\Cref{sec:experiments}) use $\beta_2 = 0.95$.

In practice, the ratio $|\hat{m}_t|/\sqrt{\hat{v}_t} \approx 1$ for most gradient patterns (see \Cref{fig:adam_ratio_adversarial}), yielding an \emph{effective} scale that is independent of $\beta_2$:
\begin{equation}
    |w|_{\text{crit}}^{\text{effective}} \approx 256\eta \approx 7.68 \times 10^{-4} \quad \text{(for } \eta = 3 \times 10^{-6}\text{)}.
\end{equation}
This effective scale predicts practical sparsity, explaining why observed sparsity is consistent across different $(\beta_1, \beta_2)$ configurations.
\end{corollary}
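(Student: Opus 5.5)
The corollary follows by combining the BF16 cell scale from \Cref{def:absorption} with the asymptotic Adam bound \eqref{eq:adam_bound_asymptotic} of \Cref{thm:adam_bound}. The argument reads most cleanly as a contrapositive. If a one-step update $\Delta w$ makes $\texttt{bf16}(w+\Delta w)\neq\texttt{bf16}(w)$, then $|\Delta w|$ must be at least the distance from $w$ to the nearest BF16 rounding boundary. I first treat the reference case where $w$ sits at (or near) a cell center. Writing $2^e\le|w|<2^{e+1}$, that distance is the half-ULP $2^{e-8}$, and \Cref{def:absorption} gives $2^{-9}|w|<2^{e-8}\le 2^{-8}|w|$. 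So a visible update needs $|\Delta w|\gtrsim |w|/256$, up to the stated factor of two.

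Next I invoke \Cref{thm:adam_bound}. The finite-$t$ factor $\frac{1-\beta_2^t}{1-\beta_1^t}$ is at most $1$, because $\beta_2>\beta_1$ gives $1-\beta_2^t\le 1-\beta_1^t$. Hence for every $t$ we have $|\Delta w_t|\le|\Delta w|_{\max}:=\eta\sqrt{(1-\beta_1)/(1-\beta_2)}$. Chaining the two inequalities, $|w|/256\lesssim|\Delta w|\le|\Delta w|_{\max}$, which rearranges to $|w|<256\,|\Delta w|_{\max}$. Contrapositively, weights above this scale (started near cell centers) are absorbed.

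The numerical specializations are direct substitution. For $(0.9,0.999)$ the factor is $\sqrt{100}=10$, giving $2560\eta$. For $(0.9,0.95)$ it is $\sqrt2$, giving $256\sqrt2\,\eta\approx362\eta$. The effective scale replaces the worst-case ratio $|\hat m_t|/\sqrt{\hat v_t}$ by its typical value $\approx1$, which yields $256\eta\approx7.68\times10^{-4}$ at $\eta=3\times10^{-6}$. This part is an empirical approximation that I would support by pointing to \Cref{fig:adam_ratio_adversarial}, not a deduction.

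The main obstacle is the FP32 master weight, which need not sit at a cell center. Its distance to a boundary can be arbitrarily small, so no deterministic threshold holds. I would handle this the way the statement does, by framing the result as a characteristic scale. Concretely, I would state the rigorous half only for BF16-stored weights, which lie exactly on representable values, where half-ULP is the true minimum crossing distance. I would also absorb the factor-of-two ambiguity in $2^{-9}$ versus $2^{-8}$ into the word ``characteristic.'' For FP32 masters I would add a heuristic remark: if the offset within a cell is roughly uniform, the probability of crossing in one step is about $|\Delta w|$ divided by the ULP, which is small when $|w|\gg256|\Delta w|_{\max}$. That remark explains the sparsity without claiming a hard bound.
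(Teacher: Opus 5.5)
Your proposal is correct and follows the paper's own derivation: it chains the half-ULP cell scale from the absorption definition with the asymptotic Adam bound, substitutes the two $(\beta_1,\beta_2)$ settings, and treats the effective scale as resting on the empirical $|\hat m_t|/\sqrt{\hat v_t}\approx 1$ heuristic. Your check that $(1-\beta_2^t)/(1-\beta_1^t)\le 1$ adds a detail the paper leaves implicit, since it is what justifies using the asymptotic expression as a maximum valid at every $t$. One small correction to your ``rigorous half'': a BF16-stored weight at an exact power of two, $|w|=2^e$, has downward crossing distance $2^{e-9}=|w|/512$, not $2^{e-8}$, so the sharp statement is that visibility requires $|\Delta w|>|w|/512$, which still lies within the factor-of-two band you and the paper allow.
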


\subsection{Weight Magnitude Distribution}
\label{app:weight_magnitude}

To validate that the critical scale $|w|_{\text{crit}}$ is relevant to actual LLM weight distributions, we analyze weight magnitudes across several model families. \Cref{tab:weight_magnitude} shows that the vast majority of LLM weights have magnitudes well above the critical scale.

\begin{table}[t]
    \centering
    \caption{Weight magnitude statistics across model families. The characteristic scale for BF16 update survival at learning rate $\eta = 3 \times 10^{-6}$ is $|w|_{\text{crit}} \approx 7.7 \times 10^{-4}$ (typical, ratio $\approx 1$). Weights above this scale are likely to absorb per-step updates in BF16, explaining the observed ${\sim}$99\% per-step sparsity.}
    \label{tab:weight_magnitude}
    \vspace{0.5em}
    \small
    \begin{tabular}{@{}lccccc@{}}
    \toprule
    \textbf{Model} & \textbf{Median $|w|$} & \textbf{Mean $|w|$} & \textbf{5th \%ile} & \textbf{95th \%ile} & \textbf{\% $> |w|_{\text{crit}}$} \\
    \midrule
    Qwen2.5-0.5B & 0.0114 & 0.0145 & 0.0010 & 0.0374 & 96.2\% \\
    Qwen2.5-1.5B & 0.0177 & 0.0218 & 0.0016 & 0.0557 & 97.6\% \\
    Llama-3.2-3B & 0.0121 & 0.0149 & 0.0011 & 0.0381 & 96.5\% \\
    Gemma-3-4B & 0.0098 & 0.0157 & 0.0008 & 0.0369 & 95.3\% \\
    Qwen2.5-7B & 0.0099 & 0.0124 & 0.0008 & 0.0320 & 94.8\% \\
    \bottomrule
    \end{tabular}
\end{table}

\paragraph{Connection to observed sparsity.}
The median weight magnitude ranges from ${\sim}0.010$ (Qwen2.5-7B) to ${\sim}0.018$ (Qwen2.5-1.5B), placing it roughly $13$--$23\times$ above the critical scale. Across all five models, 94.8--97.6\% of weights exceed the scale. This scale-based estimate is conservative: the empirically observed sparsity of ${\sim}$99\% (\Cref{sec:sparsity_analysis}) is higher because even among the 2.4--5.2\% of weights below the effective scale, many still have their updates absorbed due to gradient oscillation (which reduces $|m_t|$ via cancellation) and the FP32 master sitting near the center of its current BF16 cell. \Cref{fig:bf16_threshold} in the main text visualizes this relationship.

\paragraph{The bound is loose, not a tight supremum.}
The bound $|\rho_t| \leq \sqrt{(1-\beta_1)/(1-\beta_2)}$ is an \emph{upper bound}, not a tight supremum. A sharper per-parameter supremum over nonzero gradient histories follows from Cauchy's inequality:
\begin{equation}
    \sup_{\{g_i\}_{i=1}^t \neq 0}
    \frac{|\sum_{i=1}^t p_i g_i|}{\sqrt{\sum_{i=1}^t q_i g_i^2}}
    =
    \left(\sum_{i=1}^t \frac{p_i^2}{q_i}\right)^{1/2}.
\end{equation}
For standard settings with $\beta_1^2 < \beta_2$, the infinite-horizon value is
\begin{equation}
    \frac{1-\beta_1}{\sqrt{(1-\beta_2)(1-\beta_1^2/\beta_2)}}.
\end{equation}
This equals approximately $7.27$ for $(\beta_1,\beta_2)=(0.9,0.999)$ and $1.16$ for $(0.9,0.95)$, compared with the simpler bounds $10$ and $1.41$. We use the looser bound in \Cref{thm:adam_bound} because it has a simpler form and is sufficient to identify the BF16 cell scale that drives sparsity.

\paragraph{Approaching the bound with adversarial sequences.}
To understand how close we can get to the absorption bound, consider the following adversarial gradient sequence: a long ``quiet'' period of near-zero gradients followed by constant large gradients. This exploits the fact that $\beta_2 > \beta_1$ causes the second moment $v_t$ to respond more slowly than the first moment $m_t$.

\begin{figure}[t]
\centering
\begin{tikzpicture}
    \definecolor{covenantred}{HTML}{FF3A3A}
    \definecolor{covenantblack}{HTML}{101010}
    \definecolor{covenantgray}{HTML}{828282}
    \begin{axis}[
        width=0.9\linewidth,
        height=5.5cm,
        xlabel={Steps after quiet period},
        ylabel={Ratio $|\hat{m}_t|/\sqrt{\hat{v}_t}$},
        xmin=0, xmax=55,
        ymin=0, ymax=11,
        grid=major,
        grid style={dashed, gray!15},
        legend pos=north east,
        legend style={font=\footnotesize, fill=white, fill opacity=0.95, draw=none},
    ]

    \addplot[dashed, very thick, color=covenantgray, domain=0:55, samples=2] {10};
    \addlegendentry{Absorption bound: $10$}

    \addplot[very thick, color=covenantred, mark=*, mark size=1.5pt] coordinates {
        (1, 3.16) (2, 4.25) (3, 4.95) (4, 5.44) (5, 5.80)
        (6, 6.06) (7, 6.24) (8, 6.38) (9, 6.47) (10, 6.53)
        (11, 6.56) (12, 6.57) (13, 6.56) (14, 6.54) (15, 6.51)
        (16, 6.46) (17, 6.41) (18, 6.35) (19, 6.29) (20, 6.24)
        (25, 5.92) (30, 5.57) (35, 5.22) (40, 4.89) (45, 4.59) (50, 4.32)
    };
    \addlegendentry{Adversarial sequence}

    \node[font=\scriptsize, color=covenantred, anchor=south] at (axis cs:12, 6.7) {Peak: $6.57$};

    \addplot[dotted, thick, color=covenantblack, domain=0:55, samples=2] {1};
    \addlegendentry{Typical (constant $g$): $1$}

    \end{axis}
\end{tikzpicture}
\caption{\textbf{Ratio $|\hat{m}_t|/\sqrt{\hat{v}_t}$ for an adversarial gradient sequence.} The sequence consists of $10^5$ near-zero gradients followed by constant gradients of magnitude 1. The ratio peaks at 6.57 after 12 large gradients, then decays as $v_t$ catches up. Despite this extreme construction, the ratio only reaches 66\% of the absorption bound of 10. For constant gradients (typical case), the ratio equals 1.}
\label{fig:adam_ratio_adversarial}
\end{figure}

\Cref{fig:adam_ratio_adversarial} shows the ratio $|\hat{m}_t|/\sqrt{\hat{v}_t}$ for the sequence $[10^{-20}] \times 10^5 + [1.0] \times k$. Key observations:
\begin{itemize}[leftmargin=*, itemsep=2pt]
    \item The ratio peaks at \textbf{6.57} after 12 large gradients, only 66\% of the absorption bound
    \item After the peak, $v_t$ accumulates and the ratio decays back toward 1
    \item For constant gradients (the typical case in training), the ratio equals exactly 1
    \item Even this highly adversarial sequence, which requires $10^5$ steps of setup, cannot approach the bound
\end{itemize}

This analysis confirms that the $10\eta$ bound is loose in practice: the ratio $|\hat{m}_t|/\sqrt{\hat{v}_t}$ rarely exceeds 2 for realistic gradient sequences encountered during training.

\paragraph{Why typical training stays near $\rho = 1$.}
The adversarial construction reveals exactly what is required to push $\rho$ above 1: a sudden \emph{distribution shift} where a long period of small gradients is followed by large gradients. The fast-responding first moment ($\beta_1 = 0.9$, half-life $\approx 7$ steps) spikes immediately, while the slow-responding second moment ($\beta_2 = 0.999$, half-life $\approx 700$ steps) takes much longer to catch up. This transient mismatch allows $\rho > 1$ temporarily.

RL fine-tuning lacks the adversarial structure required to push $\rho$ significantly above 1. The adversarial construction requires a long ``quiet'' period of near-zero gradients followed by sudden large gradients. As shown in \Cref{app:gradient_sparsity}, gradients are dense throughout training (${\sim}$99\% non-zero at every step), precluding such quiet periods. The consistently high observed sparsity (${\sim}$99\%) is consistent with $\rho$ remaining near 1, though directly verifying this would require measuring $|\hat{m}_t|/\sqrt{\hat{v}_t}$ during training.

\paragraph{Mean-field intuition.} A simple argument further supports the expectation that $\rho \approx 1$ under stable training conditions. If gradient statistics are approximately stable over time (mean $\mu$, variance $\sigma^2$), then Adam's bias-corrected moments converge to $\hat{m}_t \to \mu$ and $\hat{v}_t \to \mu^2 + \sigma^2$, giving a ratio of $|\mu|/\sqrt{\mu^2 + \sigma^2} \le 1$. In other words, when gradients have any variance at all, the second moment grows faster than the first, keeping the ratio below 1. This is only a heuristic: it bounds the ratio of expectations rather than the per-step ratio $\rho_t$ itself, and we have not verified stationarity empirically. We plan to validate this by measuring $|\hat{m}_t|/\sqrt{\hat{v}_t}$ directly during training in future work. Nonetheless, combined with the consistently high observed sparsity (${\sim}$99\%), it motivates the \emph{effective} critical scale $|w|_{\text{crit}} \approx 256\eta$ rather than the worst-case scale $|w|_{\text{crit}} = 2560\eta$ derived from the upper bound in \Cref{thm:adam_bound}.

\subsection{Conditions for Sparse Adam Updates}
\label{app:sparse_conditions}

\Cref{tab:sparse_conditions} enumerates the conditions under which Adam updates become sparse (absorbed by BF16 precision). We detail each condition below.

\paragraph{Condition 1: Very small gradients ($|g| \ll \epsilon$).}
When gradients are extremely small (e.g., $|g| = 10^{-12}$), the Adam update simplifies to $|\Delta w| \approx \eta \cdot |g| / \epsilon$. With $\eta = 3 \times 10^{-6}$ and $\epsilon = 10^{-8}$, this gives $|\Delta w| \approx 3 \times 10^{-10}$, far below the absorption threshold for any weight magnitude. This condition is rare in practice since RL gradients are typically dense and non-negligible.

\paragraph{Condition 2: Oscillating gradients ($m_t \to 0$).}
When gradients oscillate around zero (e.g., alternating $+g$ and $-g$), the first moment $m_t$ cancels while the second moment $v_t$ accumulates: $v_t \approx g^2$. This yields $|\Delta w| \approx \eta \cdot 0 / |g| \approx 0$. This occurs for parameters where the gradient sign changes frequently across batches.

\paragraph{Remark: Adam updates depend on temporal dynamics, not gradient magnitude.}
A key property of Adam is that update magnitude depends primarily on \emph{how gradients change over time}, not their absolute scale. When $\sqrt{\hat{v}_t} \gg \epsilon$ (which holds for typical gradient magnitudes $|g| \gg 10^{-8}$), the ratio $\hat{m}_t/\sqrt{\hat{v}_t}$ is approximately scale-invariant: scaling all gradients by $k$ scales both numerator and denominator equally. For constant gradients in this regime, $\rho \approx 1$ regardless of magnitude, so $|\Delta w| \approx \eta$. What causes $\rho$ to deviate from 1 is \emph{temporal variation}: when gradient statistics shift, the fast-responding $m_t$ ($\beta_1 = 0.9$) and slow-responding $v_t$ ($\beta_2 = 0.999$) temporarily diverge (\Cref{fig:adam_ratio_adversarial}). This approximate scale-invariance explains why Adam's sparsity behavior is predictable across different gradient regimes.

\paragraph{Condition 3: Large weight magnitudes ($|w| \gtrsim 10^{-2}$).}
The BF16 rounding-cell radius scales with weight magnitude: updates smaller than $|w|/256$ remain inside the cell when the FP32 master is near the cell center. Even at the absorption bound ($|\Delta w| \approx 10\eta$), weights with $|w| > 2560\eta$ have a single-step update that is small relative to a cell, so a step rarely crosses a cell boundary. For $\eta = 3 \times 10^{-6}$, this scale is $|w| > 7.68 \times 10^{-3}$. Since typical LLM weights have $|w| \approx 0.01$, this is the \textbf{dominant} driver of sparsity.

\paragraph{Condition 4: Small learning rate.}
The learning rate $\eta$ directly scales all updates: $|\Delta w| = \eta \cdot |\rho_t|$. Standard RL fine-tuning uses $\eta \approx 10^{-6}$, which is 100--1000$\times$ smaller than pre-training rates. This amplifies the absorption effect, making it a \textbf{dominant} factor alongside weight magnitudes.

\paragraph{Condition 5: Momentum effects from $\beta_2$.}
The second moment decay rate $\beta_2$ affects how $v_t$ tracks gradient history. With high $\beta_2$ (e.g., 0.999, half-life $\approx 700$ steps), past gradients influence $v_t$ for longer. This can suppress updates when past gradients were large (inflated $v_t$), but can also amplify updates when past gradients were small (the adversarial case enabling $\rho > 1$). Lower $\beta_2$ (e.g., 0.95) yields a tighter absorption bound ($\sqrt{2}\eta$ vs $10\eta$). In stationary regimes where $\rho \approx 1$, the choice of $\beta_2$ has minimal effect on sparsity.

\begin{table}[t]
\centering
\caption{Conditions leading to sparse Adam updates in RL fine-tuning.}
\label{tab:sparse_conditions}
\vspace{0.5em}
\small
\begin{tabular}{@{}p{3.5cm}p{5cm}p{3cm}@{}}
\toprule
\textbf{Condition} & \textbf{Mechanism} & \textbf{Prevalence} \\
\midrule
Small gradients ($|g| \ll \epsilon$) & Update numerator $\to 0$ & Rare \\
Oscillating gradients & $m_t \to 0$ by cancellation & Moderate \\
Large weights ($|w| > 10^{-3}$) & BF16 threshold $|w|/256$ too high & \textbf{Dominant} \\
Small learning rate ($\eta = 3 \times 10^{-6}$) & All updates scaled down & \textbf{Dominant} \\
$\beta_2$ momentum effects & $v_t$ lags gradient changes & Context-dependent \\
\bottomrule
\end{tabular}
\end{table}

\subsection{Optimizer Dependence}
\label{app:optimizer_dependence}

The sparsity analysis throughout this paper assumes Adam-style optimizers. This choice is not incidental: Adam's adaptive scaling fundamentally changes how gradient magnitudes translate to update magnitudes, making sparsity robust in ways that would not hold for SGD.

\paragraph{Adam vs.\ SGD update dynamics.}
In SGD, the update is $\Delta w = \eta g$, so update magnitude scales directly with gradient magnitude. Large gradients produce large updates that may exceed the BF16 absorption threshold. In contrast, Adam computes $\Delta w = \eta \cdot \hat{m}_t / (\sqrt{\hat{v}_t} + \epsilon)$, where both $\hat{m}_t$ and $\hat{v}_t$ track gradient statistics. When gradients are consistently large, both the first moment $m_t$ and second moment $v_t$ grow proportionally, keeping their ratio bounded near 1 (\Cref{fig:adam_ratio_adversarial}). This normalization effect means that Adam's update magnitude is largely independent of gradient magnitude.

\paragraph{Upper bounds on updates.}
A key distinction emerges when gradient clipping is disabled or ineffective. Adam has a theoretical upper bound on update magnitude regardless of gradient size (\Cref{thm:adam_bound}), while SGD has no such bound: $|\Delta w| = \eta|g|$ grows without limit as $|g|$ increases. This bound makes Adam's sparsity predictable even under extreme gradient conditions.

\paragraph{Implications for sparsity.}
In practice, layer normalization and gradient clipping constrain gradient magnitudes. With clipping at global norm 1.0, per-parameter gradients are often $|g| \leq 1$, which means SGD updates ($\eta|g|$) may be smaller than Adam updates (${\sim}\eta$), potentially yielding comparable or even higher sparsity. However, without clipping or when gradients spike, SGD's unbounded updates could significantly reduce sparsity. We have not empirically verified sparsity under SGD; our analysis assumes Adam throughout.

\paragraph{Practical relevance.}
Since modern LLM training universally uses Adam variants (AdamW, Adam with decoupled weight decay), this distinction is primarily of theoretical interest. However, practitioners considering alternative optimizers (e.g., Muon) should be aware that the sparsity guarantees established in this paper may not transfer.

\section{PULSELoCo Sparse Payloads}
\label{app:pulseloco_sparsity}

This appendix reports the sparse-payload measurements used for PULSELoCo in \Cref{sec:experiments}. DiLoCo synchronizes the full FP32 pseudo-gradient by construction, so it serves as the dense reference. For PULSELoCo, we log two related but distinct quantities: BF16 weight-update sparsity between consecutive global checkpoints, and FP32 pseudo-gradient communication sparsity after error feedback is applied.

\Cref{fig:pulseloco_sparsity_bar} summarizes the operating points used in the main comparison. The BF16 weight-update measurement is the PULSESync patch that would synchronize each PULSELoCo global checkpoint to inference workers; it is not the standalone PULSESync regime evaluated in \Cref{app:grail}. Because each global checkpoint includes $H$ local steps and one outer update, this paired setting is moderately denser than per-step PULSESync, with $93.9$--$95.7\%$ sparsity. The trainer-to-trainer pseudo-gradient payload remains $94.8$--$96.4\%$ sparse, so each PULSELoCo worker communicates only $3.6$--$5.2\%$ of FP32 pseudo-gradient values per outer round. This gives a $19$--$28\times$ FP32-value reduction relative to dense DiLoCo at the same outer-round cadence. After index metadata, all measured operating points still use at least $12\times$ less bandwidth than dense DiLoCo (\Cref{tab:bw_operating_points}).

\begin{figure}[t]
\centering
\includegraphics[width=0.82\linewidth]{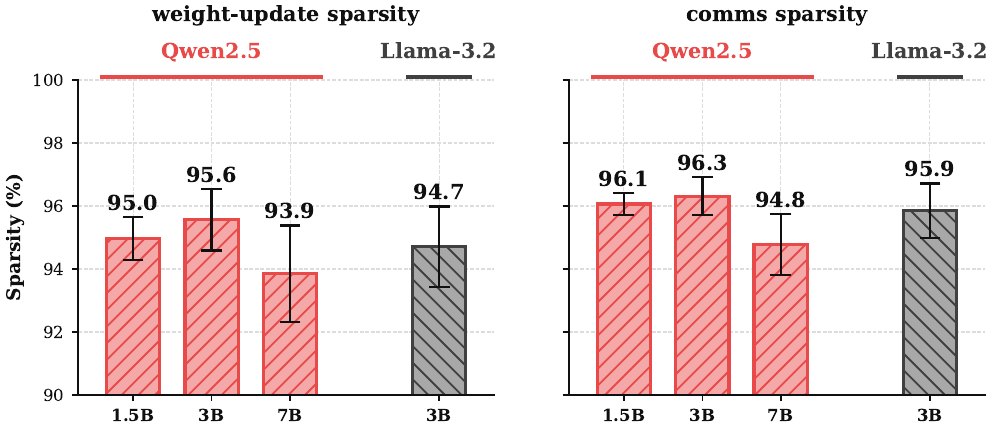}
\caption{\textbf{PULSELoCo sparsity at the main operating points.} Left: BF16 checkpoint-patch sparsity for PULSESync when paired with PULSELoCo, measured between consecutive global checkpoints. This setting is denser than standalone per-step PULSESync because each checkpoint includes $H$ local steps and one outer update. Right: pseudo-gradient communication sparsity after error feedback, which determines the trainer-to-trainer sparse payload. Qwen2.5 models use $H=8$ and Llama-3.2-3B uses $H=4$. Error bars indicate $\pm 1$ standard deviation across logged outer rounds and seeds.}
\label{fig:pulseloco_sparsity_bar}
\end{figure}

\begin{table}[t]
\centering
\caption{\textbf{PULSELoCo communication sparsity and FP32-value savings.} Means are computed by averaging each seed over its logged outer rounds and then averaging across the 3 seeds used in \Cref{fig:learning_curves}. The FP32-value reduction reports the dense FP32 baseline divided by the fraction of entries selected by PULSELoCo; byte-level accounting additionally includes sparse indices, as described in \Cref{app:bandwidth_accounting}.}
\label{tab:pulseloco_sparse_payloads}
\footnotesize
\setlength{\tabcolsep}{5pt}
\begin{tabular}{@{}lcccc@{}}
\toprule
Model & $H$ & Communication sparsity & FP32 values sent & FP32-value reduction \\
\midrule
Qwen2.5-1.5B & 8 & $96.1\%$ & $3.9\%$ & $25.5\times$ \\
Qwen2.5-3B & 8 & $96.4\%$ & $3.6\%$ & $27.8\times$ \\
Qwen2.5-7B & 8 & $94.8\%$ & $5.2\%$ & $19.1\times$ \\
Llama-3.2-3B & 4 & $95.9\%$ & $4.1\%$ & $24.5\times$ \\
\bottomrule
\end{tabular}
\end{table}

\noindent\textbf{Sensitivity to $H$.} The H-sweep in \Cref{app:pulseloco_h_sensitivity} shows that increasing $H$ modestly increases the sent fraction, but the pseudo-gradient payload remains sparse throughout the stable local-update range.

\section{Compression Algorithm Selection}
\label{app:compression}

The sparse patches produced by \protoname (\Cref{subsec:pulsesync}) are further compressed with a general-purpose entropy coder before transmission. This appendix motivates the codec used by default for both \protoname algorithms. The choice of codec is regime-dependent: at high bandwidth the encoding throughput dominates end-to-end latency, while at low bandwidth the achieved compression ratio dominates. We evaluate five widely-deployed codecs on the production sparse representation \texttt{delta\_coo\_downscaled} (\Cref{app:compression_ablation}) and identify the operating regime in which each codec minimizes total transfer time.

\paragraph{Codecs evaluated.} We benchmark snappy, lz4, zstd at compression levels 1 and 3, and gzip at level 6. The first three target the speed end of the spectrum, zstd-3 sits at a moderate-ratio operating point, and gzip-6 serves as a universal-baseline reference. All measurements use 1 warmup iteration plus 3 timed iterations on an AMD EPYC 7763 host, with $n = 270$ sparse-checkpoint payloads drawn from 14 GRPO experiments across the Qwen2.5, Gemma-3, and LLaMA-3.2 families. We verified that \texttt{decompress(compress(x)) == x} on every payload.

\begin{table}[t]
    \centering
    \caption{\textbf{Codec comparison on the production sparse representation.} Sparse ratio is measured against the COO baseline; full ratio is measured against the dense BF16 model. Encode throughput is reported on a single AMD EPYC 7763 core ($n = 270$). The crossover bandwidth is the link rate at which a codec begins to dominate the next-faster Pareto neighbour. The \emph{best regime} column gives the bandwidth tier in which each codec minimizes end-to-end transfer time for a 7B model at 99\% sparsity after BF16 casting.}
    \label{tab:codec_selection}
    \small
    \begin{tabular}{@{}lccccl@{}}
        \toprule
        \textbf{Codec} & \textbf{Sparse Ratio} & \textbf{Full Ratio} & \textbf{Encode (MB/s)} & \textbf{Decode (MB/s)} & \textbf{Best regime} \\
        \midrule
        snappy           & $2.41\times \pm 0.15$ & $56\times$ & $1041 \pm 357$ & $1289 \pm 485$ & datacenter ($>$800\,Mbit/s) \\
        lz4              & $2.40\times \pm 0.13$ & $56\times$ & $\phantom{0}830 \pm 236$ & $1484 \pm 524$ & datacenter ($>$800\,Mbit/s) \\
        \textbf{zstd-1}  & $\mathbf{3.33\times \pm 0.29}$ & $\mathbf{79\times}$ & $\mathbf{534 \pm \phantom{00}56}$ & $\mathbf{851 \pm 108}$ & \textbf{cloud (14--800\,Mbit/s)} \\
        zstd-3           & $3.40\times \pm 0.27$ & $80\times$ & $\phantom{0}197 \pm \phantom{00}21$ & $\phantom{0}670 \pm \phantom{0}69$ & constrained ($<$14\,Mbit/s) \\
        gzip-6           & $3.32\times \pm 0.26$ & $78\times$ & $\phantom{00}14 \pm \phantom{000}2$ & $\phantom{0}192 \pm \phantom{0}11$ & dominated (never optimal) \\
        \bottomrule
    \end{tabular}
\end{table}

\Cref{tab:codec_selection} summarizes the comparison and \Cref{fig:codec_selection_app} visualizes the resulting Pareto structure across bandwidth tiers. We observe a clean three-codec frontier: lz4 wins at high bandwidth, zstd-1 wins across the typical cloud regime, and zstd-3 wins on constrained links. snappy is essentially indistinguishable from lz4 on our payloads (within one standard deviation on both ratio and throughput), so we treat the two as exchangeable at the high-bandwidth end. gzip-6 is dominated everywhere: it matches the zstd-1 ratio ($3.32\times$ vs $3.33\times$) but encodes approximately $38\times$ more slowly (14 vs 534\,MB/s), so it is never on the Pareto frontier.

\begin{figure}[t]
    \centering
    \includegraphics[width=0.85\linewidth]{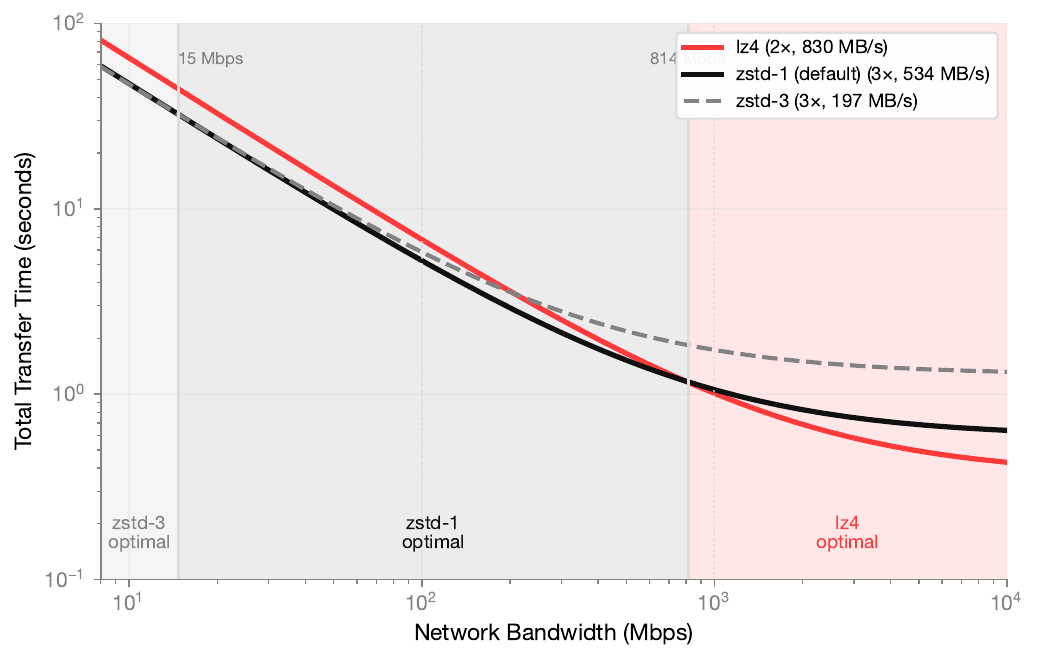}
    \caption{\textbf{Bandwidth-aware codec selection.} End-to-end transfer time (encode + network + decode) versus link bandwidth for a 7B model at 99\% sparsity after BF16 casting. Shaded regions mark the bandwidth tier in which each codec minimizes total time. The crossover from zstd-3 to zstd-1 occurs near 14\,Mbit/s, and the crossover from zstd-1 to lz4 occurs near 800\,Mbit/s; both crossovers shift to higher bandwidth as the payload grows.}
    \label{fig:codec_selection_app}
\end{figure}

\paragraph{Regime selection.} We recommend the following defaults based on the crossovers in \Cref{tab:codec_selection} and \Cref{fig:codec_selection_app}:
\begin{itemize}[leftmargin=*, itemsep=2pt, topsep=2pt]
    \item \textbf{Datacenter} ($>$800\,Mbit/s; e.g.\ NVLink, 10\,GbE, intra-rack): use lz4. Encoding consumes most of the latency budget at this rate, so the $830$\,MB/s encoder beats the higher-ratio alternatives even though the achieved ratio is lower ($56\times$ full).
    \item \textbf{Typical cloud} (14\,Mbit/s to 800\,Mbit/s; e.g.\ commodity internet, cross-region links): use zstd-1. This is the \protoname default. zstd-1 minimizes end-to-end latency across the majority of realistic deployment links while delivering $79\times$ full compression.
    \item \textbf{Constrained} ($<$14\,Mbit/s; slow WAN, tethered links): use zstd-3. Network time dominates and the marginal $+2\%$ ratio over zstd-1 outweighs the lower encoder throughput.
\end{itemize}
The crossovers depend on payload size; larger patches push both crossovers toward higher bandwidth, since transfer time scales linearly with payload while encode and decode times do not. \Cref{app:bandwidth_selection} gives the closed-form crossover expression and reports values for a representative 194\,MB payload.

\paragraph{Per-model sensitivity.} The codec ranking is stable across the model families we studied (\Cref{app:per_model_compression}): zstd-1 produced full ratios of $76\times$, $80\times$, and $100\times$ on Qwen2.5, Gemma-3, and LLaMA-3.2 respectively, with the relative ordering of codecs unchanged. The largest variation is in absolute ratio rather than in which codec wins at a given bandwidth, so the regime recommendations above transfer across architectures without adjustment. Architectures with markedly different weight distributions or tokenizer-induced embedding patterns may shift the absolute ratios; we recommend re-running the benchmark in \Cref{app:algorithm_comparison} when deploying \protoname on a new model family with a substantially different parameter distribution.



\section{Lower-Precision Receivers}
\label{app:lower_precision}

The compute-visibility gate $G_D(\theta, s) := \{ i : \mathrm{cast}_D(\theta_i) \neq \mathrm{cast}_D(\theta_i - s_i) \}$ is parametric in the compute dtype $D$. The main paper instantiates $D = \mathrm{BF16}$ throughout (\Cref{subsec:cvs}), since BF16 is the dominant deployment format for current RL post-training pipelines~\citep{micikevicius2018mixed}. We now ask the natural follow-up question: how might the gate behave when receivers run inference in FP8 E4M3 or MXFP4? We answer this with a first-order projection, not an end-to-end measurement: we carry the BF16 ULP derivation of \Cref{app:sparsity_foundations} through to the lower-precision formats and match the resulting thresholds against the measured weight-magnitude distribution. The projection suggests that the gate should strengthen rather than weaken at lower precision, but the FP8 and MXFP4 numbers in \Cref{tab:tulpscale} should be read as estimates.

\noindent\textbf{Setup.} \Cref{app:sparsity_foundations} shows that BF16 absorbs an Adam parameter update whenever $|\Delta_i| \lesssim |w_i| / 256$. The factor $256 = 2^{8}$ has a clean ULP origin: BF16 carries 7 mantissa bits, so adjacent representable values within a single binade are spaced by $2^{-7}|w_i| = |w_i|/128$, and a midpoint-rounding argument places the absorption boundary at half a ULP, giving the asymmetric cell bound $|w_i|/256$. To project the gate onto FP8 E4M3 and MXFP4, we repeat this construction with each format's mantissa width and any block-scale adjustments, then compose the resulting threshold with the standard Adam update bound to derive a per-format critical weight magnitude.

\noindent\textbf{Derivation.} Let $m_D$ denote the mantissa bit count of format $D$, and write the relative absorption threshold as
\begin{equation}
    \tau_D := |\Delta_i| / |w_i| \quad \text{at which} \quad \mathrm{cast}_D(w_i - \Delta_i) = \mathrm{cast}_D(w_i),
    \label{eq:tau_D}
\end{equation}
so that $\tau_{\mathrm{BF16}} = 2^{-(m_{\mathrm{BF16}}+1)} = 2^{-8}$ matches the bound recalled above. For FP8 E4M3, $m_D = 3$, and the same midpoint-rounding argument gives $\tau_{\mathrm{FP8}} = 2^{-4} = 1/16$. For MXFP4, the picture has one extra wrinkle: the format stores 4 bits per element (sign + 2 exponent + 1 mantissa, the OCP E2M1 layout) but shares an 8-bit exponent scale across each block of 32 elements. Within a block, the per-element ULP is set by the block scale rather than the element exponent, which coarsens the effective spacing. Treating the block scale as fixed during a single optimizer step, the per-element relative threshold reduces to $\tau_{\mathrm{MXFP4}} = 2^{-2} = 1/4$ for elements near the block maximum, with smaller elements seeing an even coarser \emph{absolute} cutoff because their representable values are farther apart relative to their magnitude. We use the optimistic per-element value $\tau_{\mathrm{MXFP4}} = 1/4$ throughout; the projection should therefore be read as a lower bound on the true sparsity floor.

Using the effective Adam update scale $|\Delta_i| \approx \eta$ (motivated by the analysis in \Cref{app:sparsity_foundations}) gives the per-format critical magnitude
\begin{equation}
    |w|^{D}_{\mathrm{crit}} = \eta / \tau_D,
    \label{eq:wcrit_D}
\end{equation}
above which a parameter's update is absorbed and below which it survives the cast. At the standard RL learning rate $\eta = 3 \times 10^{-6}$, \Cref{eq:wcrit_D} gives $|w|^{\mathrm{BF16}}_{\mathrm{crit}} \approx 7.7 \times 10^{-4}$, $|w|^{\mathrm{FP8}}_{\mathrm{crit}} \approx 4.8 \times 10^{-5}$, and $|w|^{\mathrm{MXFP4}}_{\mathrm{crit}} \approx 1.2 \times 10^{-5}$. The fraction of weights that survive the cast is the fraction of parameters with $|w_i| < |w|^{D}_{\mathrm{crit}}$; conversely, the per-step sparsity floor is the fraction with $|w_i| \ge |w|^{D}_{\mathrm{crit}}$.

\begin{table}[t]
    \centering
    \caption{\textbf{T-ULP-Scale: projected absorption thresholds at $\eta = 3 \times 10^{-6}$.} The BF16 row is anchored to the empirical sparsity measurements in \Cref{sec:sparsity_analysis}; the FP8 E4M3 and MXFP4 rows are projections obtained by carrying the ULP derivation of \Cref{app:sparsity_foundations} through to the lower-precision formats and matching against the weight-magnitude distribution measured in \Cref{app:weight_magnitude} on Qwen2.5-1.5B. We treat MXFP4 with an OCP-style block scale of 32 elements and use the optimistic per-element value $\tau_{\mathrm{MXFP4}} = 1/4$. The projected sparsity values are not measurements and depend on scaling, rounding mode, and hardware support for the corresponding receiver format.}
    \label{tab:tulpscale}
    \footnotesize
    \setlength{\tabcolsep}{4pt}
    \begin{tabular}{@{}lccccc@{}}
    \toprule
    \textbf{Format} & \textbf{Mantissa bits} & $\boldsymbol{\tau_D}$ & $\boldsymbol{|w|^{D}_{\mathrm{crit}}}$ & \textbf{Frac.\ above} & \textbf{Projected sparsity} \\
    \midrule
    BF16 (baseline) & 7 & $2^{-8} = 1/256$ & $7.7 \times 10^{-4}$ & 97.6\% & ${\sim}99\%$ \\
    FP8 E4M3        & 3 & $2^{-4} = 1/16$  & $4.8 \times 10^{-5}$ & 99.5\% & ${\sim}99.7\%$ \\
    MXFP4 (E2M1 + block scale) & 1 (eff.) & $2^{-2} = 1/4$ & $1.2 \times 10^{-5}$ & 99.8\% & ${\sim}99.85\%$ \\
    \bottomrule
    \end{tabular}
\end{table}

The \textbf{Frac.\ above} column reports the fraction of Qwen2.5-1.5B weights with $|w_i| \ge |w|^{D}_{\mathrm{crit}}$, computed from the cumulative magnitude distribution. The \textbf{Projected sparsity} column adds a small heuristic adjustment, calibrated from the gap between the BF16 magnitude-only estimate and the empirical BF16 sparsity. It is intended to show scale, not to claim exact sparsity under FP8 or MXFP4 execution.


\noindent\textbf{Implications.} Lower precision should strengthen the compute-visibility gate because coarser formats have larger rounding cells. If the projected sparsity levels hold in deployment, PULSESync and PULSELoCo would transmit fewer parameters than under BF16. The exact gain must be measured on the target inference format and hardware.

\noindent\textbf{Caveats.} The projection is a first-order ULP scaling; three caveats temper it. First, rounding mode matters: round-to-nearest-even gives the asymmetric $|w|/(2 \cdot 2^{m_D})$ cell bound used in \Cref{eq:tau_D}, while stochastic rounding (which some FP8 training stacks adopt) inflates the survival probability near the cell boundary by a magnitude-dependent factor and would lower the projected floor by approximately 0.1 to 0.3 percentage points. Second, FP8 E4M3 and MXFP4 have narrower dynamic range than BF16, so practical deployment typically inserts per-tensor or per-block scaling that shifts the effective ULP relative to the unscaled weight magnitude; the projection assumes this scaling is set so that median-magnitude weights remain in the dense region of the format, as is standard practice. Third, this appendix does not run end-to-end FP8 or MXFP4 training experiments; the table predicts what the gate would observe if PULSE were deployed on receivers running inference in the corresponding format, and the actual measurement requires the matching hardware and a calibrated FP8/MXFP4 inference path.

\section{PULSESync Deployment on grail}
\label{app:grail}

PULSESync is deployed as the weight-synchronization layer on grail, a decentralized reinforcement learning platform. This deployment does not use PULSELoCo; trainer-to-trainer pseudo-gradient synchronization is evaluated separately in \Cref{sec:experiments}. This section summarizes grail's asynchronous architecture.

\subsection{System Overview}
\label{app:grail_overview}

grail separates computationally expensive inference (rollout generation) from training, enabling distributed nodes to contribute compute while a centralized trainer handles gradient updates. The system comprises three node types:

\begin{itemize}[leftmargin=*, itemsep=2pt]
    \item \textbf{Miners}: Generate inference rollouts using the current model checkpoint.
    \item \textbf{Validators}: Verify rollout authenticity via hidden-state fingerprinting and assign performance-based rewards.
    \item \textbf{Trainer}: Consumes verified rollouts to update the model.
\end{itemize}

All coordination occurs through S3-compatible object storage (e.g., Cloudflare R2), which serves as the shared layer for checkpoints and rollout data.

\subsection{Asynchronous Training Architecture}
\label{app:grail_async}

grail employs a fully asynchronous design where the trainer runs continuously without synchronization stalls. Miners and the trainer synchronize only at \emph{window boundaries} (approximately every 6 minutes), but the trainer never blocks. Instead, it continuously samples from a replay buffer while dedicated background processes handle all I/O.

\paragraph{Trainer node processes.} The trainer node runs three concurrent processes:

\begin{enumerate}[leftmargin=*, itemsep=2pt]
    \item \textbf{Training process}: Executes a tight loop that samples batches from the replay buffer and performs gradient updates. This process never waits for I/O, enabling multiple updates per window.

    \item \textbf{Upload process}: Handles checkpoint serialization and upload asynchronously. When the trainer produces a new checkpoint, it is handed off to this process without blocking.

    \item \textbf{Download process}: Fetches verified rollouts from storage at window boundaries and adds them to the replay buffer with staleness metadata.
\end{enumerate}

\paragraph{Replay buffer.} The replay buffer decouples data arrival from training consumption. It stores rollouts from multiple windows, supports staleness-weighted sampling (preferring fresher data), and implements automatic eviction of stale entries. This design ensures the trainer always has data available, even during network delays.

\subsection{Rollout Verification}
\label{app:grail_verification}

Validators verify that rollouts originate from the correct model checkpoint using a lightweight cryptographic mechanism called \emph{grail Proof}:

\begin{itemize}[leftmargin=*, itemsep=2pt]
    \item Select the top-32 hidden-state dimensions per token.
    \item Apply logarithmic quantization to handle heavy-tailed activation distributions.
    \item Generate 4-byte cryptographic sketches per token (${\sim}$148 bits of security).
    \item Use adaptive tolerances to account for numerical drift across different hardware.
\end{itemize}

This verification ensures that miners cannot submit rollouts generated from outdated or modified checkpoints.

\subsection{Deployment Setup}
\label{app:grail_setup}

To demonstrate the domain-agnostic nature of PULSESync, we evaluate on two distinct tasks: (1) mathematical reasoning using the MATH dataset~\citep{hendrycks2021math} with Qwen2.5-7B-Instruct~\citep{qwen2.5}, identical to the setup in \Cref{sec:sparsity_analysis}, and (2) code generation using the MBPP dataset~\citep{austin2021program} with Qwen2.5-Coder-7B-Instruct. For MBPP, we use 774 tasks for training and 190 for validation, with rewards based on test pass rates and syntax validity. For each task, we run 3 independent trials with the same GRPO implementation and base hyperparameters in \Cref{app:hyperparameters}, except that we use a lower learning rate ($1 \times 10^{-6}$) to ensure training stability in the distributed setting.

\subsection{Bandwidth Reduction}
\label{app:grail_bandwidth}

\Cref{fig:grail_training_curves} demonstrates that the high sparsity observed in \Cref{sec:sparsity_analysis} translates directly to PULSESync communication savings in practice. Upload sizes average $108$\,MB (SE: $1.1$\,MB), more than $100\times$ smaller than the $14$\,GB required for full 7B-model synchronization. At the mean, PULSESync achieves approximately $130\times$ bandwidth reduction at this learning rate, exceeding the $79\times$ observed at the benchmark learning rate ($3 \times 10^{-6}$) in the codec analysis of \Cref{app:compression}. The improvement is consistent with the higher sparsity induced by the lower learning rate, as predicted by our analysis in \Cref{subsec:precision}.

\subsection{Training Effectiveness}
\label{app:grail_training}

Despite PULSESync transmitting only sparse weight updates, training proceeds normally. Validation pass@1 improves steadily throughout training, reaching final improvements of $+50.1$ and $+49.4$ percentage points on MATH and MBPP respectively. Standard deviation across runs remains low ($\le 1.5$ percentage points). The main-text \Cref{fig:grail_training_curves} plots the per-window validation accuracy and upload sizes over the duration of training; each window is approximately 6 minutes during which up to 8 gradient steps may occur, with the exact count varying due to the system's asynchronous architecture.

\subsection{Lossless Reconstruction}
\label{app:grail_lossless}

All weight transfers pass a checksum verification, confirming bitwise-identical reconstruction at inference nodes. This validates the core premise of PULSESync: the sparsity induced by BF16 precision enables \emph{lossless} compression without approximation error or error feedback mechanisms. The verification protocol embeds a collision-resistant checksum of the post-patch weights into each patch header; receivers recompute the checksum after decoding and compare against the embedded value, rejecting any patch whose reconstruction does not match exactly.

\section{Experimental Details}
\label{app:experimental_details}

\subsection{Hardware Configuration}
\label{app:hardware}

We use different hardware configurations for the sparsity analysis (\Cref{sec:sparsity_analysis}), the PULSELoCo comparison (\Cref{sec:experiments}), and the grail deployment study (\Cref{app:grail}).

\paragraph{Sparsity analysis (\Cref{sec:sparsity_analysis}).}
For the controlled sparsity experiments, we use the same GPU classes as the PULSELoCo comparison: an NVIDIA B300 SXM5 GPU for Qwen2.5-7B-Instruct and an NVIDIA A100 SXM4-80GB GPU for the smaller Qwen, Llama, and Gemma models, each paired with a second GPU of the same class for inference and evaluation. This setup ensures reproducible measurements of weight update sparsity across different model sizes and training configurations.

\paragraph{PULSELoCo comparison (\Cref{sec:experiments}).}
The PULSELoCo, DiLoCo, and DDP comparison runs on two GPU classes. Each cell is one (algorithm, model, seed) configuration with four GPUs and intra-node NVLink for inter-rank communication. The Qwen2.5-7B-Instruct sweep uses NVIDIA B300 SXM5 nodes; the smaller-model sweeps use NVIDIA A100 SXM4-80GB nodes.

\paragraph{grail deployment study (\Cref{app:grail}).}
For the grail deployment study, the trainer process runs on a single NVIDIA B200 GPU. The inference nodes are fully decentralized and anonymous, participating voluntarily in the network without disclosing their hardware specifications. The network bandwidth between the trainer and inference nodes is approximately 400 Mb/s. Storage uses S3-compatible object storage for checkpoint distribution.

\subsection{Dataset Details}
\label{app:dataset_details}

\paragraph{MATH dataset.}
For the sparsity analysis (\Cref{sec:sparsity_analysis}), we train on mathematical reasoning tasks using the MATH dataset~\citep{hendrycks2021math}. The dataset contains 7,500 training examples spanning seven subjects: algebra, counting and probability, geometry, intermediate algebra, number theory, prealgebra, and precalculus. Problems range from competition mathematics (AMC, AIME) to olympiad-level difficulty. We extract a stratified 500-example validation split that remains fixed throughout all training runs; the remaining 7,000 problems form our training set. Stratification ensures proportional representation of both subjects and difficulty levels (1--5). The validation set is used exclusively for monitoring training progress; it is never used for gradient updates.

\paragraph{Controlled sparsity model suite.}
The controlled sparsity analysis uses five instruction-tuned checkpoints across three model families: Qwen2.5-0.5B/1.5B/7B-Instruct~\citep{qwen2.5}, Llama-3.2-3B-Instruct~\citep{llama3}, and Gemma-3-4B-it~\citep{gemma3}. This suite lets us test whether update sparsity persists across both architecture family and model scale. Checkpoint identifiers and licenses are listed in \Cref{tab:asset_licenses}.

\paragraph{Generalization to code tasks.}
In \Cref{app:grail}, the PULSESync deployment also evaluates code generation using the MBPP dataset~\citep{austin2021program}.

\paragraph{PULSELoCo comparison suite.}
The DDP, DiLoCo, and PULSELoCo comparison in \Cref{sec:experiments} uses four instruction-tuned models: Qwen2.5-1.5B-Instruct, Qwen2.5-3B-Instruct, Qwen2.5-7B-Instruct~\citep{qwen2.5}, and Llama-3.2-3B-Instruct~\citep{llama3}. We evaluate MATH with 3 independent seeds for each method and model. Rewards follow the same mathematical-reasoning formulation used in the preceding experiments.

\subsection{Bandwidth Accounting}
\label{app:bandwidth_accounting}

PULSELoCo communication is reported as a per-worker payload per outer round. In one outer round, each worker uploads one sparse pseudo-gradient and receives one sparse aggregate before applying the outer optimizer. We report one upload-sized payload. Including the receive side would double both PULSELoCo and DiLoCo, so the reduction ratios would not change. The baseline is DiLoCo's logical full pseudo-gradient payload, $N \times 4$ bytes per worker per outer round for a model with $N$ parameters. These are payload accounting numbers, not end-to-end wire measurements; real network measurements appear only for the grail PULSESync deployment in \Cref{app:grail}.

\paragraph{What is counted.} The sparsity tables in \Cref{app:pulseloco_sparsity} count only selected FP32 pseudo-gradient values, which isolates the compute-visibility gate. This section counts bytes. Byte-level accounting includes both selected FP32 values and the index metadata needed to locate them, so it is more conservative than value sparsity alone.

\paragraph{Sparse stream format.} The main text refers to PULSELoCo's 7B result as an encoded sparse FP32 pseudo-gradient payload. Concretely, PULSELoCo stores selected FP32 values together with sorted parameter indices. The indices are delta-encoded and varint-packed. We first report this packed sparse stream without a general-purpose codec, then separately measure standard byte-stream codecs such as zstd.

\paragraph{Raw sparse payload accounting.} The Qwen2.5-7B-Instruct run at $H{=}8$ has mean communication sparsity $0.948$ in \Cref{tab:pulseloco_sparse_payloads}. For conservative byte accounting, we round this down to $0.940$. This gives approximately $\text{nnz}=4.59 \times 10^8$ transmitted entries out of $N=7.62 \times 10^9$ parameters. The FP32 values require $\text{nnz}\times 4=1.84$\,GB. The index stream is small because sorted index gaps average $N/\text{nnz}\approx 16.6$, so most gaps fit in one varint byte. Bounding the extra varint bytes by $(N-\text{nnz})/127$ gives about $515$\,MB of indices before small container metadata. The resulting raw sparse payload is about $2.36$\,GB; the measured final-round delta-varint payload is $2.39$\,GB, a $12.8\times$ reduction over the dense FP32 baseline of $N\times 4=30.46$\,GB.

\paragraph{Byte-stream compression.} The raw sparse payload applies sparse indexing but no general-purpose codec to the FP32 value stream. We therefore also measure byte-stream compression on the packed sparse stream for the Qwen2.5-7B and Qwen2.5-3B PULSELoCo runs, using the same $N \times 4$ baseline as \Cref{tab:bw_operating_points}. On Qwen2.5-7B at $H{=}8$, the final per-worker payload is $2.39$\,GB with delta-varint indices and raw FP32 values ($12.8\times$), $1.77$\,GB with zstd-1 or zstd-3 ($17.2\times$), and $1.74$\,GB with byte-shuffle plus zstd-3 ($17.5\times$). On Qwen2.5-3B at $H{=}8$, the corresponding payloads are $0.68$\,GB, $0.52$\,GB, and $0.50$\,GB, giving $18.0$--$24.6\times$ reduction. The main text uses the measured 7B zstd-1 payload as the encoded sparse payload in \Cref{fig:hero}; \Cref{fig:pulseloco_compression_ratio} shows the corresponding compression-ratio curves over training, including the byte-shuffle and zstd-3 variants.

\begin{figure}[htbp]
    \centering
    \begin{subfigure}[t]{0.49\linewidth}
        \centering
        \includegraphics[width=\linewidth]{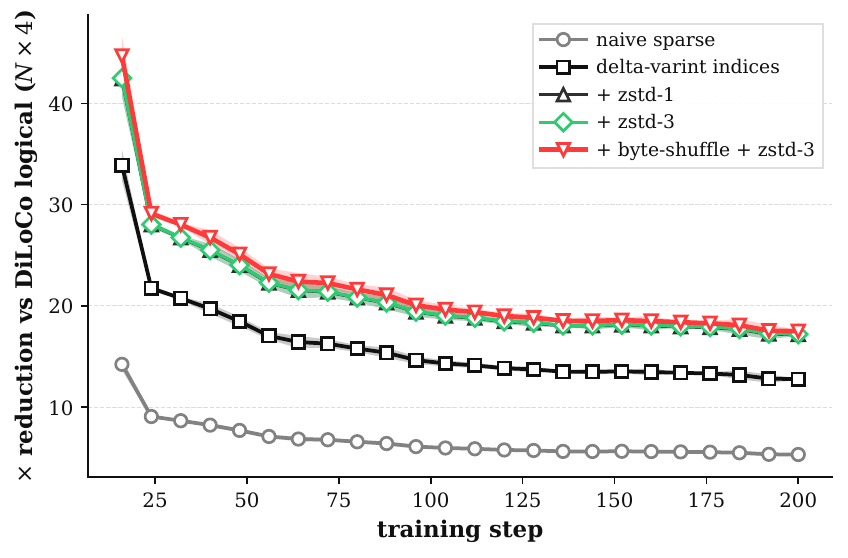}
        \caption{Qwen2.5-7B-Instruct}
    \end{subfigure}
    \hfill
    \begin{subfigure}[t]{0.49\linewidth}
        \centering
        \includegraphics[width=\linewidth]{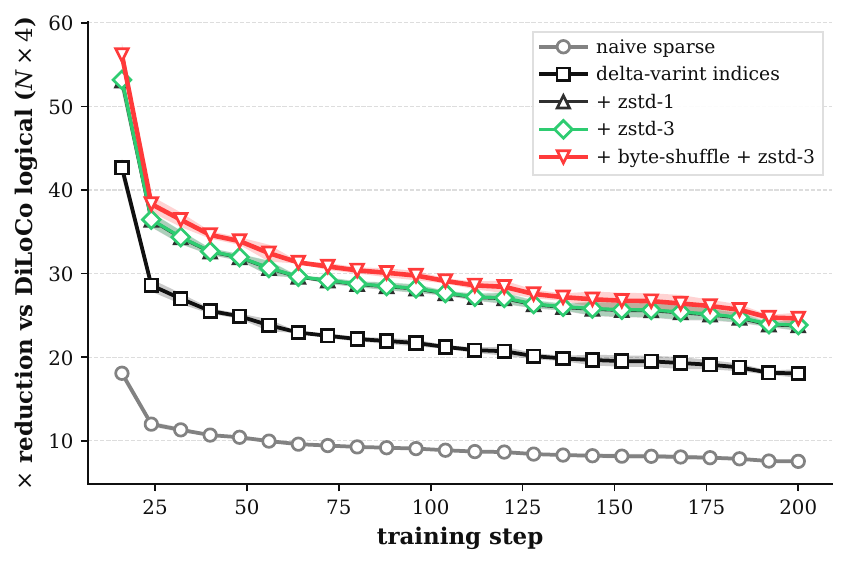}
        \caption{Qwen2.5-3B-Instruct}
    \end{subfigure}
    \caption{\textbf{Compression ratios for PULSELoCo pseudo-gradient payloads.} Ratios are relative to DiLoCo's full FP32 pseudo-gradient payload under the same $N \times 4$ per-worker accounting used in the main text. Curves average 3 seeds and show the effect of index packing and byte-stream codecs on the sparse pseudo-gradient stream.}
    \label{fig:pulseloco_compression_ratio}
\end{figure}

\paragraph{Operating points.} \Cref{tab:bw_operating_points} reports the conservative raw sparse payload for each measured setting. Sparsity generally rises as $H$ falls and as model size shrinks, so the byte-level reduction improves outside the largest 7B, $H{=}8$ setting. The table uses raw sparse payloads; the hero figure instead uses the measured 7B encoded payload with zstd-1.

\paragraph{DDP comparison.} \Cref{tab:bw_operating_points} compares PULSELoCo to DiLoCo's full FP32 pseudo-gradient payload at the same outer-round cadence. A per-step DDP baseline synchronizes once per optimizer step, so over one PULSELoCo outer round it performs $H$ dense synchronizations. Under the same payload accounting, the reduction relative to dense DDP is therefore $H$ times the table value: more than $100\times$ for the $H{=}8$ Qwen settings and $70\times$ for Llama-3.2-3B at $H{=}4$. Exact wire bytes depend on the collective implementation, but the factor of $H$ comes from synchronization frequency and is independent of the sparse codec.

\begin{table}[htbp]
    \centering
    \caption{Bandwidth reduction for PULSELoCo at each measured operating point, using delta-encoded indices and raw FP32 values. \emph{Sparsity} is the conservative value used for byte accounting; \emph{Reduction} is relative to the dense FP32 baseline $N\times 4$. Rows with measured raw payloads use the final-round measurement; other rows use the conservative byte estimate.}
    \label{tab:bw_operating_points}
    \vspace{0.5em}
    \small
    \begin{tabular}{@{}lrrrrr@{}}
    \toprule
    Model & $H$ & Sparsity & Nonzeros/rank & PULSELoCo payload & Reduction \\
    \midrule
    Qwen2.5-7B-Instruct   & 8 & $0.940$ & $4.59 \times 10^8$ & $2.39$\,GB & $12.8\times$ \\
    Qwen2.5-3B-Instruct   & 8 & $0.958$ & $1.30 \times 10^8$ & $0.68$\,GB & $18.0\times$ \\
    Qwen2.5-3B-Instruct   & 4 & $0.971$ & $0.90 \times 10^8$ & $\leq 0.47$\,GB & $\geq 26.1\times$ \\
    Qwen2.5-1.5B-Instruct & 8 & $0.958$ & $0.65 \times 10^8$ & $\leq 0.34$\,GB & $\geq 18.3\times$ \\
    Llama-3.2-3B-Instruct & 4 & $0.954$ & $1.42 \times 10^8$ & $\leq 0.73$\,GB & $\geq 17.5\times$ \\
    \bottomrule
    \end{tabular}
\end{table}

\subsection{Training Hyperparameters}
\label{app:hyperparameters}

We adopt GRPO configurations inspired by DAPO~\citep{yu2025dapo}, as summarized in \Cref{tab:hyperparameters}. To ensure our controlled sparsity analysis captures the \emph{intrinsic} behavior of the optimization process, we set weight decay and the KL penalty $\beta$ to zero during primary measurements. Results are averaged over 4 random seeds.

\paragraph{Optimizer $\beta_2$ settings.}
The controlled sparsity analysis uses $(\beta_1,\beta_2)=(0.9,0.999)$. The grail deployment study and the PULSELoCo experiments use $(0.9,0.95)$, matching the post-training setting used by modern LLM training pipelines. As discussed in \Cref{app:bounds}, $\beta_2=0.999$ gives a looser Adam update bound than $\beta_2=0.95$, so the sparsity characterization is conservative with respect to the deployment and PULSELoCo settings.

\paragraph{Distributed worker learning rate.}
For the grail deployment study and PULSELoCo experiments, we use a lower learning rate of $1 \times 10^{-6}$ to keep workers stable throughout training.

\paragraph{Distributed local-update windows.}
For the DDP, DiLoCo, and PULSELoCo comparison, all methods use $R = 4$ workers. DiLoCo and PULSELoCo use $H = 8$ local Adam steps for the Qwen models and $H = 4$ for Llama-3.2-3B-Instruct.

These values are tied to the shared-inference protocol used for local-update methods. Rollout workers serve the latest global checkpoint and are refreshed only after each outer round, rather than being assigned to individual trainers. During the $H$ local steps inside a round, each trainer has private local weights while rollout workers continue using the previous global checkpoint. Larger $H$ therefore reduces trainer-to-trainer communication, but it also increases the off-policy gap between the rollouts and the local trainer weights. This RL stability constraint is why our local-update windows are smaller than the much longer intervals often used in pre-training DiLoCo settings~\citep{douillard2023diloco}. We use the largest stable windows we found: $H = 8$ for the Qwen models, matching the RL staleness ceiling reported by \citet{scalerl}, and $H = 4$ for Llama-3.2-3B-Instruct, which became unstable for $H > 4$. \Cref{app:pulseloco_round_atomicity} describes the round-level synchronization convention, and \Cref{app:pulseloco_h_sensitivity} reports the payload sensitivity to $H$.

\paragraph{Training duration.}
We train for 400 steps, which is sufficient to observe both early-training dynamics and stable-phase behavior while remaining computationally tractable across multiple model scales. We verify convergence by examining pass@1 accuracy curves on validation sets (\Cref{app:training_curves}); performance plateaus by step 400 in all cases.

\paragraph{Asymmetric clipping.}
Following DAPO, we use asymmetric clipping bounds ($\epsilon_{\text{low}} = 0.2$, $\epsilon_{\text{high}} = 0.28$) to encourage exploration. The higher upper bound relaxes the clipping constraint for positive advantages, mitigating entropy collapse.

\begin{table}[htbp]
    \centering
    \caption{Training hyperparameters for GRPO experiments. Default values are for the controlled sparsity analysis (\Cref{sec:sparsity_analysis}); the grail deployment study (\Cref{app:grail}) and PULSELoCo experiments (\Cref{sec:experiments}) use learning rate $1 \times 10^{-6}$ and $\beta_2 = 0.95$.}
    \label{tab:hyperparameters}
    \vspace{0.5em}
    \small
    \begin{tabular}{@{}ll@{}}
    \toprule
    \textbf{Parameter} & \textbf{Value} \\
    \midrule
    Training steps & 400 \\
    Random seeds & 4 \\
    Optimizer & AdamW \\
    Learning rate ($\eta$) & $3 \times 10^{-6}$ \\
    $(\beta_1, \beta_2)$ & $(0.9, 0.999)$ \\
    Weight decay & $0.0$ \\
    LR schedule & Constant \\
    Gradient clipping & $1.0$ \\
    \midrule
    GRPO clipping $(\epsilon_{\text{low}}, \epsilon_{\text{high}})$ & $(0.2, 0.28)$ \\
    KL penalty $\beta$ & $0.0$ \\
    Prompts per batch & 32 \\
    Rollouts per prompt ($G$) & 16 \\
    Max generation length & 2048 \\
    Precision & BF16 \\
    \bottomrule
    \end{tabular}
\end{table}

\subsection{Reward Formulations}
\label{app:reward_details}

We use verifiable rewards for mathematical reasoning in the controlled experiments and for code generation in the grail deployment study.

\paragraph{Mathematical Reasoning (MATH).}
For math tasks, we use a composite reward with four components: correctness (70\% weight), answer format (15\% weight), thinking presence (10\% weight), and no-trailing penalty (5\% weight):
\begin{equation}
    R_{\text{math}} = 0.7 \cdot C_{\text{correct}} + 0.15 \cdot F_{\text{format}} + 0.1 \cdot T_{\text{thinking}} + 0.05 \cdot P_{\text{no-trailing}}
\end{equation}
where $C_{\text{correct}} \in [0,1]$ is verified by string matching the final answer after normalization.

\paragraph{Code Generation (MBPP).}
For code generation tasks, we use a composite reward based on test pass rates (70\% weight), syntax validity (10\% weight), solution format (10\% weight), and thinking presence (10\% weight):
\begin{equation}
    R_{\text{code}} = 0.7 \cdot C_{\text{pass}} + 0.1 \cdot S_{\text{valid}} + 0.1 \cdot F_{\text{format}} + 0.1 \cdot T_{\text{thinking}}
\end{equation}
where $C_{\text{pass}}$ is the fraction of unit tests passed by the generated code.

\subsection{Software Environment}
\label{app:environment}

The sparsity analysis (\Cref{sec:sparsity_analysis}) uses TRL~\citep{vonwerra2022trl} for GRPO training. The grail deployment study (\Cref{app:grail}) uses the grail training stack, which integrates PULSESync for sparse checkpoint synchronization.

\begin{lstlisting}[basicstyle=\small\ttfamily,frame=single,backgroundcolor=\color{gray!5}]
Python: 3.10.12
PyTorch: 2.2.0
CUDA: 12.1
transformers: 4.38.0
zstandard: 0.22.0
\end{lstlisting}

\subsection{Asset Licenses}
\label{app:licenses}

\Cref{tab:asset_licenses} lists the external datasets and base model checkpoints used in this paper, with their licenses.

\begin{table}[htbp]
    \centering
    \caption{External assets used in the paper, with source identifier and license. All assets are publicly hosted; we use them under their respective terms for non-commercial research benchmarking.}
    \label{tab:asset_licenses}
    \vspace{0.5em}
    \small
    \begin{tabular}{@{}lll@{}}
    \toprule
    \textbf{Asset} & \textbf{Source} & \textbf{License} \\
    \midrule
    MATH~\citep{hendrycks2021math}                & \texttt{EleutherAI/hendrycks\_math}        & MIT \\
    MBPP~\citep{austin2021program}                & \texttt{google-research-datasets/mbpp}     & CC-BY-4.0 \\
    \midrule
    Qwen2.5-1.5B-Instruct~\citep{qwen2.5}         & \texttt{Qwen/Qwen2.5-1.5B-Instruct}        & Apache-2.0 \\
    Qwen2.5-3B-Instruct~\citep{qwen2.5}           & \texttt{Qwen/Qwen2.5-3B-Instruct}          & Qwen Research License \\
    Qwen2.5-7B-Instruct~\citep{qwen2.5}           & \texttt{Qwen/Qwen2.5-7B-Instruct}          & Apache-2.0 \\
    Qwen2.5-Coder-7B-Instruct~\citep{qwen2.5}     & \texttt{Qwen/Qwen2.5-Coder-7B-Instruct}    & Apache-2.0 \\
    Llama-3.2-3B-Instruct~\citep{llama3}          & \texttt{meta-llama/Llama-3.2-3B-Instruct}  & Llama 3.2 Community License \\
    Gemma-3-4B                                    & \texttt{google/gemma-3-4b}                 & Gemma Terms of Use \\
    \bottomrule
    \end{tabular}
\end{table}

\section{Extended Results}
\label{app:extended_results}

\subsection{Gradient vs.\ Parameter Change Sparsity}
\label{app:gradient_sparsity}

To understand the mechanistic source of parameter sparsity, we separately analyze gradient sparsity before the optimizer processes them.

\begin{figure}[t]
    \centering
    \includegraphics[width=\linewidth]{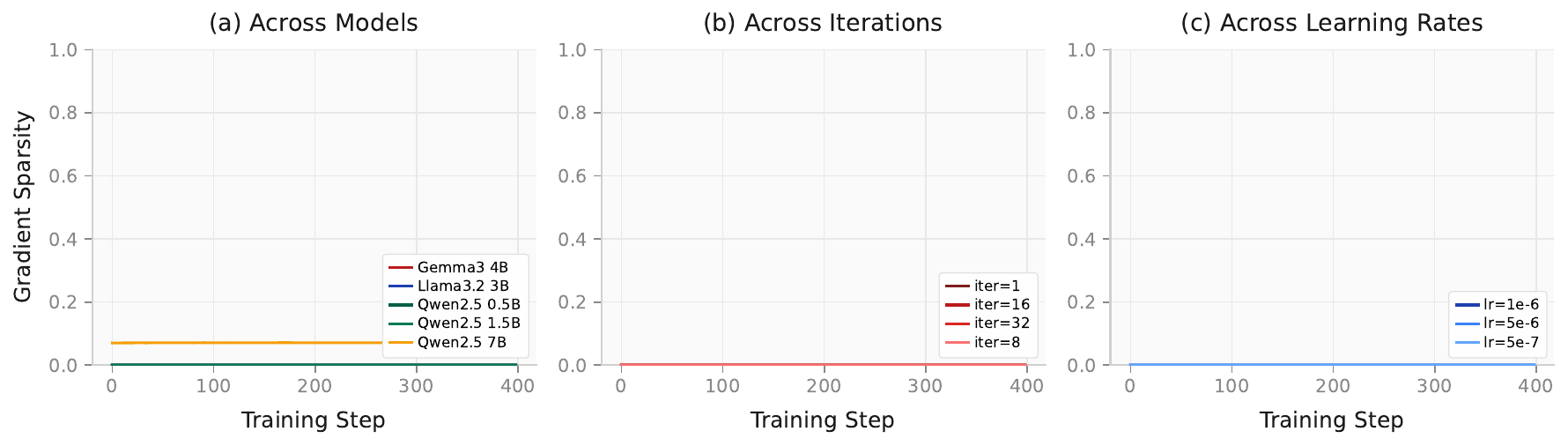}
    \caption{\textbf{Gradient sparsity throughout training for standard GRPO} across (a) model architectures and sizes, (b) iteration counts, and (c) learning rates. Sparsity is measured as the fraction of exactly-zero gradient values. Shaded regions indicate $\pm 1$ standard deviation across 4 seeds. Gradient sparsity remains near zero ($<$1\%) throughout training regardless of model, iteration count, or learning rate, demonstrating that standard reinforcement learning produces dense gradients unsuitable for efficient communication in distributed training.}
    \label{fig:gradient_sparsity_over_time}
\end{figure}

Gradients are nearly fully dense (${\sim}$99\% non-zero), yet parameter updates are highly sparse after BF16 casting (${\sim}$97\% unchanged). \Cref{fig:gradient_sparsity_over_time} visualizes this behavior across a wide range of training configurations, confirming that dense gradients are a universal property of standard GRPO. The BF16 absorption mechanism (\Cref{subsec:precision}) explains the transformation from dense gradients to sparse weight updates: dense gradients produce updates that fall below the representable threshold for most parameters.

This has practical implications for system design. Gradient compression techniques~\citep{lin2018deep,alistarh2017qsgd} would achieve far lower compression ratios than parameter change compression, since gradients remain dense throughout training.

\subsection{Training Curves Across Model Scales}
\label{app:training_curves}

To validate that our 400-step training duration captures the meaningful learning dynamics, we present pass@1 accuracy curves across all model families and sizes used in our sparsity analysis.

\begin{figure}[t]
    \centering
    \includegraphics[width=\linewidth]{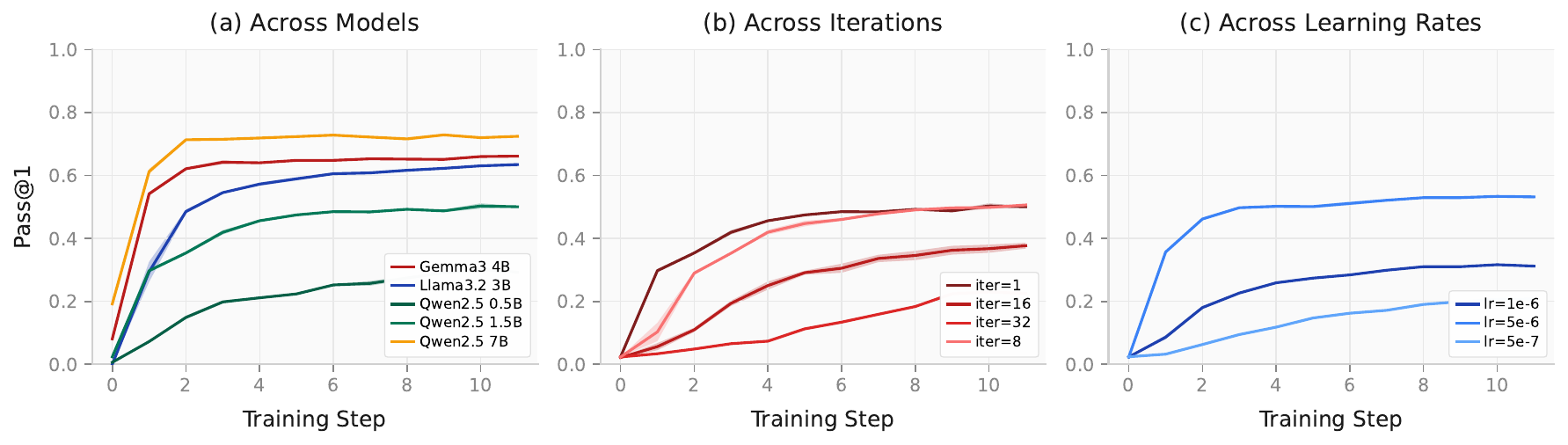}
    \caption{\textbf{Training curves across model scales.} Pass@1 validation accuracy throughout training for all models used in our sparsity analysis. All models show rapid initial improvement followed by convergence within 400 steps, validating our choice of training duration. Shaded regions indicate $\pm 1$ standard error across 4 seeds.}
    \label{fig:training_curves_all}
\end{figure}

\Cref{fig:training_curves_all} shows that all models exhibit similar learning dynamics: rapid initial improvement in the first 100--200 steps, followed by gradual convergence. By step 400, performance has largely plateaued across all model scales and families, confirming that our experimental duration is sufficient to capture stable-phase sparsity behavior. This consistency across architectures (Qwen, Llama, Gemma) and scales (0.5B--7B) provides confidence that our sparsity observations reflect the converged training regime rather than transient early-training artifacts.

\subsection{Factors Affecting Weight Update Sparsity}
\label{app:sparsity_factors}

\Cref{fig:lr_sparsity} reports the learning-rate sweep referenced in \Cref{subsec:precision}. Raising the learning rate shifts Adam updates upward relative to the BF16 absorption threshold, so more weights change. In practice, learning rates above ${\sim}5 \times 10^{-6}$ destabilize RL training, and the stable range remains in the high-sparsity regime.

The rollout-staleness sweep in \Cref{fig:staleness_sparsity_main} varies the rollout synchronization interval $S$, the number of optimizer steps between rollout regenerations. A cycle of length $S$ induces off-policy delays $\tau \in \{0, \ldots, S-1\}$, with $S=1$ corresponding to fully on-policy training. For per-step updates ($k=1$), sparsity remains above 98.5\% even at $S=32$. For larger $k$, sparsity decreases as more parameters accumulate changes that survive the BF16 cast, but remains above 97.5\% across all conditions tested.

\begin{figure}[t]
    \centering
    \includegraphics[width=0.78\linewidth]{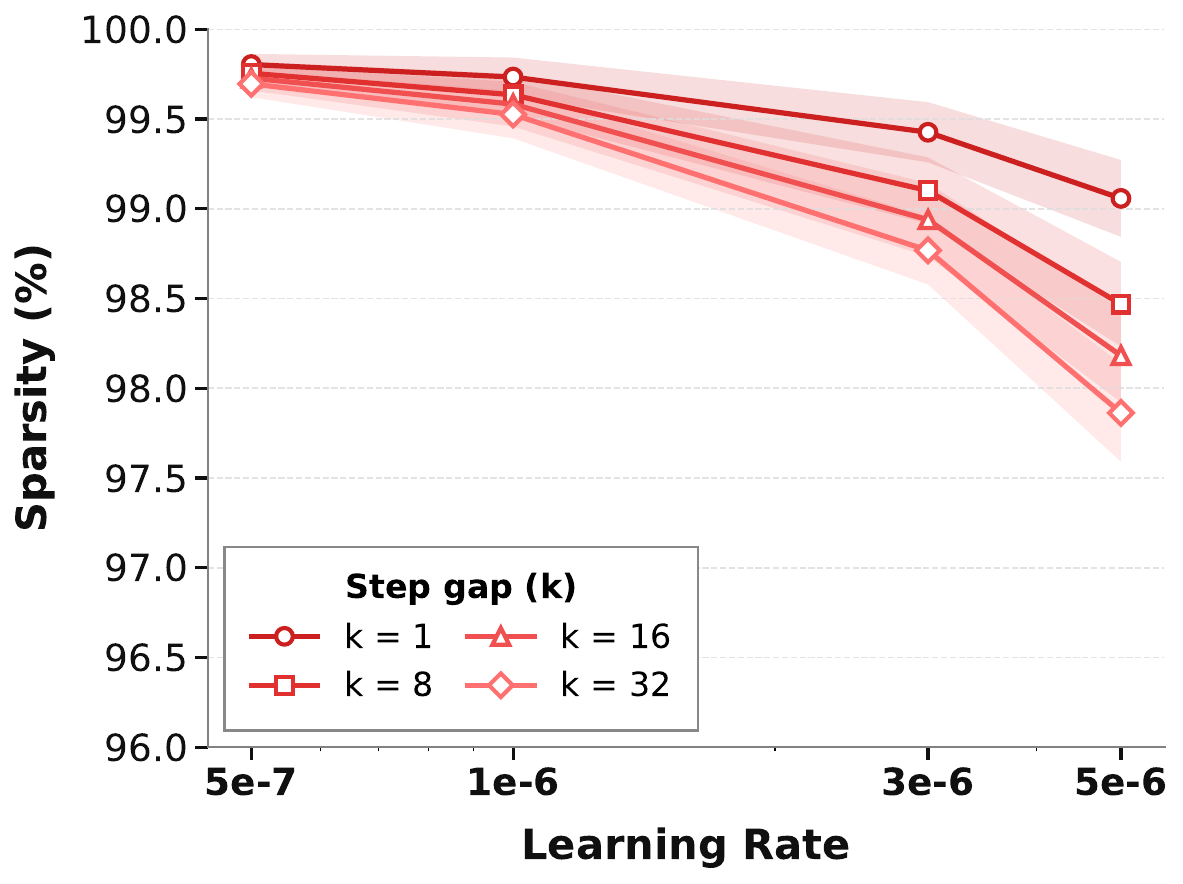}
    \caption{\textbf{Learning-rate effect on weight update sparsity.} Sparsity is measured after BF16 casting. Each line shows $k$-step sparsity as a function of learning rate. Higher learning rates reduce sparsity by increasing update magnitudes above the BF16 absorption threshold. Shaded regions indicate $\pm 1$ standard deviation across training steps.}
    \label{fig:lr_sparsity}
\end{figure}

\subsection{Sparsity Dynamics Throughout Training}
\label{app:sparsity_over_time}

\Cref{sec:sparsity_analysis} reports time-averaged sparsity statistics. Here we examine how sparsity evolves over individual training steps, revealing a characteristic transient that directly confirms the learning rate mechanism established in \Cref{subsec:precision}.

\begin{figure}[t]
    \centering
    \includegraphics[width=\linewidth]{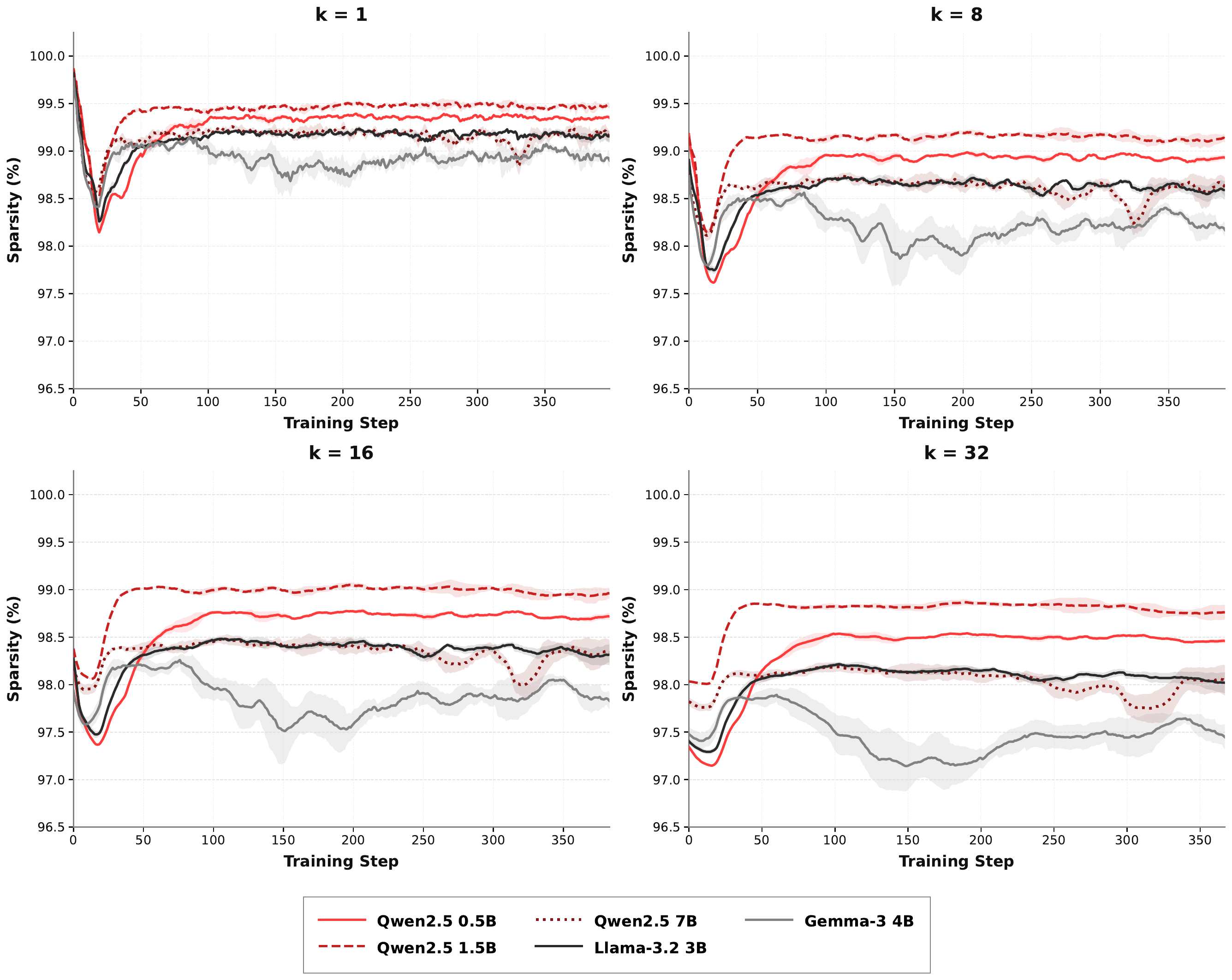}
    \caption{\textbf{Sparsity dynamics throughout training for $k$-step comparisons.} Each panel shows sparsity as a function of training step for a different comparison interval $k \in \{1, 8, 16, 32\}$. All models exhibit a characteristic dip during the learning rate warmup period (steps 0--20), followed by rapid recovery and stable sparsity for the remainder of training. Shaded regions indicate $\pm 1$ standard deviation across 4 seeds.}
    \label{fig:sparsity_over_time}
\end{figure}

\paragraph{Learning rate warmup produces a predictable transient.}
Our training configuration uses a linear warmup that ramps $\eta$ from 0 to $3 \times 10^{-6}$ over the first 20 steps (\Cref{app:hyperparameters}). \Cref{fig:sparsity_over_time} shows a pronounced sparsity dip precisely during this window. The mechanism follows directly from the BF16 absorption analysis (\Cref{app:bf16_details}): at step 0, $\eta \approx 0$, so all updates are absorbed and sparsity is near 100\%. As $\eta$ increases, update magnitudes grow proportionally ($|\Delta w| \approx \eta \cdot |\rho_t|$), pushing more parameters above the absorption threshold $|w|/256$. Sparsity reaches its minimum around step 20, precisely when $\eta$ attains its full value. This correspondence provides direct empirical confirmation that learning rate is the primary control variable for sparsity, as predicted in \Cref{subsec:precision}.

\paragraph{Post-warmup recovery and stabilization.}
After the warmup completes, sparsity recovers within approximately 20--30 steps and remains stable for the remainder of training. For $k=1$, steady-state sparsity settles at ${\sim}$99--99.5\% across all models. This recovery reflects Adam's moment estimates reaching equilibrium: during the warmup transient, bias-corrected moments overshoot because $m_t$ responds to rising gradients faster than $v_t$ (since $\beta_1 < \beta_2$), temporarily elevating the ratio $|\hat{m}_t|/\sqrt{\hat{v}_t}$ above its steady-state value of ${\sim}$1. Once $\eta$ stabilizes and the moments equilibrate, the ratio settles and sparsity locks into its characteristic level.

\paragraph{Multi-step comparisons amplify the transient.}
The warmup-induced dip is deeper and wider for larger $k$. At $k=1$, the minimum sparsity is ${\sim}$98\%; at $k=32$, it drops to ${\sim}$97\%. This is expected: a $k$-step comparison at step $t$ measures cumulative changes that survive the BF16 cast over the window $[\bar{\theta}^{\mathrm{BF16}}_t, \bar{\theta}^{\mathrm{BF16}}_{t+k}]$. When this window overlaps with the warmup period, it aggregates updates from steps with varying (and transiently elevated) learning rates, accumulating more changed parameters. After the warmup window clears (roughly by step $20 + k$), multi-step sparsity also stabilizes, remaining above 98\% for $k \leq 8$ and above 97\% for $k = 32$, consistent with the time-averaged results in \Cref{fig:sparsity_main}.

\paragraph{Implications.}
Even during the warmup transient, sparsity never drops below ${\sim}$97\% for any model or $k$ value, meaning sparse synchronization is viable from the very first training step. The warmup dip reduces compression ratios only marginally (from ${\sim}$99\% to ${\sim}$97--98\%), still yielding substantial bandwidth savings over full checkpoint transfer. Practitioners need not treat the warmup period as a special case; \protoname's compression benefits apply throughout the entire training run.

\subsection{PULSELoCo Sparse-Payload Sensitivity to Local Steps}
\label{app:pulseloco_h_sensitivity}

The main PULSELoCo comparison uses the largest stable local-update window per model: $H = 8$ local Adam steps for the Qwen models and $H = 4$ for Llama-3.2-3B-Instruct. Here we sweep $H \in \{4, 8, 16\}$ on Qwen2.5-3B at fixed $R = 4$ to show how the sparse payload changes along the local-step axis.

\begin{figure}[t]
\centering
\includegraphics[width=0.82\linewidth]{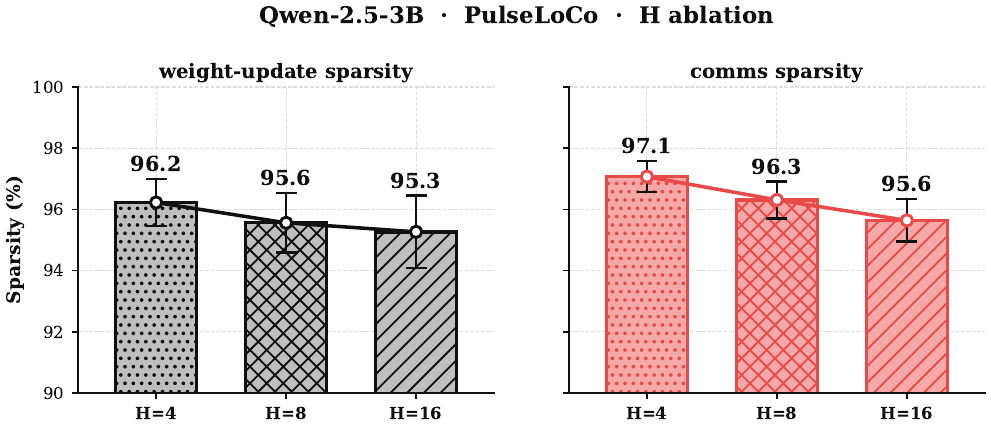}
\caption{\textbf{PULSELoCo sparsity sensitivity to local-step count $H$.} We sweep $H \in \{4,8,16\}$ on Qwen2.5-3B at $R = 4$. Left: BF16 weight-update sparsity between global checkpoints. Right: pseudo-gradient communication sparsity after error feedback. Error bars indicate $\pm 1$ standard deviation across logged outer rounds and seeds.}
\label{fig:h_sensitivity}
\end{figure}

\Cref{fig:h_sensitivity} isolates the sparsity side of the local-step tradeoff. Larger $H$ accumulates more local change before synchronization, reducing BF16 weight-update sparsity from $96.2\%$ at $H=4$ to $95.3\%$ at $H=16$ and pseudo-gradient communication sparsity from $97.1\%$ to $95.6\%$. The sparse payload remains far below dense synchronization throughout this range, while the main experiments keep $H$ at the largest stable setting for each model to avoid the rollout-staleness instability discussed in \Cref{app:hyperparameters}.

\section{Additional Method Details}
\label{app:method_details}

\subsection{GRPO Formulation}
\label{app:grpo_details}

GRPO eliminates the need for a separate value network by estimating advantages from group-relative rewards. Following DAPO~\citep{yu2025dapo}, we use asymmetric clipping bounds $\epsilon_{\text{low}}$ and $\epsilon_{\text{high}}$ to encourage exploration. The objective function is:
\begin{equation}
\label{eq:grpo_app}
\begin{split}
    \cJ_{\text{GRPO}}(\theta) &= \E_{x \sim \cD,\, \{y_i\}_{i=1}^G \sim \pi_{\theta_{\text{old}}}(\cdot|x)} \Bigg[ \frac{1}{G}\sum_{i=1}^G \frac{1}{|y_i|}\sum_{t=1}^{|y_i|} \Big\{ \min\Big[ r_{i,t}(\theta)\, \hat{A}_i, \\
    &\quad \text{clip}(r_{i,t}(\theta), 1{-}\epsilon_{\text{low}}, 1{+}\epsilon_{\text{high}})\, \hat{A}_i \Big] - \beta \, D_{\text{KL}}\big[\pi_\theta \| \pi_{\text{ref}}\big] \Big\} \Bigg]
\end{split}
\end{equation}
where $\{y_i\}_{i=1}^G$ are $G$ sampled responses for a given prompt $x$, and the importance weight ratio is:
\begin{equation}
    r_{i,t}(\theta) = \frac{\pi_\theta(y_{i,t} | x, y_{i,<t})}{\pi_{\theta_{\text{old}}}(y_{i,t} | x, y_{i,<t})}
\end{equation}
The advantage $\hat{A}_i$ for the $i$-th response is computed relative to the group statistics:
\begin{equation}
    \hat{A}_i = \frac{r(x, y_i) - \mu_G}{\sigma_G}, \quad \text{where} \quad \mu_G = \frac{1}{G}\sum_{j=1}^G r(x, y_j), \quad \sigma_G = \sqrt{\frac{1}{G}\sum_{j=1}^G (r(x, y_j) - \mu_G)^2}
\end{equation}
The clipping mechanism prevents overly large policy updates, with the asymmetric bounds ($\epsilon_{\text{high}} > \epsilon_{\text{low}}$) relaxing the upper limit to mitigate entropy collapse. The KL penalty (controlled by $\beta$) regularizes deviations from the reference policy $\pi_{\text{ref}}$.

\subsection{Index Encoding}
\label{app:indices}

We use delta encoding for indices to improve compression:
\begin{enumerate}[leftmargin=*,itemsep=2pt]
    \item Sort indices in ascending order
    \item Store first index as-is (4 bytes)
    \item Store subsequent indices as differences from previous
    \item Downcast index types (e.g., uint8 for row deltas, uint16 for column deltas; see \Cref{app:component_contribution})
\end{enumerate}
This typically reduces index storage by 40--60\% before zstd compression.

\subsection{Memory Management}
\label{app:memory}

PULSESync requires maintaining the previous checkpoint to compute the sparse delta. The memory overhead is minimal:
\begin{itemize}[leftmargin=*,itemsep=2pt]
    \item \textbf{Training node}: Maintains the current weights on the GPU and the previous weights in pinned CPU memory. This results in a total memory overhead of approximately $1.1\times$ the model size compared to standard training.
    \item \textbf{Inference node}: Loads the base weights once and applies incoming deltas in-place. No additional weight copies are required after the initial load.
\end{itemize}

\subsection{Compression Ablation Study}
\label{app:compression_ablation}

This section provides comprehensive ablation studies supporting the codec selection summarized in \Cref{app:compression}. We analyze: (1) component contributions to compression ratio, (2) sparse representation format choices, (3) full algorithm comparison with Pareto analysis, and (4) per-model variations.

\paragraph{Methodology.} We measured compression performance on sparse delta checkpoints from 14 experiments across 3 model families (Qwen2.5, Gemma3, LLaMA3.2), with 20 checkpoint files per experiment (5 evenly-spaced steps $\times$ 4 seeds). We tested 6 sparse representations $\times$ 5 compression algorithms = 30 combinations, yielding 8,100 total measurements. Timing measurements used 1 warmup + 3 measurement iterations on an AMD EPYC 7763 processor, with verification that decompress(compress(x)) == x for all.

\paragraph{Compression ratio definition.} Throughout this section, \emph{sparse ratio} refers to compression of the sparse representation itself (compressed bytes / COO baseline bytes), while \emph{full ratio} refers to compression versus the dense BF16 model (dense bytes / compressed bytes).

\subsubsection{Component Contribution Analysis}
\label{app:component_contribution}

The compression pipeline applies several transformations before entropy coding. \Cref{tab:component_contribution} shows the incremental contribution of each component using zstd-1 across all models ($n=270$).

\begin{table}[t]
    \centering
    \caption{\textbf{Component contribution to compression ratio.} Each row adds one transformation. Sparse ratio is relative to COO baseline; $\Delta$ shows incremental improvement. All measurements use zstd-1 ($n=270$).}
    \label{tab:component_contribution}
    \small
    \begin{tabular}{@{}lcccc@{}}
        \toprule
        \textbf{Configuration} & \textbf{Sparse Ratio} & \textbf{$\Delta$ Ratio} & \textbf{Encode (MB/s)} \\
        \midrule
        Raw COO (baseline) & $2.71\times \pm 0.25$ & -- & 397 \\
        + Index sorting & $2.71\times \pm 0.25$ & $+0.0\%$ & 397 \\
        + Delta encoding & $3.07\times \pm 0.34$ & $+13.3\%$ & 398 \\
        + Type downscaling & $3.33\times \pm 0.29$ & $+8.5\%$ & 534 \\
        \bottomrule
    \end{tabular}
\end{table}

\paragraph{Index sorting.} Sorting indices in ascending order has no direct size impact but enables delta encoding.

\paragraph{Delta encoding.} Instead of storing absolute indices, we store the first index and subsequent differences. Since changed parameters tend to cluster, differences are small and compress well. This contributes $+13.3\%$ improvement.

\paragraph{Type downscaling.} For COO format, we store row deltas as uint8 and column deltas as uint16, exploiting the fact that consecutive changes rarely span more than 255 rows or 65,536 columns. This contributes an additional $+8.5\%$ improvement and also increases encode throughput (smaller data to process).

\paragraph{Total improvement.} The full pipeline (delta encoding + downscaling) improves sparse compression ratio by $+22.9\%$ over the raw baseline ($2.71\times \to 3.33\times$).

\subsubsection{Sparse Representation Format Comparison}
\label{app:sparse_format}

We compared two sparse representation strategies: (1) \textbf{2D COO}: store per-tensor (row, column) indices; (2) \textbf{1D Flat}: flatten all tensors, store global indices. \Cref{tab:format_comparison} shows results with fair comparison (both using int32 indices).

\begin{table}[t]
    \centering
    \caption{\textbf{Sparse representation format comparison} using int32 indices and zstd-1 ($n=270$).}
    \label{tab:format_comparison}
    \small
    \begin{tabular}{@{}lccc@{}}
        \toprule
        \textbf{Format} & \textbf{Sparse Ratio} & \textbf{Encode (MB/s)} \\
        \midrule
        2D COO (delta\_coo\_int32) & $3.07\times \pm 0.34$ & 398 \\
        1D Flat (delta\_flat\_int32) & $3.19\times \pm 0.29$ & 312 \\
        \bottomrule
    \end{tabular}
\end{table}

\paragraph{Finding.} 1D Flat achieves $+3.9\%$ better compression than 2D COO when using identical index types, because global indices enable better delta encoding across tensor boundaries. However, 2D COO enables type downscaling (uint8 row deltas, uint16 column deltas) which is harder for flat indices. Our default configuration uses COO with downscaling ($3.33\times$), which outperforms flat with int32 ($3.19\times$).

\subsubsection{Full Algorithm Comparison}
\label{app:algorithm_comparison}

\Cref{tab:full_algorithm_comparison} compares compression algorithms using our default representation (delta\_coo\_downscaled). We mark Pareto-optimal configurations.

\begin{table}[t]
    \centering
    \caption{\textbf{Compression algorithm comparison} using our default representation ($n=270$). Sparse ratio is vs COO baseline; full ratio is vs dense BF16 model.}
    \label{tab:full_algorithm_comparison}
    \small
    \begin{tabular}{@{}lccccc@{}}
        \toprule
        \textbf{Algorithm} & \textbf{Sparse Ratio} & \textbf{Full Ratio} & \textbf{Encode (MB/s)} & \textbf{Decode (MB/s)} & \textbf{Pareto} \\
        \midrule
        snappy & $2.41\times \pm 0.15$ & $56\times$ & $1041 \pm 357$ & $1289 \pm 485$ & $\star$ \\
        lz4 & $2.40\times \pm 0.13$ & $56\times$ & $830 \pm 236$ & $1484 \pm 524$ & $\star$ \\
        \textbf{zstd-1} & $\mathbf{3.33\times \pm 0.29}$ & $\mathbf{79\times}$ & $\mathbf{534 \pm 56}$ & $\mathbf{851 \pm 108}$ & $\star$ \\
        zstd-3 & $3.40\times \pm 0.27$ & $80\times$ & $197 \pm 21$ & $670 \pm 69$ & $\star$ \\
        gzip-6 & $3.32\times \pm 0.26$ & $78\times$ & $14 \pm 2$ & $192 \pm 11$ & \\
        \bottomrule
    \end{tabular}
\end{table}

\paragraph{Key observations.}
\begin{itemize}[leftmargin=*, itemsep=2pt]
    \item \textbf{gzip-6 is never Pareto-optimal}: zstd-1 achieves the same ratio ($3.33\times$ vs $3.32\times$) but encodes $38\times$ faster (534 vs 14 MB/s).
    \item \textbf{snappy/lz4 for speed}: At high bandwidth, snappy (1041 MB/s) or lz4 (830 MB/s) minimize total transfer time despite lower ratios.
    \item \textbf{zstd dominates mid-range}: zstd-1 provides the best tradeoff for typical cloud bandwidths (15 Mbit/s--1 Gbit/s).
\end{itemize}

\subsubsection{Per-Model Breakdown}
\label{app:per_model_compression}

\Cref{tab:per_model_compression} shows compression varies across model families.

\begin{table}[t]
    \centering
    \caption{\textbf{Per-model compression} with zstd-1 default configuration.}
    \label{tab:per_model_compression}
    \small
    \begin{tabular}{@{}lcccc@{}}
        \toprule
        \textbf{Model Family} & \textbf{Sparsity} & \textbf{Sparse Ratio} & \textbf{Full Ratio} & \textbf{$n$} \\
        \midrule
        Qwen2.5 (0.5B--7B) & $99.0\% \pm 0.7\%$ & $3.31\times \pm 0.31$ & $76\times$ & 210 \\
        LLaMA3.2 (3B) & $99.3\% \pm 0.1\%$ & $3.36\times \pm 0.18$ & $100\times$ & 20 \\
        Gemma-3-4B & $99.2\% \pm 0.2\%$ & $3.42\times \pm 0.21$ & $80\times$ & 40 \\
        \bottomrule
    \end{tabular}
\end{table}

\paragraph{Observations.} LLaMA3.2 achieves the highest full compression ratio ($100\times$) due to its high sparsity ($99.3\%$) and favorable weight distribution. The measured range of $76$--$100\times$ across model families is consistent with the theoretical expectation: higher sparsity yields higher compression. The variation reflects differences in both sparsity levels and weight value distributions across architectures.

\subsubsection{Bandwidth-Dependent Algorithm Selection}
\label{app:bandwidth_selection}

The optimal algorithm depends on bandwidth. Total transfer time is:
\begin{equation}
    T_{\text{total}} = T_{\text{encode}} + \frac{S_{\text{payload}}}{R \cdot B} + T_{\text{decode}}
\end{equation}
where $S_{\text{payload}}$ is uncompressed sparse payload size, $R$ is compression ratio, and $B$ is bandwidth. At high $B$, encoding time dominates; at low $B$, transfer time dominates. \Cref{fig:compression_selection} visualizes this selection process across different bandwidth tiers.

\begin{figure}[htbp]
    \centering
    \includegraphics[width=0.85\linewidth]{figures/compression_analysis/compression_selection.pdf}
    \caption{\textbf{Bandwidth-aware algorithm selection.} Total transfer time (encode + network + decode) for a 7B model. Shaded regions indicate the optimal algorithm per bandwidth tier. Fast algorithms like lz4 are preferred at high bandwidth, while high-ratio algorithms like zstd-3 are better for constrained links.}
    \label{fig:compression_selection}
\end{figure}

\paragraph{Crossover formula.} The crossover bandwidth where two algorithms $A$ and $B$ have equal total transfer time can be derived analytically. Setting $T_A = T_B$ and solving for bandwidth:
\begin{equation}
    B_{\text{crossover}} = \frac{S_{\text{payload}} \cdot (R_B^{-1} - R_A^{-1})}{(T_{\text{enc},A} + T_{\text{dec},A}) - (T_{\text{enc},B} + T_{\text{dec},B})}
\end{equation}

\paragraph{Crossover points.} From our empirical benchmarks (194\,MB payload, default configuration):
\begin{itemize}[leftmargin=*, itemsep=2pt]
    \item \textbf{zstd-3 $\to$ zstd-1}: ${\sim}$15\,Mb/s (below this, zstd-3's marginally better ratio wins)
    \item \textbf{zstd-1 $\to$ lz4}: ${\sim}$800\,Mb/s (above this, lz4's faster encode wins)
\end{itemize}
These crossovers scale with payload size; larger payloads shift crossovers to higher bandwidths.

\paragraph{Why zstd-1 is the default.} Most deployments operate in the 15--800\,Mb/s range (commodity internet, cross-datacenter links). In this regime, zstd-1 minimizes end-to-end latency while achieving $79\times$ full compression. Users with different bandwidth profiles can override via configuration.

\subsection{Algorithms}
\label{app:algorithms}

\begin{algorithm}[t]
\caption{Sparse Delta Encoding (detailed)}
\label{alg:encode}
\begin{algorithmic}[1]
\Require Current weights $W_t$, previous weights $W_{t-1}$
\Ensure Compressed patch $P$, hash $h$
\State $\cI \gets \emptyset$; $\cV \gets \emptyset$; $\cS \gets \emptyset$
\For{each parameter tensor $p \in \text{params}(W_t)$}
    \State $\cM \gets \{i : W_t[p]_i \neq W_{t-1}[p]_i\}$ \Comment{Find changed positions (bitwise)}
    \If{$|\cM| > 0$}
        \State $\cI[p] \gets \cM$; $\cV[p] \gets W_t[p][\cM]$; $\cS[p] \gets \text{shape}(W_t[p])$
    \EndIf
\EndFor
\State $(\cI, \cV) \gets \textsc{DeltaEncode}(\cI, \cV)$ \Comment{Sort indices, store differences (\Cref{app:indices})}
\State $\cI \gets \textsc{Downcast}(\cI)$ \Comment{Narrow index types (\Cref{app:component_contribution})}
\State $P \gets \textsc{Compress}((\cI, \cV, \cS))$; $h \gets \textsc{SHA256}(W_t)$
\State \Return $P$, $h$
\end{algorithmic}
\end{algorithm}

\begin{algorithm}[t]
\caption{Sparse Delta Application (detailed)}
\label{alg:apply}
\begin{algorithmic}[1]
\Require Base weights $W_{\text{base}}$, compressed patch $P$
\Ensure Reconstructed weights $W_{\text{recon}}$
\State $(\cI, \cV, \cS) \gets \textsc{Decompress}(P)$
\State $\cI \gets \textsc{Upcast}(\cI)$ \Comment{Restore original index types}
\State $(\cI, \cV) \gets \textsc{DeltaDecode}(\cI, \cV)$ \Comment{Recover absolute indices}
\State $W_{\text{recon}} \gets \textsc{Copy}(W_{\text{base}})$
\For{each parameter $p$ with changes}
    \State $W_{\text{recon}}[p][\cI[p]] \gets \cV[p]$ \Comment{Direct value assignment (no FP arithmetic)}
\EndFor
\State \Return $W_{\text{recon}}$
\end{algorithmic}
\end{algorithm}

\subsection{Lossless Reconstruction Guarantee}
\label{app:lossless_proof}

PULSESync guarantees bit-exact reconstruction because it stores \emph{actual weight values} rather than arithmetic differences.

\begin{proposition}[Lossless Reconstruction]
\label{thm:lossless}
For any patch $P = (\cI, \cV)$ derived from consecutive checkpoints $W_{t-1}$ and $W_t$, applying $P$ to $W_{t-1}$ reconstructs $W_t$ exactly:
\begin{equation}
    \mathrm{Decode}(W_{t-1}, P) \equiv W_t \quad \text{(bitwise)}
\end{equation}
This property extends to chains of patches: applying $P_1, P_2, \ldots, P_n$ sequentially to anchor $W_0$ reconstructs $W_n$ exactly.
\end{proposition}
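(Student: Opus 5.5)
The plan is a coordinatewise argument followed by induction on the chain length. Write $\cI = \{i : W_t^{(i)} \neq W_{t-1}^{(i)}\}$ with inequality evaluated bitwise, and $\cV = W_t[\cI]$, as in \textsc{Encode} of \Cref{alg:sparse_patch}. First I would isolate the transport layer: the composition \textsc{DeltaEncode}, \textsc{Downcast}, \textsc{Compress} must be inverted exactly by \textsc{Decompress}, \textsc{Upcast}, \textsc{DeltaDecode}. For the codec this is lossless decompression, which the paper also checks empirically as \texttt{decompress(compress(x)) == x}. Delta encoding of sorted integer indices is inverted by prefix summation, which is exact integer arithmetic. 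Downcasting is exact provided every stored gap fits in the narrowed type (uint8 row and uint16 column deltas). The value array $\cV$ is only moved as raw bytes and is never rounded, so after decoding the receiver holds exactly the pair $(\cI,\cV)$ that the encoder produced.

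\textbf{One-step claim.} Decoding sets $W_t' \gets W_{t-1}$ and then $W_t'[\cI] \gets \cV$. I would check each coordinate $i$ in two cases.
\begin{itemize}
\item If $i \in \cI$, then $W_t'^{(i)} = \cV^{(i)} = W_t^{(i)}$. Assignment is a byte copy with no floating-point operation, so the bit pattern is preserved.
\item If $i \notin \cI$, then by the definition of $\cI$ we have $W_{t-1}^{(i)} = W_t^{(i)}$ bitwise, so the copied value already equals $W_t^{(i)}$.
\end{itemize}
Hence $W_t' \equiv W_t$ bitwise. This step has to use bitwise comparison rather than IEEE equality. Otherwise $+0$ versus $-0$, or NaN payloads, could be left unchanged and break bit-identity. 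Bitwise comparison ensures that every differing bit pattern lands in $\cI$.

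\textbf{Chains.} I would induct on $n$. The base case is $W_0$ itself. Suppose the receiver holds a tensor bitwise equal to $W_{k-1}$. Since $P_k$ was computed by the trainer from $(W_{k-1}, W_k)$, and decoding depends only on the bits of its input, the one-step claim gives $W_k$. No floating-point error accumulates because no arithmetic is ever performed on the values; this is the contrast with additive deltas, where rounding would drift over the chain.

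\textbf{Expected obstacle.} The main difficulty is not the logic but stating the implicit hypotheses precisely. Patches must be applied in order to the matching base, which is what the anchor and ready-marker handling in \Cref{app:distributed_sync} enforces. Tensor shapes must match. Index downcasting must be lossless: if a gap could overflow uint8 or uint16, I would either assume an escape mechanism or restrict the claim to the representable-gap case. I would state these as assumptions of the proposition and note that the SHA256 check in \Cref{alg:encode} detects any violation.
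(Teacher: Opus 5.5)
Your proof is correct and follows the paper's argument. The paper's proof is the same two-case coordinatewise check: indices in $\cI$ receive the exact bit pattern from $\cV$, and indices outside $\cI$ already agree with $W_t$. It leaves implicit what you make explicit, namely the induction for chains and the exact invertibility of the delta/downcast/codec layer. Your no-overflow hypothesis is a genuine one, given the paper's remark that gaps only \emph{rarely} exceed the uint8/uint16 range.

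One side remark is slightly off: under IEEE comparison only signed zeros would slip through, since \texttt{NaN} $\neq x$ evaluates true for every $x$, so NaN positions would always be selected.
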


The proof is immediate from the algorithm construction (\Cref{alg:sparse_patch}). Reconstruction performs direct memory assignment $W[\cI] \gets \cV$ with no floating-point arithmetic, as illustrated in \Cref{fig:patching}. For indices $i \in \cI$, we copy the exact bit pattern from $\cV$; for indices $i \notin \cI$, the value is unchanged and already correct. No rounding, truncation, or approximation occurs at any step.

\paragraph{Contrast with additive delta schemes.}
Traditional delta compression stores $\delta_t = W_t - W_{t-1}$ and reconstructs via $W_t = W_{t-1} + \delta_t$. This addition is a floating-point operation subject to rounding. Over long chains, small errors accumulate:
\begin{equation}
    W_{\text{recon}} = W_0 + \sum_{i=1}^n \delta_i \neq W_n \quad \text{(in general)}
\end{equation}
PULSESync avoids this entirely by storing values, not differences. Each patch application overwrites positions with their correct final values, independent of chain length.

\paragraph{Practical verification.}
We verify losslessness empirically via a collision-resistant checksum. Each patch includes a checksum of the expected reconstructed weights; inference nodes recompute and compare it after applying each patch. In all experiments, 100\% of reconstructions passed verification, confirming bit-identical weights across the network.

\begin{figure}[t]
    \centering
    \includegraphics[width=0.85\linewidth]{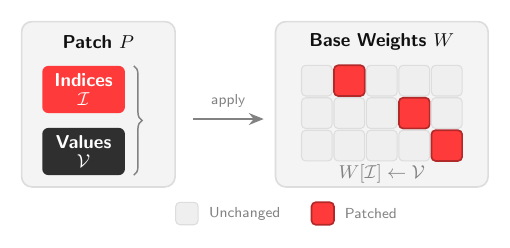}
    \caption{\textbf{Sparse value patching.} A patch $P = (\cI, \cV)$ consists of changed indices $\cI$ and their new values $\cV$. To reconstruct $W_t$ from $W_{t-1}$, we overwrite: $W_t[\cI] \gets \cV$. This direct assignment requires no floating-point arithmetic, guaranteeing bit-exact reconstruction.}
    \label{fig:patching}
\end{figure}

\section{Comparison with Related Methods}
\label{app:comparison}

PULSESync differs from standard gradient-compression methods in three ways. First, it operates on weight snapshots viewed at BF16 precision, not on gradients before trainer-side synchronization. Second, it is lossless for BF16 inference workers: the receiver reconstructs the exact BF16 tensor that the trainer would use for the next forward pass. Third, it stores new values rather than arithmetic differences, avoiding drift from repeated floating-point additions. PULSELoCo, by contrast, targets DiLoCo pseudo-gradient synchronization and uses explicit FP32 error feedback; its quality comparison to DiLoCo is reported in \Cref{sec:experiments}.

\section{Synchronization Protocol Details}
\label{app:distributed_sync}

This section provides implementation details for the synchronization protocols described in \Cref{sec:method}. We cover: (1) the PULSESync publication and recovery protocol, (2) how to choose the anchor interval, (3) integrity verification mechanisms, (4) failure recovery strategies, (5) end-to-end latency analysis, (6) storage format specification, (7) retention policies, and (8) PULSELoCo round atomicity.

\subsection{Distributed Synchronization Protocol}
\label{app:sync_protocol}

The PULSESync protocol operates asynchronously between training and inference nodes. Training nodes \emph{publish} checkpoints to shared storage, while inference nodes independently \emph{pull} updates. This decoupled design allows training and inference to scale independently.

\Cref{alg:sync_protocol} formalizes the protocol. The key parameters are: $W_t$ (weights at step $t$), $k$ (anchor interval), and $h_t$ (integrity hash). The protocol distinguishes between a \emph{fast path} (single delta application) and a \emph{slow path} (anchor download plus delta chain). Delta checkpoints and full anchors have separate ready markers: a delta-ready marker advances the steady-state stream, while an anchor-ready marker advertises a full checkpoint for slow-path recovery.

\begin{algorithm}[t]
\caption{Distributed Synchronization Protocol}
\label{alg:sync_protocol}
\begin{algorithmic}[1]
\Statex \textbf{Training Node (Publisher):}
\Procedure{PublishCheckpoint}{$W_t, W_{t-1}, t, k$}
    \State $h_t \gets \textsc{SHA256}(W_t)$ \Comment{Compute integrity hash}
    \If{$t \mod k = 0$} \Comment{Anchor window}
        \State \textsc{StartUploadFull}($W_t$, $t$, $h_t$) \Comment{Background full checkpoint}
    \EndIf
    \State $P \gets \textsc{Encode}(W_t, W_{t-1})$ \Comment{Sparse patch}
    \State \textsc{UploadDelta}($P$, $t$, $t-1$, $h_t$)
    \State \textsc{SetDeltaReadyMarker}($t$) \Comment{Signal fast-path availability}
    \If{$t \mod k = 0$}
        \State \textsc{SetAnchorReadyWhenComplete}($t$) \Comment{Signal slow-path anchor}
    \EndIf
\EndProcedure
\Statex
\Statex \textbf{Inference Node (Consumer):}
\Procedure{Synchronize}{$W_{\text{local}}, t_{\text{local}}$}
    \State $t_{\text{latest}} \gets \textsc{GetLatestDeltaReady}()$
    \If{$t_{\text{latest}} = t_{\text{local}}$}
        \State \Return $W_{\text{local}}$ \Comment{Already synchronized}
    \EndIf
    \If{$t_{\text{latest}} = t_{\text{local}} + 1$} \Comment{Fast path}
        \State $P, h \gets \textsc{DownloadDelta}(t_{\text{latest}})$
        \State $W_{\text{new}} \gets \textsc{Decode}(W_{\text{local}}, P)$
        \State \textbf{assert} $\textsc{SHA256}(W_{\text{new}}) = h$ \Comment{Verify integrity}
    \Else \Comment{Slow path: cold start or missed steps}
        \State $t_{\text{anchor}} \gets \textsc{GetLatestAnchorReady}(t_{\text{latest}})$
        \State $W_{\text{new}} \gets \textsc{DownloadFull}(t_{\text{anchor}})$
        \For{$t' \gets t_{\text{anchor}} + 1$ \textbf{to} $t_{\text{latest}}$}
            \State $P, h \gets \textsc{DownloadDelta}(t')$
            \State $W_{\text{new}} \gets \textsc{Decode}(W_{\text{new}}, P)$
            \State \textbf{assert} $\textsc{SHA256}(W_{\text{new}}) = h$ \Comment{Verify integrity}
        \EndFor
    \EndIf
    \State \Return $W_{\text{new}}$
\EndProcedure
\end{algorithmic}
\end{algorithm}

\begin{figure}[t]
    \centering
    \includegraphics[width=0.75\linewidth]{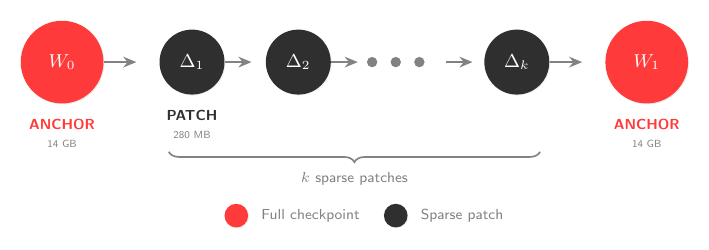}
    \caption{\textbf{Checkpoint chain structure.} Full checkpoints (anchors) are published every $k$ steps; between anchors, only sparse patches are transmitted. This structure enables the fast path (single patch application) for steady-state nodes while providing recovery points for late joiners via the slow path (anchor download plus patch chain). See \Cref{alg:sync_protocol} for the formal protocol.}
    \label{fig:checkpoint_chain}
\end{figure}

\paragraph{Ready markers.} The protocol uses explicit ready markers to ensure atomicity. A delta checkpoint is available only after its sparse patch and manifest have been uploaded. A full anchor is available only after its full checkpoint and manifest have been uploaded. This prevents inference nodes from reading partially uploaded objects.

\paragraph{Concurrent uploads.} At anchor windows, both FULL and DELTA objects are produced. The DELTA upload stays on the steady-state critical path, while the FULL upload runs asynchronously in the background and receives an anchor-ready marker only after completion. Until that marker appears, slow-path receivers use the previous ready anchor; steady-state receivers continue along the delta stream.

\subsection{PULSELoCo Round Atomicity}
\label{app:pulseloco_round_atomicity}

PULSELoCo does not use PULSESync's anchor-and-replay protocol because it is a trainer-to-trainer collective, not a checkpoint-distribution path. Each outer round is keyed by the shared base checkpoint $\theta_t$: workers send sparse pseudo-gradients for that base, the relay returns one sparse aggregate, and trainers apply the same aggregate before starting the next local-update window. This keeps the outer optimizer state aligned with DiLoCo.

This convention also defines rollout synchronization in the local-update experiments. Since trainers hold different private weights within an outer round, rollout workers serve the last shared global checkpoint and are refreshed only after the next global checkpoint is formed. This ties rollout generation to the same checkpoints across DiLoCo and PULSELoCo; the resulting $H$-dependent staleness tradeoff is described in \Cref{app:hyperparameters}.

\subsection{Anchor Interval Selection}
\label{app:anchor_interval}

The anchor interval $k$ determines how often full checkpoints are published. The choice involves three trade-offs:

\begin{itemize}[leftmargin=*, itemsep=2pt]
    \item \textbf{Cold-start latency}: New nodes must download one anchor plus up to $k-1$ deltas. For a 7B model, this is $14\,\text{GB} + (k-1) \times 108\,\text{MB}$.
    \item \textbf{Storage}: Over $n$ steps, storage is approximately $\lceil n/k \rceil \times 14\,\text{GB} + n \times 108\,\text{MB}$.
    \item \textbf{Trainer upload bandwidth}: Full checkpoints are ${\sim}130\times$ larger than deltas, so lower $k$ places significant upload burden on the trainer.
\end{itemize}

\paragraph{Practical guidance.} In bandwidth-constrained PULSESync deployments, higher $k$ is generally preferable. Steady-state inference nodes use the fast path (single delta) regardless of $k$, so the anchor interval only affects cold starts and trainer uploads. We use $k = 50$ in our experiments, balancing reasonable cold-start times (${\sim}5$ minutes at 400\,Mbit/s) with minimal trainer overhead.

\subsection{Integrity Verification}
\label{app:integrity}

Checkpoints may be corrupted during transmission or by malicious storage providers. PULSESync employs multi-level integrity verification.

\paragraph{File-level integrity.} Each checkpoint includes a signed manifest containing SHA256 hashes for all files. The manifest is signed with the trainer's cryptographic key, preventing tampering by storage providers.

\paragraph{Weight-level integrity.} Each delta includes a SHA256 hash of the \emph{resulting} weights after application:
\begin{equation}
    h_t = \textsc{SHA256}\left(\textsc{Concat}\left(\{W_t[p] : p \in \text{params}\}\right)\right)
\end{equation}
This enables end-to-end verification: after applying a delta chain, the consumer verifies that the reconstructed weights match the expected hash. Hash mismatches trigger automatic fallback to the slow path (re-download from anchor).

\paragraph{Deterministic hashing.} To ensure hash reproducibility across hardware, we use a deterministic serialization order and canonical byte representations. The hash is computed over raw BF16 bit patterns, ensuring bitwise consistency.

\subsection{Failure Recovery}
\label{app:failure_recovery}

\paragraph{Delta upload failure.} If a delta upload fails, the system falls back to uploading a full checkpoint. This ensures the chain remains valid even under network instability.

\paragraph{Hash verification failure.} If an inference node detects a hash mismatch, it discards the corrupted state and re-synchronizes from the nearest anchor. This self-healing behavior ensures eventual consistency.

\paragraph{Network partitions.} Inference nodes operate independently and can tolerate arbitrary network partitions. Upon reconnection, they synchronize to the latest checkpoint using the slow path if necessary.

\subsection{End-to-End Latency Analysis}
\label{app:latency}

We measure end-to-end synchronization latency on commodity hardware with 400\,Mb/s network bandwidth. \Cref{tab:latency_breakdown} breaks down the latency for three scenarios.

\begin{table}[t]
    \centering
    \caption{End-to-end latency breakdown for 7B model synchronization ($400\,Mb/s$ network). The slow path assumes recovery requiring 9 delta applications.}
    \label{tab:latency_breakdown}
    \vspace{0.5em}
    \small
    \begin{tabular}{@{}lrrr@{}}
        \toprule
        \textbf{Operation} & \textbf{Fast Path} & \textbf{Slow Path} & \textbf{Cold Start} \\
        \midrule
        \textit{Download} & & & \\
        \quad Full checkpoint ($14\,GB$) & -- & $280\,s$ & $280\,s$ \\
        \quad Delta(s) (${\sim}108\,MB$ each) & $2.2\,s$ & $19.8\,s$ & -- \\
        \midrule
        \textit{Processing} & & & \\
        \quad Decompression (zstd) & $0.6\,s$ & $5.4\,s$ & -- \\
        \quad Delta application & $0.3\,s$ & $2.7\,s$ & -- \\
        \quad Hash verification & $0.8\,s$ & $7.2\,s$ & $0.8\,s$ \\
        \midrule
        \textbf{Total} & \textbf{3.9\,s} & \textbf{315.1\,s} & \textbf{280.8\,s} \\
        \bottomrule
    \end{tabular}
\end{table}

\paragraph{Fast path dominance.} In steady-state operation, inference nodes use the fast path exclusively, achieving synchronization in ${\sim}4\,s$. This represents over $100\times$ speedup compared to downloading the full $14\,GB$ checkpoint.

\paragraph{Parallelization.} Delta downloads and applications can be pipelined: while applying delta $i$, download delta $i+1$ in parallel. This reduces slow path latency by ${\sim}30\%$ in our implementation.

\subsection{Retention Policy}
\label{app:retention}

Without cleanup, storage grows linearly. PULSESync implements an automatic retention policy.

\paragraph{Delta retention.} Keep the most recent 100 delta checkpoints. Older deltas are deleted, but their anchors are preserved if any retained delta references them.

\paragraph{Anchor retention.} Keep the most recent 10 full checkpoints, plus any anchors referenced by retained deltas.

\paragraph{Storage bounds.} With default settings, maximum storage for a 7B model is:
\begin{equation}
    S_{\max} = 10 \cdot 14\,\text{GB} + 100 \cdot 108\,\text{MB} \approx 151\,\text{GB}
\end{equation}

\end{document}